
\documentclass[final]{siamltex}

\usepackage{graphicx} 
\usepackage{subfigure}
\usepackage{algorithm}
\usepackage{algorithmic}
\usepackage{amsmath}
\usepackage{amsfonts}
\usepackage{latexsym}
\usepackage{graphicx}
\usepackage{epsfig,color,float}
\usepackage{multirow}
\usepackage{bm}

\usepackage{url}

\newtheorem{property}[theorem]{Property}
\newtheorem{remark}[theorem]{Remark}
\newcommand{\bM}{{\bar M}}

\title{Orthogonal Rank-One Matrix Pursuit \\ for Low Rank Matrix Completion}


\author{
Zheng Wang\thanks{School of Computing, Informatics, and Decision Systems Engineering, Arizona State University, Tempe, Arizona 85287, USA ({zhengwang@asu.edu, jieping.ye@asu.edu}).} 
\and
Ming-Jun Lai\thanks{Department of Mathematics, The University of Georgia, Athens, GA 30602, USA (mjlai@math.uga.edu).}
\and
Zhaosong Lu\thanks{Department of Mathematics, Simon Frasor University, Burnaby, BC, V5A 156, Canada (zhaosong@sfu.ca).}
\and
Wei Fan\thanks{Huawei Noah's Ark Lab Hong Kong (wei.fan@gmail.com).}
\and
Hasan Davulcu\footnotemark[1]
\and 
Jieping Ye\footnotemark[1]
}


\begin{document}
\maketitle

\begin{abstract}
In this paper, we propose an efficient and scalable low rank matrix completion algorithm. The key idea is to extend orthogonal matching pursuit method from the vector case to the matrix case. We further propose an economic version of our algorithm by introducing a novel weight updating rule to reduce the time and storage complexity. Both versions are computationally inexpensive for each matrix pursuit iteration, and find satisfactory results in a few iterations. Another advantage of our proposed algorithm is that it has only one tunable parameter, which is the rank. It is easy to understand and to use by the user. This becomes especially important in large-scale learning problems. In addition, we rigorously show that both versions achieve a linear convergence rate, which is significantly better than the previous known results. We also empirically compare the proposed algorithms with several state-of-the-art matrix completion algorithms on many real-world datasets, including the large-scale recommendation dataset Netflix as well as the MovieLens datasets. Numerical results show that our proposed algorithm is more efficient than competing algorithms while achieving similar or better prediction performance.
\end{abstract}

\begin{keywords}
Low rank, singular value decomposition, rank minimization, matrix completion, matching pursuit
\end{keywords}


\pagestyle{myheadings}
\thispagestyle{plain}
\markboth{Orthogonal Rank-One Matrix Pursuit for Low Rank Matrix Completion}{}

\section{Introduction}

Recently, low rank matrix learning has attracted significant attentions in machine learning and data mining due to its wide range of applications, such as collaborative filtering, dimensionality reduction, compressed sensing, multi-class learning and multi-task learning. See \cite{Argyriou08,Bach08,Balzano10,Candes09,Dudik12,Koren09,Negahban10,Srebro05,Recht13} and the references therein. In this paper, we consider the general form of low rank matrix completion: given a partially observed real-valued matrix ${\bf Y} \in \Re^{n \times m}$, the low rank matrix completion problem is to find a matrix ${\bf X} \in \Re^{n \times m}$ with minimum rank that best approximates the matrix ${\bf Y}$ on the observed elements. The mathematical formulation is given by
\begin{equation}
  \label{RankMin}
  \begin{array}{cl}
  \min\limits_{{\bf X} \in \Re^{n \times m} }  & rank({\bf X})   \\
   s.t.     & P_{\Omega} ({\bf X}) = P_{\Omega} ({\bf Y}),
  \end{array}
\end{equation}
where $\Omega$ is the set of all index pairs $(i,j)$ of observed entries, and $P_{\Omega}$ is the orthogonal projector onto the span of matrices vanishing outside of $\Omega$.

\subsection{Related Works}
As it is intractable to minimize the matrix rank exactly in the general case, many approximate solutions have been proposed to attack the problem (\ref{RankMin}) (cf., e.g. \cite{Candes09,Lai12,Lu11}). A widely used convex relaxation of matrix rank is the trace norm or nuclear norm \cite{Candes09}. The matrix trace norm is defined by the Schatten $p$-norm with $p=1$. For matrix ${\bf X}$ with rank $r$, its Schatten $p$-norm is defined by $(\sum^{r}_{i=1} \sigma^p_i)^{1/p}$, where $\{\sigma_i \}$ are the singular values of ${\bf X}$ and without loss of generality we assume they are sorted in descending order. Thus, the trace norm of ${\bf X}$ is the $\ell_1$ norm of the matrix spectrum as $||{\bf X}||_{*} = \sum^{r}_{i=1} | \sigma_i|$. Then the convex relaxation for problem (\ref{RankMin}) is given by
\begin{equation}
  \label{TraceNormMin}
  \begin{array}{cl}
  \min\limits_{{\bf X} \in \mathcal{R}^{n \times m} }  & ||{\bf X}||_*  \\
   s.t.     & P_{\Omega} ({\bf X}) = P_{\Omega} ({\bf Y}).
  \end{array}
\end{equation}
Cai et al. \cite{Cai10} propose an algorithm based on soft singular value thresholding (SVT). Keshavan et al. \cite{Keshavan09} and Meka et al. \cite{Meka10} develop more efficient algorithms by using the top-$k$ singular pairs.

Many other algorithms have been developed to solve the trace norm penalized problem:
\begin{equation}
  \label{TraceNormPen}
  \min_{{\bf X} \in \mathcal{R}^{n \times m}} || P_{\Omega} ({\bf X}) - P_{\Omega} ({\bf Y})||^2_F + \lambda ||{\bf X}||_{*}.
\end{equation}
Ji et al. \cite{Ji09}, Liu et al. \cite{Liu12} and Toh et al. \cite{Toh09} independently propose to employ the proximal gradient algorithm to improve the algorithm of \cite{Cai10} by significantly reducing the number of iterations. They obtain an $\epsilon$-accurate solution in $O(1/\sqrt{\epsilon})$ steps. More efficient soft singular vector thresholding algorithms are proposed in \cite{Ma11,Mazumder10} by investigating the factorization property of the estimated matrix. Each step of the algorithms requires the computation of a partial SVD for a dense matrix. In addition, several methods approximate the trace norm using its variational characterizations \cite{Mishra11,Srebro05,Yuan10,Recht13}, and proceed by alternating optimization. However these methods lack global convergence guarantees.

Solving these low rank or trace norm problems is computationally expensive for large matrices, as it involves computing singular value decomposition (SVD). Most of the methods above involve the computation of SVD or truncated SVD iteratively, which is not scalable to large-scale problems. How to solve these problems efficiently and accurately for large-scale problems attracts much attention in recent years. 

Recently, the coordinate gradient descent method has been demonstrated to be efficient in solving sparse learning problems in the vector case \cite {Friedman10,Shalev-Shwartz09,Wu08,Yun11}. The key idea is to solve a very simple one-dimensional problem (for one coordinate) in each iteration. One natural question is whether and how such method can be applied to solve the matrix completion problem. Some progress has been made recently along this 
direction. Dud{\'i}k et al. \cite{Dudik12} propose a coordinate gradient descent solution for the trace norm penalized problem. They recast the non-smooth objective in problem \eqref{TraceNormPen} as a smooth one in an infinite dimensional rank-one matrix space, then apply the coordinate gradient algorithm on the collection of rank-one matrices. Zhang et al. \cite{Zhang12} further improve the efficiency using the boosting method, and the improved algorithm guarantees an $\epsilon$-accuracy within $O(1/{\epsilon})$ iterations. Although these algorithms need slightly more iterations than the proximal methods, they are more scalable as they only need to compute the top singular vector pair in each iteration. Note that the top singular vector pair can be computed efficiently by the power method or Lanczos iterations \cite{Golub96}. Jaggi et al. \cite{Jaggi10} propose an algorithm which achieves the same iteration complexity as the algorithm in \cite{Zhang12} by directly applying the Hazan's algorithm \cite{Hazan08}. Tewari et al. \cite{Tewari11} solve a more general problem based on a greedy algorithm. Shalev-Shwartz et al. \cite{Shwartz11} further reduce the number of iterations based on a heuristic without theoretical guarantees.

Most methods based on the top singular vector pair include two main steps in each iteration. The first step involves computing the top singular vector pair, and the second step refines the weights of the rank-one matrices formed by all top singular vector pairs obtained up to the current iteration. The main differences among these algorithms lie in how they refine the weights. The Jaggi's algorithm (JS) \cite{Jaggi10} directly applies the Hazan's algorithm \cite{Hazan08}, which relied on the Frank-Wolfe algorithm \cite{Frank56}. It updates the weights with a small step size and does not consider further refinement. It does not use all information in each step, which leads to a slow convergence rate. Similar to JS, Tewari et al. \cite{Tewari11} use a small update step size for a general structure constrained problem. The greedy efficient component optimization (GECO) \cite{Shwartz11} optimizes the weights by solving another time consuming optimization problem. It involves a smaller number of iterations than the JS algorithm. However, the sophisticated weight refinement leads to a higher total computational cost. The lifted coordinate gradient descent algorithm (Lifted) \cite{Dudik12} updates the rank-one matrix basis with a constant weight in each iteration, and conducts a LASSO type algorithm \cite{Tibshirani94} to fully correct the weights. The weights for the basis update are difficult to tune: a large value leads to divergence; a small value makes the algorithm slow \cite{Zhang12}. The matrix norm boosting approach (Boost) \cite{Zhang12} learns the update weights and designs a local refinement step by a non-convex optimization problem which is solved by alternating optimization. It has a sub-linear convergence rate.

Let us summarize their common drawbacks as follows:  
\begin{itemize}
\item The weight refinement steps are inefficient, resulting in a slow convergence rate. The current best convergence rate is $O(1/{\epsilon})$. Some refinement steps themselves contain computationally expensive iterations \cite{Dudik12,Zhang12}, which do not scale to large-scale data.
\item They have heuristic-based tunable parameters which are not easy to use. However, these parameters severely affect their convergence speed and the approximation result. In some algorithms, an improper parameter even makes the algorithm diverge \cite{Cai10,Dudik12}.
\end{itemize}

In this paper, we present a simple and efficient algorithm to solve the low rank matrix completion problem. The key idea is to extend the orthogonal matching pursuit (OMP) procedure \cite{Pati93} from the vector case to the matrix case. In each iteration, a rank-one basis matrix is generated by the left and right top singular vectors of the current approximation residual. In the standard version of the proposed algorithm, we fully update the weights for all rank-one matrices in the current basis set at the end of each iteration by performing an orthogonal projection of the observation matrix onto their spanning subspace. The most time-consuming step of the proposed algorithm is to calculate the top singular vector pair of a sparse matrix, which costs $O(|\Omega|)$ operations in each iteration. An appealing feature of the proposed algorithm is that it has a linear convergence rate. This is quite different from traditional orthogonal matching pursuit or weak orthogonal greedy algorithms, whose convergence rate for sparse vector recovery is sub-linear as shown in \cite{LT12}. See also \cite{DT96}, \cite{T08}, \cite{Tropp04} for an extensive study on various greedy algorithms. With this rate of convergence, we only need $O(\log(1/\epsilon))$ iterations for achieving an $\epsilon$-accuracy solution. 

One drawback of the standard algorithm is that it needs to store all rank-one matrices in the current basis set for full weight updating, which contains $r|\Omega|$ elements in the $r$-th iteration. This makes the storage complexity of the algorithm dependent on the number of iterations, which restricts the approximation rank especially for large-scale matrices. To tackle this problem, we propose an economic weight updating rule for this algorithm. In this economic version of the proposed algorithm, we only track two matrices in each iteration. One is the current estimated matrix and the other one is the pursued rank-one matrix. When restricted to the observations in $\Omega$, each has $|\Omega|$ nonzero elements. Thus the storage requirement, i.e., $2|\Omega|$, keeps the same in different iterations, which is the same as the greedy algorithms \cite{Jaggi10,Tewari11}. Interestingly, we show that using this economic updating rule we still retain the linear convergence rate. To the best of our knowledge, our proposed algorithms are the fastest among all related methods in the literature. We verify the efficiency of our algorithms empirically on large-scale matrix completion problems, such as MovieLens \cite{Miller03} and Netflix \cite{Bell07,Bennett07}, see \S 7.

The main contributions of our paper are: 
\begin{itemize}
\item We propose a computationally efficient and scalable algorithm for matrix completion, which extends the orthogonal matching pursuit from the 
vector case to the matrix case.
\item We theoretically prove the linear convergence rate of our algorithm. As a result, we only need $O(\log(1/{\epsilon}))$ steps to obtain an 
$\epsilon$-accuracy solution, and in each step we only need to compute the top singular vector pair, which can be computed efficiently.
\item We further reduce the storage complexity of our algorithm based on an economic weight updating rule while retaining the linear convergence rate. This version of our algorithm has a constant storage complexity which is independent of the approximation rank and is more practical for large-scale matrices.
\item Both versions of our algorithm have only one free parameter, i.e., the rank of the estimated matrix. The proposed algorithm is guaranteed to converge, i.e., no 
risk of divergence.
\end{itemize}

\subsection{Notations and Organization} 
Let ${\bf Y} = ( {\bf y}_1, \cdots, {\bf y}_m ) \in \Re^{n \times m}$ be an $n \times m$ real matrix, and $\Omega \subset \{1, \cdots, n \} \times \{1, \cdots, m \}$ denote the indices of the observed entries of ${\bf Y}$. $P_{\Omega}$ is the projection operator onto the space spanned by the matrices vanishing outside of $\Omega$ so that the $(i,j)$-th component of $P_{\Omega}({\bf Y})$ equals to ${\bf Y}_{i,j}$ for $(i,j) \in \Omega$ and zero otherwise. The Frobenius norm of ${\bf Y}$ is defined as $||{\bf Y}||_F = \sqrt{ \sum_{i,j} {\bf Y}^2_{i,j} }$. Let $vec({\bf Y}) = ( {\bf y}^T_1, \cdots,  {\bf y}^T_m )^T$ denote a vector reshaped from matrix ${\bf Y}$ by concatenating all its column vectors. Let $\dot{\bf y}$ be the vector by concatenating all observed entries in ${\bf Y}$, which is composed by keeping the observed elements in the vector $vec(P_{\Omega}({\bf Y}))$. The Frobenius inner product of two matrices ${\bf X}$ and ${\bf Y}$ is defined as $\langle {\bf X}, {\bf Y} \rangle = trace( {\bf X}^T {\bf Y})$, which also equals to the component-wise inner product of the corresponding vectors as $\langle vec({\bf X}), vec({\bf Y}) \rangle $. Given a matrix ${\bf A} \in \Re^{n \times m}$, we denote $P_\Omega(\bf A)$ by ${\bf A}_\Omega$. For any two matrices
${\bf A}, {\bf B} \in \Re^{n \times m}$, we define
\[
\langle {\bf A}, {\bf B} \rangle_\Omega = \langle {\bf A}_\Omega, {\bf B}_\Omega \rangle,
\]
$\|{\bf A}\|_\Omega = \sqrt{\langle {\bf A}, {\bf A}\rangle_\Omega}$. Without further declaration, the matrix norm refers to the Frobenius norm, which could also be written as $\|{\bf A}\| = \sqrt{\langle {\bf A}, {\bf A}\rangle}$.

The rest of the paper is organized as follows: we present our algorithm in Section 2; Section 3 analyzes the convergence rate of the standard version of our algorithm; we further propose an economic version of our algorithm and prove its linear convergence rate in Section 4; Section 5 extends the proposed algorithm to a more general matrix sensing case, and presents its guarantee of finding the optimal solution under rank-restricted-isometry-property condition; in Section 6 we analyze the stability of both versions of our algorithms; empirical evaluations are presented in Section 7 to verify the efficiency and effectiveness of our algorithms. We finally conclude our paper in Section 8.

\section{Orthogonal Rank-One Matrix Pursuit}
It is well-known that any matrix ${\bf X} \in \Re^{n \times m}$ can be written as a linear combination of rank-one matrices, that is,
\begin{equation}
  {\bf X} = {\bf M}({\bm \theta}) = \sum_{i \in \mathcal{I}} {\theta}_i {\bf M}_i,
\end{equation}
where $\{{\bf M}_i: i \in I\}$ is the set of all $n \times m$ rank-one matrices with unit Frobenius norm. 
Clearly, there is an infinitely many choice of ${\bf M}_i$'s. Such a representation can be obtained via the standard SVD decomposition of ${\bf X}$.

The original low rank matrix approximation problem aims to minimize the zero-norm of $\theta$ subject to the constraint:
\begin{equation}
  \label{R1OMP}
  \begin{array}{ll}
  \min\limits_{\bm \theta}  & ||{\bm \theta}||_0   \\ [6pt]
   s.t.     &  P_{\Omega} ( {\bf M}({\bm \theta}) ) = P_{\Omega} ({\bf Y}),
  \end{array}
\end{equation}
where $||{\bm \theta}||_0$ denotes the number of nonzero elements of vector ${\bm \theta}$.

If we reformulate the problem as
\begin{equation}
  \label{R1OMP2}
  \begin{array}{ll}
  \min\limits_{\bm \theta}  &  || P_{\Omega} ( {\bf M}({\bm \theta}) ) - P_{\Omega} ({\bf Y}) ||^2_F \\ [6pt]
   s.t.     & ||{\bm \theta}||_0 \leq r,
  \end{array}
\end{equation}
we could solve it by an orthogonal matching pursuit type algorithm using rank-one matrices as the basis. In particular, we are to find a suitable subset with over-complete rank-one matrix coordinates, and learn the weight for each coordinate. This is achieved by executing two steps alternatively: one is to pursue the basis, and the other one is to learn the weights of the basis.

Suppose that after the $(k$-$1)$-th iteration, the rank-one basis matrices ${\bf M}_1, \ldots, {\bf M}_{k-1}$ and their current weight ${\bm \theta}^{k-1}$ are already computed. In the $k$-th iteration, we are to pursue a new rank-one basis matrix ${\bf M}_k$ with unit Frobenius norm, which is mostly correlated with the current observed regression residual ${\bf R_k} =  P_\Omega({\bf Y}) - {\bf X}_{k-1}$, where 
\[
{\bf X}_{k-1} \ = \ ({\bf M}({\bm \theta}^{k-1}))_\Omega \ = \ \sum^{k-1}_{i=1} { \theta}^{k-1}_i ({\bf M}_i)_\Omega.
\]
Therefore, ${\bf M}_k$ can be chosen to be an optimal solution of the following problem:
\begin{equation} \label{coordinate}
\max_{\bf M}\{\langle {\bf M},  {\bf R}_k \rangle: \ {\rank({\bf M})=1, \ \|{\bf M}\|_F=1}\}.
\end{equation}
Notice that each rank-one matrix ${\bf M}$ with unit Frobenius norm can be written as the product of two unit vectors, namely, ${\bf M} = {\bf u} {\bf v}^T $ for some ${\bf u}\in \Re^{n}$ and ${\bf v} \in \Re^{m}$ with $\|\bf u\|=\|\bf v\|=1$. We then see that problem \eqref{coordinate} can be equivalently reformulated as
\begin{equation} \label{uv}
\max\limits_{{\bf u},{\bf v}} \{{\bf u}^T {\bf R}_k {\bf v}: \ \|\bf u\|=\|\bf v\|=1\}.
\end{equation}
Clearly, the optimal solution $({\bf u}_{*}, {\bf v}_{*})$ of problem \eqref{uv} is a pair of top left and right singular vectors of ${\bf R}_k$. It can be efficiently computed by the power method \cite{Jaggi10,Dudik12}. The new rank-one basis matrix ${\bf M}_{k}$ is then readily available by setting ${\bf M}_{k} = {\bf u}_{*} {\bf v}_{*}^T$.

After finding the new rank-one basis matrix ${\bf M}_k$, we update the weights ${\bm \theta}^k$ for all currently available basis matrices $\{ {\bf M}_1, \cdots, {\bf M}_k \}$ by solving the following least squares regression problem:
\begin{equation}\label{weight}
\min_{ {\bm \theta} \in \Re^k} || \sum^k_{i=1} \theta_i {\bf M}_i  - {\bf Y} ||^2_{\Omega}.
\end{equation}
By reshaping the matrices $({\bf Y})_\Omega$ and $({\bf M}_i)_\Omega$ into vectors $\dot{\bf y}$ and $\dot{\bf m}_i$, we can easily see that the optimal solution ${\bm \theta}^k$ of \eqref{weight} is given by
\begin{equation} \label{lsr}
  {\bm \theta}^k = ( {\bf {\bM}_k}^T {\bf {\bM}_k} )^{-1} {\bf {\bM}_k}^{T} {\dot{\bf y}},
\end{equation}
where ${\bf {\bM}_k} = \left[ \dot{\bf m}_1, \cdots, \dot{\bf m}_k \right]$ is the matrix formed by all reshaped basis vectors. The row size of matrix ${\bf \bM}_k$ is the total number of observed entries. It is computationally expensive to directly calculate the matrix multiplication. We simplify this step by an incremental process, and give the implementation details in Appendix.

We run the above two steps iteratively until some desired stopping condition is satisfied. We can terminate the method based on the rank of the estimated matrix or the approximation residual. In particular, one can choose a preferred rank of the approximate solution matrix. Alternatively, one can stop the method once the residual $\|{\bf R}_k\|$ is less than a tolerance parameter $\varepsilon$. The main steps of Orthogonal Rank-One Matrix Pursuit (OR1MP) are given in Algorithm~\ref{OMP_MC}.

\begin{algorithm}[bht]
     \caption{Orthogonal Rank-One Matrix Pursuit (OR1MP)} 
    \label{OMP_MC}
\begin{algorithmic}
   \STATE {\bfseries Input:} {${\bf Y}_\Omega$ and stopping criterion.}
   \STATE {\bfseries Initialize:} {Set ${\bf X}_0 = 0$, ${\bm \theta}^0 = 0$ and $k=1$.}
   \REPEAT
   {
       \STATE {\bfseries Step 1}: Find a pair of top left and right singular vectors $({\bf u}_k, {\bf v}_k)$
of the observed residual matrix ${\bf R}_k = {\bf Y}_\Omega - {\bf X}_{k-1}$ and set ${\bf M}_k={\bf u}_k ({\bf
v}_k)^T$.\\
       \STATE {\bfseries Step 2}: Compute the weight ${\bm \theta}^{k}$ using the closed form least squares
solution ${\bm \theta}^{k} = ( {\bf {\bM}_k}^T {\bf {\bM}_k} )^{-1} {\bf {\bM}_k}^{T} {\dot{\bf y}}$.\\
       \STATE {\bfseries Step 3}: Set ${\bf X}_k =  \sum^{k}_{i=1} { \theta}^{k}_i ({\bf M}_i)_\Omega$ and
$k \leftarrow k+1$. \\
   }
   \UNTIL{ stopping criterion is satisfied }
   \STATE {\bfseries Output:}{ Constructed matrix $\hat {\bf Y} = \sum^k_{i=1} {\theta}^k_i {\bf M}_i$.}
\end{algorithmic}
\end{algorithm}

\begin{remark}
In our algorithm, we adapt orthogonal matching pursuit on the observed part of the matrix. This is similar to
the GECO algorithm. However, GECO constructed the estimated matrix by projecting the observation matrix onto a
much larger subspace, which is a product of two subspaces spanned by all left singular vectors and all right
singular vectors obtained up to the current iteration. So it has much higher computational complexity. 
Lee et al. \cite{Lee10} recently proposed the ADMiRA algorithm, which is also a greedy approach. In each step it first chose $2r$ components by top-$2r$ truncated SVD and then uses another top-$r$ truncated SVD to obtain a rank-$r$ matrix. Thus, the ADMiRA algorithm is computationally more expensive than the proposed algorithm. The difference between the proposed algorithm and ADMiRA is somewhat similar to the difference between the OMP \cite{Pati93} for learning sparse vectors and CoSaMP \cite{Needell10}. In addition, the performance guarantees (including recovery guarantee and convergence property) of ADMiRA rely on strong assumptions, i.e., the matrix involved in the loss function satisfies a rank-restricted isometry property \cite{Lee10}.
\end{remark}

\section{Convergence Analysis of Algorithm~\ref{OMP_MC}}
In this section, we will show that Algorithm~\ref{OMP_MC} is convergent and achieves a linear convergence rate. This result is given in the following theorem.

\begin{theorem} \label{convergence_rate} 
The orthogonal rank-one matrix pursuit algorithm satisfies
\[
||{\bf R}_{k}|| \ \le \ \left(\sqrt{1-\frac{1}{\min(m,n)}}\right)^{k-1} \|Y\|_\Omega, \ \ \ \forall k \ge 1.
\]
\end{theorem}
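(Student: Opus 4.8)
The plan is to show that a single iteration contracts the squared observed residual by the constant factor $1 - 1/\min(m,n)$, and then to iterate this bound from the initialization. Linear (geometric) convergence follows at once, the rate constant coming from a rank-based lower bound on the leading singular value of the residual.

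First I would record the two structural facts produced by Steps 1 and 2. By the least-squares update \eqref{weight}, ${\bf X}_k$ is the orthogonal projection of ${\bf Y}_\Omega$ onto $V_k := \mathrm{span}\{({\bf M}_1)_\Omega,\dots,({\bf M}_k)_\Omega\}$ with respect to the $\langle\cdot,\cdot\rangle_\Omega$ inner product, so ${\bf R}_{k+1}={\bf Y}_\Omega-{\bf X}_k$ has minimal norm among all residuals against elements of $V_k$. Since ${\bf X}_{k-1}+c({\bf M}_k)_\Omega\in V_k$ for every scalar $c$, optimality gives
\[
\|{\bf R}_{k+1}\|^2 \ \le\ \min_{c}\ \|{\bf R}_k - c({\bf M}_k)_\Omega\|^2 \ =\ \|{\bf R}_k\|^2 - \frac{\langle {\bf R}_k,({\bf M}_k)_\Omega\rangle^2}{\|({\bf M}_k)_\Omega\|^2},
\]
reducing the whole step to a one-dimensional projection onto the newest basis direction.

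Next I would evaluate the numerator and bound the denominator. Because ${\bf R}_k$ vanishes off $\Omega$ and $P_\Omega$ is self-adjoint, $\langle {\bf R}_k,({\bf M}_k)_\Omega\rangle=\langle {\bf R}_k,{\bf M}_k\rangle={\bf u}_k^T{\bf R}_k{\bf v}_k=\sigma_1({\bf R}_k)$, the largest singular value, by the choice of the top singular pair in Step 1. Also $\|({\bf M}_k)_\Omega\|\le\|{\bf M}_k\|=1$, so the subtracted quotient is at least $\sigma_1({\bf R}_k)^2$ and therefore $\|{\bf R}_{k+1}\|^2\le\|{\bf R}_k\|^2-\sigma_1({\bf R}_k)^2$. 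The key step is then the spectral inequality: since $\mathrm{rank}({\bf R}_k)\le\min(m,n)$, we have $\|{\bf R}_k\|^2=\sum_i\sigma_i({\bf R}_k)^2\le\min(m,n)\,\sigma_1({\bf R}_k)^2$, i.e. $\sigma_1({\bf R}_k)^2\ge\|{\bf R}_k\|^2/\min(m,n)$. Substituting gives the per-step contraction $\|{\bf R}_{k+1}\|^2\le\bigl(1-1/\min(m,n)\bigr)\|{\bf R}_k\|^2$, and since ${\bf X}_0=0$ forces ${\bf R}_1={\bf Y}_\Omega$, induction followed by a square root produces the stated bound.

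I expect the main obstacle to be justifying the first inequality rigorously rather than the arithmetic: one must confirm that the closed-form weights \eqref{lsr} genuinely realize the orthogonal projection onto $V_k$ in the $\Omega$-inner product, so that restricting attention to the single direction $({\bf M}_k)_\Omega$ can only weaken the estimate, and that the identity $\langle {\bf R}_k,{\bf M}_k\rangle=\sigma_1({\bf R}_k)$ survives the restriction to $\Omega$. The deflation $\|({\bf M}_k)_\Omega\|\le1$ is exactly what turns the possibly shrunken observed basis vector into an advantage, and it is worth checking that it plays no role in the rank bound, which concerns ${\bf R}_k$ itself.
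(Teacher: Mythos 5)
Your proof is correct, and it reaches the same per-iteration contraction $\|{\bf R}_{k+1}\|^2\le\bigl(1-\tfrac{1}{\min(m,n)}\bigr)\|{\bf R}_k\|^2$ as the paper, but it establishes the key intermediate inequality $\|{\bf R}_{k+1}\|^2\le\|{\bf R}_k\|^2-\langle {\bf M}_k,{\bf R}_k\rangle^2$ by a genuinely different and more elementary route. The paper proves this via Property~\ref{residual_bound}: it writes ${\bf R}_{k+1}={\bf R}_k-{\bf L}_k$ with ${\bf L}_k$ the full least-squares correction, derives the exact identity $\|{\bf R}_{k+1}\|^2=\|{\bf R}_k\|^2-\|{\bf L}_k\|^2$ from the orthogonality conditions, and then lower-bounds $\|{\bf L}_k\|^2=\langle {\bf M}_k,{\bf R}_k\rangle^2/({\bf U}_{kk})^2\ge\langle {\bf M}_k,{\bf R}_k\rangle^2$ via a QR factorization of ${\bf \bM}_k$ (which in turn requires Property~\ref{near_orthogonal} to guarantee full column rank). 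You instead observe that the $k$-term least-squares fit is at least as good as the one-parameter fit ${\bf X}_{k-1}+c({\bf M}_k)_\Omega$, compute that one-dimensional projection in closed form, and use $\|({\bf M}_k)_\Omega\|\le 1$ to discard the denominator; this bypasses the QR argument and the linear-independence lemma entirely. Your route is exactly the one the paper itself deploys for the economic variant (Property~\ref{lineardependentNew}) and for the inexact-singular-vector analysis in Theorem~\ref{nonopt}, so it is fully rigorous; what it gives up is only the exact decrement identity and the observation that the true decrease is $\langle {\bf M}_k,{\bf R}_k\rangle^2/({\bf U}_{kk})^2$, which can exceed $\sigma_1({\bf R}_k)^2$ — information not needed for the stated bound. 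The remaining ingredients — $\langle {\bf R}_k,({\bf M}_k)_\Omega\rangle=\sigma_1({\bf R}_k)$ because ${\bf R}_k$ is supported on $\Omega$, the spectral bound $\sigma_1({\bf R}_k)^2\ge\|{\bf R}_k\|^2/\min(m,n)$, and ${\bf R}_1={\bf Y}_\Omega$ — coincide with the paper's. One minor point to make explicit: if $({\bf M}_k)_\Omega=0$ the one-dimensional projection formula degenerates, but then $\sigma_1({\bf R}_k)=\langle {\bf R}_k,{\bf M}_k\rangle=0$ forces ${\bf R}_k=0$ and the bound is trivial, so nothing is lost.
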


Before proving Theorem~\ref{convergence_rate}, we need to establish some useful and preparatory properties of 
Algorithm~\ref{OMP_MC}. The first property says that ${\bf R}_{k+1} $ is perpendicular to all previously generated ${\bf M}_i$ for $i=1, \cdots, k$.
\begin{property} \label{orthogonal}
  $\langle {\bf R}_{k+1}, {\bf M}_i\rangle = 0$ for $i = 1,\cdots,k$.
\end{property}

\begin{proof}
Recall that ${\bm \theta}^{k}$ is the optimal solution of problem \eqref{weight}.
By the first-order optimality condition, one has
\[
\langle {\bf Y}-\sum^t_{i=1} {\theta}^k_i {\bf M}_i, {\bf M}_i\rangle_\Omega = 0 \ \mbox{for} \ i=1,\cdots, k,
\]
which together with ${\bf R}_k = {\bf Y}_\Omega - {\bf X}_{k-1}$ and ${\bf X}_k =  \sum^{k}_{i=1} {\theta}^{k}_i ({\bf M}_i)_\Omega$ implies that $\langle {\bf R}_{k+1}, {\bf M}_i\rangle = 0$ for $i = 1,\cdots,k$.
\end{proof}

 The following property shows that as the number of rank-one basis matrices ${\bf M}_i$ increases during our learning process, the residual $\|{\bf R}_k\|$ does not increase.

\begin{property} \label{residual}
$\|{\bf R}_{k+1}\| \leq \|{\bf R}_{k}\|$ for all $k \ge 1$.
\end{property}

\begin{proof}
We observe that for all $k \ge 1$,
\[
\begin{array}{lcl}
\|{\bf R}_{k+1}\|^2  &=& \min\limits_{{\bm \theta} \in \Re^k}\{\|{\bf Y}-\sum^k_{i=1}{ \theta}_i {\bf M}_i\|^2_\Omega\} \\ [8pt]
& \le &
\min\limits_{{\bm \theta} \in \Re^{k-1}}\{\|{\bf Y}-\sum^{k-1}_{i=1}{ \theta}_i {\bf M}_i\|^2_\Omega\} \\ [8pt]
 &=& \  \|{\bf R}_k\|^2,
\end{array}
\]
and hence the conclusion holds.
\end{proof}

We next establish that $\{({\bf M}_i)_\Omega\}^k_{i=1}$ is linearly independent unless $\|{\bf R}_k\|=0$. 
It follows that formula \eqref{lsr} is well-defined and hence ${\bm \theta}^k$ is uniquely defined before the algorithm stops.

\begin{property}\label{near_orthogonal}
Suppose that ${\bf R}_k \neq 0$ for some $k \ge 1$. Then, ${\bf \bM}_i$ has a full column rank for all $i \le k$.
\end{property}

\begin{proof}
Using Property~\ref{residual} and the assumption ${\bf R}_k \neq 0$ for some $k \ge 1$, we see that ${\bf R}_i \neq 0$ for all $i \le k$. 
We now prove the statement of this lemma by induction on $i$. Indeed, since ${\bf R}_1 \neq 0$, we clearly have ${\bf \bM}_1 \neq 0$. 
Hence the conclusion holds for $i=1$. We now assume that it holds for $i-1< k$ and need to show that it also holds for $i \le k$. By the induction 
hypothesis, ${\bf \bM}_{i-1}$ has a full column rank. Suppose for contradiction that ${\bf \bM}_i$
does not have a full column rank. Then, there exists ${\bm \alpha} \in \Re^{i-1}$ such that
\[
({\bf M}_{i})_\Omega = \sum^{i-1}\limits_{j=1} {\alpha}_j ({\bf M}_{j})_\Omega,
\]
which together with Property \ref{orthogonal} implies that $\langle {\bf R}_i, {\bf M}_i\rangle = 0$. It follows that
\[
\sigma_{1} ({\bf R}_i) = u^T_i {\bf R}_i v_i = \langle {\bf R}_i, {\bf M}_i\rangle = 0,
\]
and hence ${\bf R}_i=0$, which contradicts the fact that ${\bf R}_j \neq 0$ for all $j \le i$. 
Therefore, ${\bf \bM}_i$ has a full column rank and the conclusion holds for general $i$.
\end{proof}

We next build a relationship between two consecutive residuals $\|{\bf R}_{k+1}\|$ and $\|{\bf R}_{k}\|$.
For convenience, define ${ \theta}^{k-1}_k=0$ and let
\begin{center}
  ${\bm \theta}^{k} = {\bm \theta}^{k-1} + {\bm \eta}^{k}$.
\end{center}
In view of \eqref{weight}, one can observe that
\begin{equation} \label{etak}
{\bm \eta}^{k} = \arg\min\limits_{ {\bm \eta} } || \sum\limits_{i=1}^k {\eta}_i {\bf M}_i - {\bf R}_{k}  ||^2_{\Omega}.
\end{equation}
Let
\begin{equation} \label{Lk1}
{\bf L}_{k} = \sum^k_{i=1} {\eta}^{k}_i ({\bf M}_i)_\Omega.
\end{equation}
By the definition of ${\bf X}_k$, one can also observe that
\[
\begin{array}{l}
{\bf X}_{k} = {\bf X}_{k-1} + {\bf L}_{k}, \\ [6pt]
{\bf R}_{k+1} = {\bf R}_{k} - {\bf L}_{k}.
\end{array}
\]

\begin{property}\label{residual_bound}
  $||{\bf R}_{k+1}||^2 = ||{\bf R}_{k}||^2 - ||{\bf L}_{k}||^2$ and $||{\bf L}_k||^2 \geq \langle {\bf M}_{k}, {\bf R}_{k}  \rangle^2$, where ${\bf L}_k$ is defined in \eqref{Lk1}.
\end{property}

\begin{proof}
Since $ {\bf L}_{k} = \sum_{i \leq k} {\bm \eta}^{k}_i ({\bf M}_i)_\Omega $, it follows from Property~\ref{orthogonal} that $\langle {\bf R}_{k+1}, {\bf L}_{k} \rangle = 0$. We then have
  \begin{equation}
    \begin{array}{ll}
      ||{\bf R}_{k+1}||^2 & = ||{\bf R}_{k} - {\bf L}_{k}||^2 \\[6pt]
      & = ||{\bf R}_{k}||^2 - 2 \langle {\bf R}_{k}, {\bf L}_{k} \rangle + ||{\bf L}_{k}||^2 \\ [6pt]
      & = ||{\bf R}_{k}||^2 - 2 \langle {\bf R}_{k+1}+{\bf L}_{k}, {\bf L}_{k} \rangle + ||{\bf L}_{k}||^2 \\ [6pt]
      & = ||{\bf R}_{k}||^2 - 2 \langle {\bf L}_{k}, {\bf L}_{k} \rangle + ||{\bf L}_{k}||^2 \\ [6pt]
      & = ||{\bf R}_{k}||^2 - ||{\bf L}_{k}||^2.
    \end{array} \nonumber
  \end{equation}
We next bound $\|{\bf L}_k\|^2$ from below. If ${\bf R}_k=0$, $||{\bf L}_k||^2 \geq \langle {\bf M}_{k}, {\bf R}_{k}  \rangle^2$ clearly holds. 
We now suppose throughout the remaining proof that ${\bf R}_k \neq 0$. 
It then follows from Property~\ref{near_orthogonal} that ${\bf \bM}_{k}$ has a full column rank. Using this fact and \eqref{etak}, we have
\[
{\bm \eta}^{k} = \left({\bf \bM}^T_{k} {\bf \bM}_{k}\right)^{-1} {\bf \bM}^T_{k} \dot{\bf r}_{k},
\]
where $\dot{\bf r}_{k}$ is the reshaped residual vector of ${\bf R}_{k}$. 
Invoking that ${\bf L}_{k} = \sum\limits_{i \leq k} {\eta}^{k}_i ({\bf M}_i)_\Omega$, we then obtain
\begin{equation} \label{Lk}
  ||{\bf L}_{k}||^2  =  \dot{\bf r}^T_{k}  {\bf \bM}_{k} ({\bf \bM}^T_{k} {\bf \bM}_{k})^{-1} {\bf \bM}^T_{k} \dot{\bf r}_{k}.
\end{equation}
Let ${\bf \bM}_{k} = {\bf Q}{\bf U}$ be the QR factorization of ${\bf \bM}_{k}$, where ${\bf Q}^T{\bf Q}={\bf I}$ and ${\bf U}$ is a $k \times k$ 
nonsingular upper triangular matrix. One can observe that $({\bf \bM}_{k})_k =\dot {\bf m}_k$, where $({\bf \bM}_{k})_k$ denotes the $k$-th column of 
the matrix ${\bf \bM}_k$ and $\dot{\bf m}_k$ is the reshaped vector of $({\bf M}_k)_\Omega$. Recall that $\|{\bf M}_k\|=\|{\bf u}_k{\bf v}^T_k\|=1$. 
Hence, $\|({\bf \bM}_{k})_k\| \le 1$. Due to ${\bf Q}^T{\bf Q}={\bf I}$, ${\bf \bM}_{k} = {\bf QU}$ and the definition of $\bf U$, we have
\[
0 \ < \ |{\bf U}_{kk}| \ \le \ \|{\bf U}_k\| \ = \ \|({\bf \bM}_{k})_k\| \ \le \ 1.
\]
In addition, by Property~\ref{orthogonal}, we have
\begin{equation} \label{Mkrk}
{\bf \bM}^T_{k} \dot{\bf r}_{k} = \left[ 0, \cdots, 0, \langle {\bf M}_{k}, {\bf R}_{k} \rangle \right]^T.
\end{equation}
Substituting ${\bf \bM}_{k} = \bf QU$ into \eqref{Lk}, and using $\bf Q^TQ=I$ and \eqref{Mkrk}, we obtain that
\[
\begin{array}{l}
\|{\bf L}_{k}\|^2 \ = \ \dot{\bf r}^T_{k}  {\bf \bM}_{k} ({\bf U}^T{\bf U})^{-1} {\bf \bM}^T_{k} \dot{\bf r}_{k} \\ [5pt]
= \ \left[ 0, \cdots, 0, \langle {\bf M}_{k}, {\bf R}_{k} \rangle \right] {\bf U}^{-1} {\bf U}^{-T}\left[ 0, \cdots, 0, \langle {\bf M}_{k}, {\bf 
R}_{k} \rangle \right]^T \\ [5pt]
= \ \langle {\bf M}_{k}, {\bf R}_{k} \rangle^2/({\bf U}_{kk})^2 \ \ge \ \langle {\bf M}_{k}, {\bf R}_{k} \rangle^2,
\end{array}
\]
where the last equality follows since $\bf U$ is upper triangular and the last inequality is due to $|{\bf U}_{kk}| \le 1$.
\end{proof}

We are now ready to prove Theorem~\ref{convergence_rate}.

\begin{proof}[ of Theorem~\ref{convergence_rate}] 
Using the definition of ${\bf M}_k$, we have
\[
\begin{array}{l}
\langle {\bf M}_{k}, {\bf R}_{k}  \rangle  \ = \  \langle {\bf u}^k({\bf v}^k)^T, {\bf R}_{k}  \rangle \ = \ \sigma_{1}({\bf R}_{k}) \\ [8pt]
 \geq \ \sqrt{  \frac{ \sum_i \sigma^2_i({\bf R}_k) }{ \rank({\bf R}_k) } } \ = \ \sqrt{\frac{\|{\bf R}_k\|^2}{\rank({\bf R}_k)}} \ \ge \ \sqrt{\frac{\|{\bf R}_k\|^2}{\min(m,n)}}.
\end{array}
\]
Using this inequality and Property~\ref{residual_bound}, we obtain that
\[
\begin{array}{lcl}
||{\bf R}_{k+1}||^2 & =& ||{\bf R}_{k}||^2 - ||{\bf L}_{k}||^2  \ \le \
||{\bf R}_k||^2 - \langle {\bf M}_{k}, {\bf R}_{k}  \rangle^2 \\ [8pt]
&\le& (1-\frac{1}{\min(m,n)}) ||{\bf R}_{k}||^2.
\end{array}
\]
In view of this relation and the fact that $\|{\bf R}_1 \| = \|\bf Y\|^2_\Omega$, we easily conclude that
\[
||{\bf R}_{k}|| \ \le \ \left(\sqrt{1-\frac{1}{\min(m,n)}}\right)^{k-1} \|\bf Y\|_\Omega.
\]
This completes the proof. \end{proof}

\begin{remark}
\label{exact_recovery}
If $\Omega$ is the entire set of all indices of $\{(i,j), i=1,\cdots, m, j=1, \cdots, n\}$, 
our orthogonal rank-one matrix pursuit algorithm equals to standard singular value decomposition using the power method. In particular, 
when $\Omega$ is the set of all indices while the given entries are noisy values of an exact matrix, our OR1MP algorithm can help remove
the noises. 
\end{remark}

\begin{remark}
In a standard study of the convergence rate of the Orthogonal Matching Pursuit (OMP) or Orthogonal Greedy Algorithm (OGA), one can only
get $|\langle {\bf M}_{k}, {\bf R}_{k}  \rangle| \ge \|{\bf R}_k\|^2$, which leads a sub-linear convergence. Our ${\bf M}_k$ is a data dependent 
construction which is based on the top left and right singular vectors of the residual matrix ${\bf R}_k$. 
It thus has a better estimate which gives us the linear convergence.
\end{remark}

\section{An Economic Orthogonal Rank-One Matrix Pursuit Algorithm}

The proposed OR1MP algorithm has to track all pursued bases and save them in the memory. It demands $O(r|\Omega|)$ storage complexity to obtain a 
rank-$r$ estimated matrix. For large scale problems, such storage requirement is not negligible and restricts the rank of the matrix to be estimated. 
To adapt our algorithm to large scale problems with a large approximation rank, we simplify the orthogonal projection step by only tracking the 
estimated matrix ${\bf X}_{k-1}$ and the rank-one update matrix ${\bf M}_{k}$. In this case, we only need to estimate the weights for these 
two matrices by solving the following least squares problem:
\begin{equation}
  \label{weight2}
  {\bm \alpha}^k = \arg\min\limits_{{\bm \alpha} = \{\alpha_1, \alpha_2\}}||{\alpha}_1{\bf X}_{k-1} + {\alpha}_2{\bf M}_k - {\bf Y} ||^2_\Omega.
\end{equation}
This still fully corrects all weights of the existed bases, though the correction is sub-optimal. If we write the estimated matrix as a linear 
combination of the bases, we have ${\bf X}_k =  \sum^{k}_{i=1} {\theta}^{k}_i ({\bf M}_i)_\Omega$ with ${\theta}^{k}_k =  {\alpha}_2^{k}$ and 
${\theta}^{k}_i = {\theta}^{k-1}_i {\alpha}_1^{k}$, for $i < k $. 
The detailed procedure of this simplified method is given in Algorithm~\ref{OR1MU_MC}.
\begin{algorithm}[t!]
\caption{Economic Orthogonal Rank-One Matrix Pursuit (EOR1MP)}
    \label{OR1MU_MC}
\begin{algorithmic}
   \STATE {\bfseries Input:} {${\bf Y}_\Omega$ and stopping criterion.}
   \STATE {\bfseries Initialize:} {Set ${\bf X}_0 = 0$, ${\bm \theta}^0 = 0$ and $k=1$.}
   \REPEAT
   {
       \STATE {\bfseries Step 1}: Find a pair of top left and right singular vectors $({\bf u}_k, {\bf v}_k)$
of the observed residual matrix ${\bf R}_k = {\bf Y}_\Omega - {\bf X}_{k-1}$ and set ${\bf M}_k={\bf u}_k ({\bf
v}_k)^T$.\\
       \STATE {\bfseries Step 2}: Compute the optimal weights ${\bm \alpha}^k$ for ${\bf X}_{k-1}$ and ${\bf M}_k$ by solving: $ \arg\min\limits_{\bm \alpha}||{\alpha}_1{\bf X}_{k-1} + {\alpha}_2({\bf M}_k)_\Omega - {\bf Y}_\Omega ||^2$.\\
       \STATE {\bfseries Step 3}: Set ${\bf X}_k = {\alpha}_1^{k}{\bf X}_{k-1} + {\alpha}_2^{k}({\bf M}_k)_\Omega$; ${ \theta}^{k}_k =  {\alpha}_2^{k}$ and ${\theta}^{k}_i = { \theta}^{k-1}_i {\alpha}_1^{k}$ for $i < k $; $k \leftarrow k+1$. \\
   }
   \UNTIL{ stopping criterion is satisfied }
   \STATE {\bfseries Output:}{ Constructed matrix  $\hat {\bf Y} =  \sum^{k}_{i=1} { \theta}^{k}_i {\bf M}_i$. }
\end{algorithmic}
\end{algorithm}

The proposed economic orthogonal rank-one matrix pursuit algorithm (EOR1MP) uses the same amount of storage as the greedy algorithms \cite{Jaggi10,Tewari11}, which is significantly smaller than that required by our OR1MP algorithm, Algorithm~\ref{OMP_MC}. Interestingly, we can show that the EOR1MP algorithm is still convergent and retains the linear convergence rate. The main result is given in the following theorem.

\begin{theorem} 
\label{convergence_rate2} 
Algorithm~\ref{OR1MU_MC}, the economic orthogonal rank-one matrix pursuit algorithm satisfies
\[
||{\bf R}_{k}|| \ \le \ \left(\sqrt{1-\frac{1}{\min(m,n)}}\right)^{k-1} \|Y\|_\Omega, \ \ \ \forall k \ge 1.
\]
\end{theorem}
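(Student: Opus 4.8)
The plan is to mirror the proof of Theorem~\ref{convergence_rate} as closely as possible. That proof ultimately rests on only two ingredients: the energy identity $\|{\bf R}_{k+1}\|^2 = \|{\bf R}_k\|^2 - \|{\bf L}_k\|^2$ together with the lower bound $\|{\bf L}_k\|^2 \ge \langle {\bf M}_k, {\bf R}_k\rangle^2$ (i.e.\ Property~\ref{residual_bound}), followed by the singular-value estimate $\langle {\bf M}_k, {\bf R}_k\rangle = \sigma_1({\bf R}_k) \ge \sqrt{\|{\bf R}_k\|^2/\min(m,n)}$. Since Step~1 of Algorithm~\ref{OR1MU_MC} is identical to that of Algorithm~\ref{OMP_MC}, the singular-value estimate is unchanged, so it suffices to re-establish an analogue of Property~\ref{residual_bound} for the economic update, after which the final recursion and its telescoping are copied verbatim. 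Accordingly I would redefine ${\bf L}_k = {\bf X}_k - {\bf X}_{k-1}$, where now ${\bf X}_k = \alpha_1^k {\bf X}_{k-1} + \alpha_2^k({\bf M}_k)_\Omega$ is produced by the two-variable least squares \eqref{weight2}, so that ${\bf R}_{k+1} = {\bf R}_k - {\bf L}_k$ still holds.

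First I would record the optimality conditions of \eqref{weight2}: differentiating in $\alpha_1$ and $\alpha_2$ gives $\langle {\bf R}_{k+1}, {\bf X}_{k-1}\rangle = 0$ and $\langle {\bf R}_{k+1}, ({\bf M}_k)_\Omega\rangle = 0$, all quantities being supported on $\Omega$ so the inner products are the plain Frobenius ones. Because ${\bf L}_k$ is a linear combination of ${\bf X}_{k-1}$ and $({\bf M}_k)_\Omega$, it follows that $\langle {\bf R}_{k+1}, {\bf L}_k\rangle = 0$, and the same Pythagorean computation as in Property~\ref{residual_bound} yields $\|{\bf R}_{k+1}\|^2 = \|{\bf R}_k\|^2 - \|{\bf L}_k\|^2$. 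This part is routine and uses no structure beyond the two orthogonality relations.

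The hard part will be re-proving the lower bound $\|{\bf L}_k\|^2 \ge \langle {\bf M}_k, {\bf R}_k\rangle^2$, because the QR argument used in Property~\ref{residual_bound} relied on ${\bf L}_k$ being the projection of ${\bf R}_k$ onto the span of \emph{all} bases $({\bf M}_i)_\Omega$, which the economic rule no longer computes. The key observation I would use is that ${\bf L}_k$ is nonetheless the orthogonal projection of ${\bf R}_k$ onto the subspace $V = \mathrm{span}\{{\bf X}_{k-1}, ({\bf M}_k)_\Omega\}$: since ${\bf X}_{k-1}\in V$ and ${\bf X}_k$ is the projection of ${\bf Y}_\Omega$ onto $V$, subtracting shows ${\bf L}_k$ is the projection of ${\bf R}_k = {\bf Y}_\Omega - {\bf X}_{k-1}$ onto $V$. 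As $({\bf M}_k)_\Omega \in V$, the norm of this projection is at least that of the projection of ${\bf R}_k$ onto the single direction $({\bf M}_k)_\Omega$, giving $\|{\bf L}_k\|^2 \ge \langle {\bf R}_k, ({\bf M}_k)_\Omega\rangle^2/\|({\bf M}_k)_\Omega\|^2$. Finally, since ${\bf R}_k$ is $\Omega$-supported one has $\langle {\bf R}_k, ({\bf M}_k)_\Omega\rangle = \langle {\bf R}_k, {\bf M}_k\rangle$, and since $\|({\bf M}_k)_\Omega\| \le \|{\bf M}_k\| = 1$ the denominator is at most one; together these give the desired $\|{\bf L}_k\|^2 \ge \langle {\bf M}_k, {\bf R}_k\rangle^2$. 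This replaces the QR computation by a one-line projection-monotonicity argument and is, if anything, cleaner.

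With this analogue of Property~\ref{residual_bound} in hand, I would close exactly as in the proof of Theorem~\ref{convergence_rate}: combine $\|{\bf R}_{k+1}\|^2 = \|{\bf R}_k\|^2 - \|{\bf L}_k\|^2 \le \|{\bf R}_k\|^2 - \langle {\bf M}_k, {\bf R}_k\rangle^2$ with $\langle {\bf M}_k, {\bf R}_k\rangle^2 = \sigma_1({\bf R}_k)^2 \ge \|{\bf R}_k\|^2/\min(m,n)$ to obtain $\|{\bf R}_{k+1}\|^2 \le (1 - 1/\min(m,n))\|{\bf R}_k\|^2$, and then telescope from $\|{\bf R}_1\| = \|{\bf Y}\|_\Omega$ to get the stated geometric bound. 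The only genuinely new content is the projection observation in the third paragraph; everything else transfers directly from Section~3.
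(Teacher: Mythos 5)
Your proposal is correct and follows essentially the same route as the paper: the paper's key step (Property~\ref{lineardependentNew}) bounds $\|{\bf R}_{k+1}\|^2$ by restricting the two-variable least squares to $\alpha_1=1$ and optimizing over $\alpha_2$ alone, which yields exactly the inequality $\|{\bf R}_{k+1}\|^2 \le \|{\bf R}_k\|^2 - \langle {\bf M}_k,{\bf R}_k\rangle^2/\langle {\bf M}_k,{\bf M}_k\rangle_\Omega$ that your projection-monotonicity argument produces. Your packaging via the exact energy identity $\|{\bf R}_{k+1}\|^2=\|{\bf R}_k\|^2-\|{\bf L}_k\|^2$ and the comparison of the projection onto $\mathrm{span}\{{\bf X}_{k-1},({\bf M}_k)_\Omega\}$ with the projection onto the single direction $({\bf M}_k)_\Omega$ is an equivalent reformulation, and your concluding combination with $\langle{\bf M}_k,{\bf M}_k\rangle_\Omega\le 1$ and $\sigma_1({\bf R}_k)\ge\|{\bf R}_k\|/\sqrt{\min(m,n)}$ is identical to the paper's.
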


Before proving Theorem~\ref{convergence_rate2}, we present several useful properties of our Algorithm~\ref{OR1MU_MC}.
The first property says that ${\bf R}_{k+1} $ is perpendicular to matrix ${\bf X}_{k-1}$ and matrix ${\bf M}_k$.

\begin{property} \label{orthogonal2}
  $\langle {\bf R}_{k+1}, {\bf X}_{k-1}\rangle = 0$ and $\langle {\bf R}_{k+1}, {\bf M}_k\rangle = 0$.
\end{property}

\begin{proof}
Recall that ${\bm \alpha}^{k}$ is the optimal solution of problem \eqref{weight2}.
By the first-order optimality condition according to ${\bf X}_{k-1}$ and ${\bf M}_k$, one has
\[
\langle {\bf Y}-  {\alpha}_1^{k}{\bf X}_{k-1} - {\alpha}_2^{k}{\bf M}_k, {\bf X}_{k-1}\rangle_\Omega = 0,
\]
and
\[
\langle {\bf Y} -  {\alpha}_1^{k}{\bf X}_{k-1} - {\alpha}_2^{k}{\bf M}_k, {\bf M}_{k}\rangle_\Omega = 0,
\]
which together with ${\bf R}_k = {\bf Y}_\Omega - {\bf X}_{k-1}$ implies that $\langle {\bf R}_{k+1}, {\bf X}_{k-1}\rangle = 0$ 
and $\langle {\bf R}_{k+1}, {\bf M}_k\rangle = 0$.
\end{proof}

\begin{property} \label{resgap2}
$\|{\bf R}_{k+1}\|^2 = \|{\bf Y}_\Omega\|^2 - \|{\bf X}_{k}\|^2$  for all $k \ge 1$.
\end{property}

\begin{proof}
We observe that for all $k \ge 1$,
\[
\begin{array}{lcl}
\|{\bf Y}_\Omega \|^2  &=& \|{\bf R}_{k+1} + {\bf X}_{k} \|^2 \\[8pt]
&=&\|{\bf R}_{k+1}\|^2 + \|{\bf X}_{k} \|^2 + 2 \langle{\bf R}_{k+1}, {\bf X}_{k}\rangle\\[8pt]
&=& \|{\bf R}_{k+1}\|^2 + \|{\bf X}_{k} \|^2
\end{array}
\]
as $\langle{\bf R}_{k+1}, {\bf X}_{k}\rangle = {\alpha}_1^{k} \langle{\bf R}_{k+1}, {\bf X}_{k-1}\rangle + {\alpha}_2^{k} \langle{\bf R}_{k+1}, {\bf M}_{k}\rangle = 0$, and hence the conclusion holds.
\end{proof}

The following property shows that as the number of rank-one basis matrices ${\bf M}_i$ increases during our iterative process, 
the residual $\|{\bf R}_k\|$ decreases.

\begin{property} \label{residual2}
$\|{\bf R}_{k+1}\| \leq \|{\bf R}_{k}\|$ for all $k \ge 1$.
\end{property}

\begin{proof}
We observe that for all $k \ge 1$,
\[
\begin{array}{lcl}
\|{\bf R}_{k}\|^2 &=& \min\limits_{{\bm \alpha} \in \Re^2} \|{\bf Y} - {\alpha}_1{\bf X}_{k-2} - {\alpha}_2{\bf M}_{k-1} \|^2_\Omega \\ [8pt]
& = &
{\|{\bf Y}- ({\alpha}_1^{k-1} {\bf X}_{k-2} + {\alpha}_2^{k-1} {\bf M}_{k-1})\|^2_\Omega } \\ [8pt]
& \ge &
\min\limits_{{\bm \alpha} \in \Re^{2}} \|{\bf Y}- {\alpha}_1( {\alpha}_1^{k-1} {\bf X}_{k-2} + {\alpha}_2^{k-1} {\bf M}_{k-1}) - {\alpha}_2{\bf M}_{k} \|^2_\Omega \\ [8pt]
& = &
\min\limits_{{\bm \alpha} \in \Re^{2}} \|{\bf Y}- {\alpha}_1 {\bf X}_{k-1} - {\alpha}_2{\bf M}_{k} \|^2_\Omega  \\ [8pt]
 &=& \  \|{\bf R}_{k+1}\|^2,
\end{array}
\]
and hence the conclusion holds.
\end{proof}

Let
\[
{\bf A}_{k} = {\bf B}^{T}_k {\bf B}_k =
\begin{array}{lc}
\begin{bmatrix}
\langle {\bf X}_{k-1}, {\bf X}_{k-1} \rangle  & \langle {\bf X}_{k-1}, {\bf M}_{k} \rangle \\[6pt]
\langle {\bf M}_{k}, {\bf X}_{k-1} \rangle  & \langle {\bf M}_{k}, {\bf M}_{k} \rangle_{\Omega}
\end{bmatrix}
\end{array}
\]
and ${\bf B}_k = [vec({\bf X}_{k-1}), vec(({\bf M}_{k})_\Omega)]$. The solution of problem \eqref{weight2} is 
${\bm \alpha}^k = {\bf A}_k^{-1} {\bf B}^T_k vec({\bf Y}_\Omega) $. We next establish that $vec({\bf X}_{k-1})$ and $vec(({\bf M}_{k})_\Omega)$ 
are linearly independent unless $\|{\bf R}_k\|=0$. It follows that ${\bf A}_k$ is invertible and hence ${\bm \alpha}^k$ is uniquely defined 
before the algorithm stops. 

\begin{property}\label{lineardependent2}
If ${\bf X}_{k-1} = \beta ({\bf M}_k)_\Omega$ for some $\beta \not= 0$, then $\|{\bf R}_{k+1}\| = \|{\bf R}_{k}\|$.
\end{property}

\begin{proof}
If ${\bf X}_{k-1} = \beta ({\bf M}_k)_\Omega$ with nonzero $\beta$, we get
\[
\begin{array}{lcl}
\|{\bf R}_{k+1}\|^2  &=& \min\limits_{{\bm \alpha} \in \Re^2} \|{\bf Y} - {\alpha}_1{\bf X}_{k-1} - {\alpha}_2{\bf M}_{k} \|^2_\Omega \\ [8pt]
&=& \min\limits_{{\bm \alpha} \in \Re^2} \|{\bf Y} - ({\alpha}_1+{\alpha}_2/\beta){\bf X}_{k-1} \|^2_\Omega \\ [8pt]
&=& \min\limits_{ \gamma \in \Re }  \|{\bf Y} - \gamma {\bf X}_{k-1}  \|^2_\Omega \\ [8pt]
&=& \min\limits_{ \gamma \in \Re }  \|{\bf Y} - \gamma{\alpha}^{k-1}_1 {\bf X}_{k-2} - \gamma {\alpha}^{k-1}_2 {\bf M}_{k-1}  \|^2_\Omega \\ [8pt]
&\ge&
\min\limits_{{\bm \gamma} \in \Re^2 } \|{\bf Y} - {\gamma}_1 {\bf X}_{k-2} - {\gamma}_2 {\bf M}_{k-1}  \|^2_\Omega \\ [8pt]
&=& \|  {\bf Y} - {\bf X}_{k-1}  \|^2_\Omega  \\ [8pt]
&=& \|{\bf R}_{k}\|^2.
\end{array}
\]
and hence the conclusion holds with $\|{\bf R}_{k}\|^2 \ge \|{\bf R}_{k+1}\|^2$ given in Property~\ref{residual2}.
\end{proof}

\begin{property}\label{resprojection2}
Let $ \sigma_1({\bf R}_k) $ be the maximum singular value of ${\bf R}_k$. $\langle {\bf M}_{k}, {\bf R}_{k} \rangle = \sigma_1({\bf R}_k) \geq 
\frac{\| {\bf R}_k\|}{ \sqrt{\min(m,n)}}$ for all $k \ge 1$.
\end{property}
\begin{proof}
The optimum ${\bf M}_{k}$ in our algorithm satisfies
\[
\langle {\bf M}_{k}, {\bf R}_{k} \rangle = \max\limits_{rank({\bf M}) = 1} \langle {\bf M}, {\bf R}_{k} \rangle = \sigma_1({\bf R}_k).
\]
Using the fact that $ \sqrt{rank({\bf R}_k)} \sigma_1({\bf R}_k) \ge \| {\bf R}_k \|$ and $rank({\bf R}_k) \le \min(m,n) $, we get the conclusion. 
\end{proof}

\begin{property}\label{near_orthogonal2}
Suppose that ${\bf R}_k \neq 0$ for some $k \ge 1$. Then, ${\bf X}_{k-1} \not= \beta ({\bf M}_k)_\Omega$ for all $\beta \not= 0$.
\end{property}

\begin{proof}
If ${\bf X}_{k-1} = \beta ({\bf M}_k)_\Omega$ with $\beta \not= 0$, we have
\[
\begin{array}{lcl}
\|{\bf R}_{k+1}\|^2  &=& \|{\bf Y} - {\bf X}_{k} \|^2_\Omega  \\ [8pt]
&=& \min\limits_{{\bm \alpha} \in \Re^2 } \|{\bf Y} - {\alpha}_1{\bf X}_{k-1} - {\alpha}_2{\bf M}_{k} \|^2_\Omega \\ [8pt]
&=& \min\limits_{{\bm \alpha} \in \Re^2 } \|{\bf Y} - ({\alpha}_1+{\alpha}_2/\beta){\bf X}_{k-1} \|^2_\Omega \\ [8pt]
&=& \min\limits_{\gamma \in \Re } \|{\bf Y} - \gamma {\bf X}_{k-1}  \|^2_\Omega \\ [8pt]
&=& \|{\bf Y} - \gamma^k {\bf X}_{k-1}  \|^2_\Omega \\ [8pt]
&=& \|{\bf R}_{k}\|^2 \\ [8pt]
&=& \|{\bf Y} - {\bf X}_{k-1} \|^2_\Omega.
\end{array}
\]

As ${\bf R}_k \neq 0$, we have $({\bf M}_k)_\Omega \neq 0$ and ${\bf X}_{k-1} \neq 0$. Then from the above equality, we conclude that $\gamma^k = 1$ 
is the unique optimal solution of the minimization in terms of $\gamma$, thus we obtain its first-order optimality condition: $\langle {\bf X}_{k-1}, 
{\bf R}_{k} \rangle = 0$. However, this contradicts with
\[
\langle {\bf X}_{k-1}, {\bf R}_{k} \rangle =  \beta \langle {\bf M}_{k}, {\bf R}_{k} \rangle =  \beta \sigma_1({\bf R}_{k}) \not= 0.
\]
The complete the proof. \end{proof}

We next build a relationship between two consecutive residuals $\|{\bf R}_{k+1}\|$ and $\|{\bf R}_{k}\|$.
\begin{property}\label{lineardependentNew}
$\| {\bf R}_{k+1}\|^2 \leq \| {\bf R}_k\|^2 - \frac{\sigma^2_1({\bf R}_k)}{ \langle {\bf M}_k, {\bf M}_k \rangle_{\Omega} }$.
\end{property}

\begin{proof}
\[
\begin{array}{lcl}
\|{\bf R}_{k+1}\|^2 &=& \min\limits_{{\bm \alpha} \in \Re^2 } \|{\bf Y} - {\alpha}_1{\bf X}_{k-1} - {\alpha}_2{\bf M}_{k} \|^2_\Omega \\ [8pt]
& \le &  \min\limits_{ {\alpha}_2 \in \Re } \|{\bf Y} - {\bf X}_{k-1} - {\alpha}_2{\bf M}_{k} \|^2_\Omega \\ [8pt]
&=&  \min\limits_{ {\alpha}_2 \in \Re } \| {\bf R}_{k} - {\alpha}_2{\bf M}_{k} \|^2_\Omega.
\end{array}
\]
This has a closed form solution as ${\alpha}^*_2 = \frac{\langle {\bf R}_k, {\bf M}_k \rangle}{\langle {\bf M}_k, {\bf M}_k \rangle_{\Omega} }$. 
Plugging this optimum ${\alpha}^*_2$ back into the formulation, we get
\[
\begin{array}{lcl}
\|{\bf R}_{k+1}\|^2 & \le & \| {\bf R}_{k} - \frac{\langle {\bf R}_k, {\bf M}_k \rangle}{\langle {\bf M}_k, {\bf M}_k \rangle} {\bf M}_{k} \|^2_\Omega 
\\ [8pt]
& = & \| {\bf R}_{k} \|^2 - \frac{\langle {\bf R}_k, {\bf M}_k \rangle^2}{\langle {\bf M}_k, {\bf M}_k \rangle_{\Omega} }\\[8pt]
& = & \| {\bf R}_{k} \|^2 - \frac{\sigma^2_1({\bf R}_k)}{\langle {\bf M}_k, {\bf M}_k \rangle_{\Omega} }.
\end{array}
\]
This completes the proof.
\end{proof}

We are now ready to prove Theorem~\ref{convergence_rate2}.

\begin{proof}[ of Theorem~\ref{convergence_rate2}]
Using the definition of ${\bf M}_k$ with its normalization property ${ \langle {\bf M}_k, {\bf M}_k \rangle_{\Omega} } \leq 1$, 
Property~\ref{lineardependentNew} and Property~\ref{resprojection2}, we obtain that
\[
\begin{array}{lcl}
||{\bf R}_{k+1}||^2 & \le & ||{\bf R}_k||^2 - \frac{\sigma^2_1({\bf R}_k)}{ \langle {\bf M}_k, {\bf M}_k \rangle_{\Omega} } \ \le \ ||{\bf R}_k||^2 - {\sigma^2_1({\bf R}_k)} \\ [8pt]
& \le & \left( 1-\frac{1}{\min(m,n)} \right) ||{\bf R}_{k}||^2.
\end{array}
\]
In view of this relation and the fact that $\|{\bf R}_1 \| = \|\bf Y\|^2_\Omega$, we easily conclude that
\[
||{\bf R}_{k}|| \ \le \ \left(\sqrt{1-\frac{1}{\min(m,n)}}\right)^{k-1} \|\bf Y\|_\Omega.
\]
This completes the proof. \end{proof}

\section{An Extension to the Matrix Sensing Problem and Its Convergence Analysis}
In this section, we extend our algorithms to deal with the following matrix sensing problem (cf. \cite{Recht10,Lee10,Meka10,Jain13}):
\begin{equation}
\label{MS}
\min_{ {\bf X}\in  \Re^{n \times m}} \hbox{ rank }({\bf X}):  {\cal A}({\bf X}) ={\cal A}({\bf Y}),
\end{equation}
where ${\bf Y}$ is a target low rank matrix and ${\cal A}$ is a linear operator, e.g., ${\cal A}$ consists of  
vector pairs $({\bf f}_i, {\bf g}_i), i=1, \cdots, d$ such that ${\bf f}_i^\top {\bf X} {\bf g}_i ={\bf f}_i^\top {\bf Y} {\bf g}_i, i=1, \cdots, d$ are given constraints. Clearly, the matrix completion studied in the previous sections is a special case of the above problem by setting the linear operator $\mathcal{A}$ to be the observation operator $P_{\Omega}$.

Let us explain how to use our algorithms to solve this matrix sensing problem~\eqref{MS}. Recall a linear operator $\hbox{vec}$ which maps a matrix ${\bf X}$ of size $n \times m$ to a vector $\hbox{vec}({\bf X})$ of size $mn \times 1$. We now define an inverse operator $\hbox{mat}_{mn}$ which converts a vector ${\bf v}$ of size $mn \times 1$ to a matrix ${\bf V}=\hbox{mat}_{mn}({\bf v})$ of size $n\times m$. Note that when ${\bf X}$ is vectorized 
into $\hbox{vec}({\bf X})$, the linear operator ${\cal A}$ can be expressed in terms of matrix ${\bf A}$. That is, ${\cal A}({\bf X})={\cal A}({\bf Y})$ can be rewritten as ${\bf A}(\hbox{vec}({\bf X}))= {\bf A}(\hbox{vec}({\bf Y}))$. For convenience, we can write ${\cal A}= {\bf A}\hbox{vec}$. It is clear that ${\bf A}$ is a matrix of size $d\times mn$. Certainly, one can find its pseudo inverse ${\cal A}^\dagger$ which is ${\bf A}^\top ({\bf A}{\bf A}^\top)^{-1}$ as we have assumed that ${\bf A}$ is of full row rank. We note that since $d<< mn$, ${\bf A}{\bf A}^\dagger = {\bf I}_d$ while ${\bf A}^\dagger {\bf A} \not= {\bf I}_{mn}$, where ${\bf I}_d$ and ${\bf I}_{mn}$ are the identity matrices of size $d \times d$ and
$mn \times mn$, respectively. For convenience, we let ${\cal A}^{-1}= \hbox{mat}_{mn}({\bf A}^\dagger)$ which satisfies
$$
{\cal A}{\cal A}^{-1} {\bf b} = {\bf b}
$$
for any vector ${\bf b}$ of size $d \times 1$. We are now ready to tackle the matrix sensing problem~\eqref{MS} as follows: Let ${\bf b}={\cal A}({\bf Y})= {\bf A}\hbox{vec}({\bf Y})$ and ${\bf R}_0:= {\cal A}^{-1}({\bf b})$ be the given matrix. We apply Algorithm~\ref{OMP_MS} to obtain ${\bf M} ({\bm \theta}^{(k)})$ in $k\ge r$ steps:
\begin{algorithm}[bht]
     \caption{Rank-One Matrix Pursuit for Matrix Sensing (R1MP4MS)} 
    \label{OMP_MS}
\begin{algorithmic}
   \STATE {\bfseries Input:} {${\bf R}_0$ and stopping criterion.}
   \STATE {\bfseries Initialize:} {Set ${\bf X}_0 = 0$ and $k=1$.}
   \REPEAT
   {\STATE {\bfseries Step 1}: Find a pair of top left and right singular vectors $({\bf u}_k, {\bf v}_k)$
of the residual matrix ${\bf R}_k$ by solving a least squares problem using power method and set ${\bf M}_k={\bf u}_k ({\bf v}_k)^T$.\\
       \STATE {\bfseries Step 2}: Compute the weight vector ${\bm \theta}^{(k)}$ using the closed form least squares approximation of ${\bf R}_0$ 
using the best rank-one matrices ${\bf M}_i, i=1, \cdots, k$:
\begin{equation}
{\bm \theta}^{(k)}:= \min_{\theta_1, \cdots, \theta_k}\| {\bf R}_0 - \sum_{i=1}^k \theta_i {\cal A}^{-1}{\cal A}( {\bf M}_i)\|_F^2. \nonumber
\end{equation}
       \STATE {\bfseries Step 3}: Set ${\bf M}({\bm \theta}^{(k)}) = \sum^{k}_{i=1} {\theta}^{(k)}_i {\bf M}_i$, ${\bf R}_{k+1}= {\bf R}_0 - {\cal A}^{-1}{\cal A} ({\bf M}({\bm \theta}^{(k)}))$ and set $k \leftarrow k+1$. \\
   }
   \UNTIL{ stopping criterion is satisfied }
   \STATE {\bfseries Output:}{ the constructed matrix $\hat{\bf Y} = {\bf M}({\bm \theta}^{(k)})$.}
\end{algorithmic}
\end{algorithm}

We shall show that ${\bf M}({\bm \theta}^{(k)})$ converges to the exact rank $r$ matrix ${\bf Y}$.  First of all, 
Algorithm~\ref{OMP_MS} can be also proved to be linearly converged using the same procedure as the proof of Theorem~\ref{convergence_rate} in the main paper. We thus have the following theorem without presenting the detail of proof.  
\begin{theorem} \label{convergence_rate_MS}
Each step in Algorithm~\ref{OMP_MS} satisfies
\[
||{\bf R}_{k}||_F \ \le \ \left(\sqrt{1-\frac{1}{\min(m,n)}}\right)^{k-1} \| \mathcal{A}^{-1}({\bf b})\|_F, \ \ \ \forall k \ge 1.
\]
holds for all matrices ${\bf X}$ of rank at most $r$.
\end{theorem}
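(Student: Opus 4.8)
The plan is to observe that Algorithm~\ref{OMP_MS} is the exact analogue of Algorithm~\ref{OMP_MC} once the observation projector $P_\Omega$ is replaced by the linear map $P := {\cal A}^{-1}{\cal A}$, and then to replay the argument of Theorem~\ref{convergence_rate}. The first step is to identify $P$. Writing ${\cal A} = {\bf A}\,\myvec$ and ${\cal A}^{-1} = \hbox{mat}_{mn}({\bf A}^\dagger)$ with ${\bf A}^\dagger = {\bf A}^\top({\bf A}{\bf A}^\top)^{-1}$, one computes $\myvec \circ P \circ \hbox{mat}_{mn} = {\bf A}^\dagger {\bf A} = {\bf A}^\top({\bf A}{\bf A}^\top)^{-1}{\bf A}$, which is the orthogonal projection onto the row space of ${\bf A}$. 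Hence, under the Frobenius inner product, $P$ is an orthogonal projection: $P^2 = P$, $P$ is self-adjoint, and $\|P{\bf M}\|_F \le \|{\bf M}\|_F$ for every ${\bf M}$. Moreover ${\bf R}_0 = {\cal A}^{-1}({\bf b}) = P{\bf Y}$ lies in the range of $P$, and since every subsequent residual is ${\bf R}_{k+1} = {\bf R}_0 - P({\bf M}({\bm \theta}^{(k)}))$, a one-line induction gives $P{\bf R}_k = {\bf R}_k$ for all $k$.

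Second, I would record the geometry produced by Step~2. Because the least squares problem there projects ${\bf R}_0$ onto $\hbox{span}\{P{\bf M}_1,\dots,P{\bf M}_k\}$ in the Frobenius inner product, the first-order optimality conditions give $\langle {\bf R}_{k+1}, P{\bf M}_i\rangle = 0$ for $i \le k$ (the analogue of Property~\ref{orthogonal}), and the nesting of these subspaces yields the monotonicity $\|{\bf R}_{k+1}\| \le \|{\bf R}_k\|$ (the analogue of Property~\ref{residual}).

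Third, the heart of the argument is a one-step decrease. Since re-optimizing all $k$ coefficients in Step~2 can only do better than optimizing the single new coefficient of $P{\bf M}_k$ added to the previous fit ${\bf R}_k = {\bf R}_0 - \sum_{i<k}\theta_i^{(k-1)}P{\bf M}_i$, I would bound, exactly as in Property~\ref{lineardependentNew},
\[
\|{\bf R}_{k+1}\|^2 \ \le \ \min_{\alpha}\|{\bf R}_k - \alpha P{\bf M}_k\|_F^2 \ = \ \|{\bf R}_k\|^2 - \frac{\langle {\bf R}_k, P{\bf M}_k\rangle^2}{\|P{\bf M}_k\|_F^2}.
\]
The two key substitutions are that $\langle {\bf R}_k, P{\bf M}_k\rangle = \langle P{\bf R}_k, {\bf M}_k\rangle = \langle {\bf R}_k, {\bf M}_k\rangle = {\bf u}_k^\top {\bf R}_k {\bf v}_k = \sigma_1({\bf R}_k)$, using self-adjointness of $P$ together with $P{\bf R}_k = {\bf R}_k$, and that $\|P{\bf M}_k\|_F^2 \le \|{\bf M}_k\|_F^2 = 1$. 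Combining these with the singular-value inequality $\sigma_1({\bf R}_k) \ge \|{\bf R}_k\|_F/\sqrt{\hbox{rank}({\bf R}_k)} \ge \|{\bf R}_k\|_F/\sqrt{\min(m,n)}$ used in the proof of Theorem~\ref{convergence_rate} gives
\[
\|{\bf R}_{k+1}\|^2 \ \le \ \|{\bf R}_k\|^2 - \sigma_1({\bf R}_k)^2 \ \le \ \left(1 - \frac{1}{\min(m,n)}\right)\|{\bf R}_k\|^2,
\]
and iterating this contraction from ${\bf R}_1 = {\bf R}_0 = {\cal A}^{-1}({\bf b})$ delivers the stated bound.

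The step I expect to require the most care is verifying that $P$ really is a self-adjoint, norm-one projection and that the residuals never leave its range; once that is in hand, the identities $\langle {\bf R}_k, P{\bf M}_k\rangle = \sigma_1({\bf R}_k)$ and $\|P{\bf M}_k\|_F \le 1$ make everything else mechanical. A secondary point worth checking is the degenerate case $P{\bf M}_k = 0$: there $\sigma_1({\bf R}_k)^2 = \langle {\bf R}_k, {\bf M}_k\rangle^2 = \langle {\bf R}_k, P{\bf M}_k\rangle^2 = 0$ forces ${\bf R}_k = 0$, so the algorithm has already converged and the bound holds trivially.
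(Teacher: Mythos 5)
Your proof is correct and follows exactly the route the paper intends: the paper omits the argument entirely, stating only that Theorem~\ref{convergence_rate_MS} follows ``using the same procedure as the proof of Theorem~\ref{convergence_rate},'' and you supply the one genuinely nontrivial ingredient that makes this transfer legitimate, namely that $P={\cal A}^{-1}{\cal A}$ (vectorized, ${\bf A}^\top({\bf A}{\bf A}^\top)^{-1}{\bf A}$) is a self-adjoint orthogonal projector whose range contains every residual, so that $\langle {\bf R}_k, P{\bf M}_k\rangle=\sigma_1({\bf R}_k)$ and $\|P{\bf M}_k\|_F\le 1$. Your per-step decrease uses the single-coefficient bound of Property~\ref{lineardependentNew} rather than the QR-factorization argument of Property~\ref{residual_bound}, but both appear in the paper and yield the identical contraction factor, so this is a cosmetic rather than substantive deviation.
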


We now show ${\bf M}({\bm \theta}^{(k)})$ approximates the exact matrix ${\bf Y}$ as $k$ large. In the setting of matrix sensing, we are able to use the rank-RIP condition. Let us recall  
\begin{definition}
Let $\mathcal{A}$ be a linear map on linear space of matrices of size $m\times n$ with $m\le n$.  For every integer $r$ with $1 \leq r \leq m$, 
let the rank $r$ restricted isometry constant  be the smallest number $\delta_r(\mathcal{A})$ such that 
\[
(1 - \delta_r(\mathcal{A})) \|{\bf X}\|^2_F \leq \| \mathcal{A}({\bf X}) \|_2^2 \leq (1 + \delta_r(\mathcal{A})) \|{\bf X}\|^2_F
\]
holds for all matrices ${\bf X}$ of rank at most $r$.
\end{definition}

It is known that some random matries ${\bf A}={\cal A}\hbox{vec}$ satisfies the rank-RIP condition with high probability \cite{Recht10}. Armed with the rank-RIP condition, we are able to establish the following convergence result:
\begin{theorem} \label{RIPconverge}
 Let ${\bf Y}$ be a matrix of rank $r$. Suppose the measurement mapping $\mathcal{A}({\bf X})$ satisfies rank-RIP  for rank-$r_0$ with $\delta_{r_0}=\delta_{r_0}({\cal A})<1$ with $r_0\ge 2r$. The output matrix ${\bf M}({\bm \theta}^{(k)})$ from Algorithm~\ref{OMP_MS} approximates the exact matrix ${\bf Y}$ in the following sense: there is a positive constant
$\tau<1$ such that 
\[
\|{\bf M}({\bm \theta}^{(k)}) - {\bf Y}\|_F \leq  \frac{C}{\sqrt{1-\delta_{r_0}} }\tau^{k} ,
\]
for all $k=1, \cdots, r_0-r$, where $C > 0$ is a constant dependent on ${\cal A}$.
\end{theorem}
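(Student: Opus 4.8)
The plan is to bound the reconstruction error $\|{\bf M}({\bm \theta}^{(k)}) - {\bf Y}\|_F$ by the residual $\|{\bf R}_{k+1}\|_F$, which by Theorem~\ref{convergence_rate_MS} already decays geometrically, and then use the rank-RIP to transfer that decay to the error itself. Writing ${\bf E}_k = {\bf Y} - {\bf M}({\bm \theta}^{(k)})$, the target quantity is exactly $\|{\bf E}_k\|_F$. The first step is to record the identity ${\bf R}_{k+1} = \mathcal{A}^{-1}\mathcal{A}({\bf E}_k)$: since ${\bf R}_0 = \mathcal{A}^{-1}({\bf b}) = \mathcal{A}^{-1}\mathcal{A}({\bf Y})$, Step 3 of Algorithm~\ref{OMP_MS} gives ${\bf R}_{k+1} = {\bf R}_0 - \mathcal{A}^{-1}\mathcal{A}({\bf M}({\bm \theta}^{(k)})) = \mathcal{A}^{-1}\mathcal{A}({\bf Y} - {\bf M}({\bm \theta}^{(k)}))$ by linearity.

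The second step is the rank accounting that explains the restriction $1 \le k \le r_0 - r$. Each ${\bf M}_i$ is rank one, so ${\bf M}({\bm \theta}^{(k)}) = \sum_{i=1}^k \theta^{(k)}_i {\bf M}_i$ has rank at most $k$, and hence ${\bf E}_k$ has rank at most $r + k$. For $k \le r_0 - r$ this is at most $r_0$, so the rank-$r_0$ restricted isometry property applies to ${\bf E}_k$. The hypothesis $r_0 \ge 2r$ simply guarantees that this admissible range of $k$ reaches $k = r$ and beyond, so that the bound covers the regime in which ${\bf M}({\bm \theta}^{(k)})$ carries enough rank to recover ${\bf Y}$.

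The third and central step is to convert $\|{\bf R}_{k+1}\|_F$ into a lower bound on a quantity that the RIP controls. The key observation is that, after vectorization, $\mathcal{A}^{-1}\mathcal{A}$ is the orthogonal projection ${\bf A}^\top({\bf A}{\bf A}^\top)^{-1}{\bf A}$ onto the row space of ${\bf A}$, and $\mathcal{A}$ annihilates its orthogonal complement; consequently $\mathcal{A}\mathcal{A}^{-1}\mathcal{A} = \mathcal{A}$, so $\mathcal{A}({\bf R}_{k+1}) = \mathcal{A}(\mathcal{A}^{-1}\mathcal{A}({\bf E}_k)) = \mathcal{A}({\bf E}_k)$. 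Applying the lower RIP bound to the rank-$\le r_0$ matrix ${\bf E}_k$ and then this identity yields $\sqrt{1-\delta_{r_0}}\,\|{\bf E}_k\|_F \le \|\mathcal{A}({\bf E}_k)\|_2 = \|\mathcal{A}({\bf R}_{k+1})\|_2 \le \sigma_{\max}({\bf A})\,\|{\bf R}_{k+1}\|_F$, hence $\|{\bf E}_k\|_F \le \frac{\sigma_{\max}({\bf A})}{\sqrt{1-\delta_{r_0}}}\|{\bf R}_{k+1}\|_F$. I expect this to be the main obstacle, precisely because the RIP is stated for $\mathcal{A}$ while the algorithm and its residual live in the range of $\mathcal{A}^{-1}\mathcal{A}$; the projection identity $\mathcal{A}\mathcal{A}^{-1}\mathcal{A} = \mathcal{A}$ is what allows the RIP to be invoked cleanly, at the cost of the operator-norm factor $\sigma_{\max}({\bf A})$, which will be absorbed into the constant $C$.

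Finally, I would combine this with Theorem~\ref{convergence_rate_MS}, which gives $\|{\bf R}_{k+1}\|_F \le \tau^{k}\,\|\mathcal{A}^{-1}({\bf b})\|_F$ with $\tau = \sqrt{1 - 1/\min(m,n)} < 1$. Substituting produces $\|{\bf M}({\bm \theta}^{(k)}) - {\bf Y}\|_F = \|{\bf E}_k\|_F \le \frac{C}{\sqrt{1-\delta_{r_0}}}\tau^{k}$, with $C = \sigma_{\max}({\bf A})\,\|\mathcal{A}^{-1}({\bf b})\|_F$ a constant depending only on $\mathcal{A}$ and the measurements ${\bf b}$, valid for all $1 \le k \le r_0 - r$, which is the claimed estimate.
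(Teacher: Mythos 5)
Your proposal is correct and follows essentially the same route as the paper: apply the lower rank-RIP bound to ${\bf E}_k = {\bf Y} - {\bf M}({\bm \theta}^{(k)})$, use $\mathcal{A}\mathcal{A}^{-1} = {\bf I}$ to identify $\mathcal{A}({\bf E}_k)$ with $\mathcal{A}$ applied to the residual, bound that by $\|{\bf A}\|_2$ times the Frobenius norm of the residual, and invoke Theorem~\ref{convergence_rate_MS}. You are in fact somewhat more careful than the paper on two points it glosses over --- the explicit rank accounting ($\mathrm{rank}({\bf E}_k) \le r + k \le r_0$, which justifies both invoking the RIP and the range $k \le r_0 - r$) and the residual index (the relevant residual is ${\bf R}_{k+1}$, which is what makes the exponent $\tau^k$ come out right).
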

\begin{proof} Using the definition of $\delta_{r_0}$,  for $k+r\le r_0$, we have 
\[
\begin{array}{cl}
(1 - \delta_{r_0}) \|{\bf M}({\bm \theta}^k) - {\bf Y}\|^2_F  & \leq  \| \mathcal{A}({\bf M}({\bm \theta}^k)) -\mathcal{A}({\bf Y}) \|_2^2 \\ [8pt]
&  =  \| \mathcal{A}({\bf R}_{k}) \|_2^2 \le  \| {\bf A}\|^2_2 \| {\bf R}_{k} \|_F^2 \le  \| {\bf A}\|^2_2 \tau^{2k}  \| \mathcal{A}^{-1}({\bf b})\|_F^2 .  
\end{array}
\]
by using Theorem~\ref{convergence_rate_MS}, where $\tau= \sqrt{1- \frac{1}{\min\{m, n\}}}$. It follows 
\[
\|{\bf M}({\bm \theta}^k) - {\bf Y}\|^2_F \leq  \frac{\|{\cal A}\|^2 \tau^{2k}}{1-\delta_{r_0}} \| \mathcal{A}^{-1}({\bf b})\|_F^2.
\]
Therefore, we have the desired result. 
\end{proof}

Similarly we can extend our economic algorithm to the setting of matrix sensing. We leave it to the interested readers.


\section{Effect of Inexact Top Singular Vectors}

In our rank-one matrix pursuit algorithms, we need to calculate the top singular vector pair of the residual matrix in each iteration. 
We rewrite it here as
\begin{equation} \label{uv2}
\max\limits_{{\bf u},{\bf v}} \{{\bf u}^T {\bf R}_k {\bf v}: \ \|\bf u\|=\|\bf v\|=1\}.
\end{equation}
We solve this problem efficiently by the power method, which is an iterative method. In practice, we obtain a solution with approximation error less 
than a small tolerance $ \delta_k \ge 0 $, that is
\begin{equation} \label{subuv}
{\tilde{\bf u}}^T {\bf R}_k {\tilde{\bf v}} \geq (1 - \delta_k) \max\limits_{ \|\bf u\|=\|\bf v\|=1 } \{{\bf u}^T {\bf R}_k {\bf v}\}.
\end{equation}
We show that the proposed algorithms still retain the linear convergence rate when the top singular pair computed at each iteration satisfies 
\eqref{subuv} for $0 \le \delta_k <1$. This result is given in the following theorem.

\begin{theorem} \label{nonopt}
  Assume that there is a tolerance parameter $ 0 \le \delta < 1$, such that $\delta_k \le \delta$ for all $k$. Then the orthogonal rank-one matrix 
pursuit algorithms achieve a linear convergence rate
\[
||{\bf R}_{k}|| \ \le \ \left(\sqrt{1-\frac{q^2}{\min(m,n)}}\right)^{k-1} \|\bf Y\|_\Omega,
\]
where $q=1-\delta$ satisfies $ 0 < q \le 1$.
\end{theorem}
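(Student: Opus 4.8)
The plan is to revisit the convergence proofs of Theorems~\ref{convergence_rate} and \ref{convergence_rate2} and isolate the single place where the \emph{exact} top singular pair was used, namely the identity $\langle {\bf M}_k, {\bf R}_k\rangle = \sigma_1({\bf R}_k)$, and replace it by a one-sided estimate coming from \eqref{subuv}. Since the computed pair $(\tilde{\bf u}_k,\tilde{\bf v}_k)$ consists of unit vectors, ${\bf M}_k = \tilde{\bf u}_k\tilde{\bf v}_k^T$ still has unit Frobenius norm, and \eqref{subuv} together with $\delta_k \le \delta$ gives the key inequality
\[
\langle {\bf M}_k, {\bf R}_k\rangle \ = \ \tilde{\bf u}_k^T {\bf R}_k \tilde{\bf v}_k \ \ge \ (1-\delta_k)\,\sigma_1({\bf R}_k) \ \ge \ q\,\sigma_1({\bf R}_k),
\]
with $q = 1-\delta \in (0,1]$. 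Wherever the earlier arguments invoked $\langle {\bf M}_k, {\bf R}_k\rangle = \sigma_1({\bf R}_k)$, I would instead carry this lower bound, so that the factor $q$ propagates into the contraction estimate.

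Before running the contraction I would verify that the preparatory properties survive inexactness. Properties~\ref{orthogonal}, \ref{residual}, \ref{residual_bound} (for OR1MP) and Properties~\ref{orthogonal2}--\ref{residual2}, \ref{lineardependentNew} (for EOR1MP) depend only on the least-squares optimality of the weights and on $\|{\bf M}_k\|=1$, so they are unaffected. The one place needing care is Property~\ref{near_orthogonal}: its proof concluded ${\bf R}_i=0$ from the chain $\langle {\bf R}_i,{\bf M}_i\rangle = \sigma_1({\bf R}_i)=0$. With an inexact pair that equality is lost, but the key inequality still yields $0 = \langle {\bf R}_i,{\bf M}_i\rangle \ge q\,\sigma_1({\bf R}_i) \ge 0$, and since $q>0$ this forces $\sigma_1({\bf R}_i)=0$, hence ${\bf R}_i=0$; the full-column-rank conclusion (and the analogous Property~\ref{near_orthogonal2} for the economic version) then goes through verbatim after this substitution.

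With these in hand I would close the argument separately for the two algorithms, using in both cases the exactness-free singular-value bound $\sigma_1({\bf R}_k)\ge \|{\bf R}_k\|/\sqrt{\min(m,n)}$ (the content of Property~\ref{resprojection2}). For OR1MP, combining the key inequality with this bound and Property~\ref{residual_bound} gives
\[
\|{\bf R}_{k+1}\|^2 \ \le \ \|{\bf R}_k\|^2 - \langle {\bf M}_k,{\bf R}_k\rangle^2 \ \le \ \Big(1-\tfrac{q^2}{\min(m,n)}\Big)\|{\bf R}_k\|^2.
\]
For EOR1MP, I would stop the proof of Property~\ref{lineardependentNew} at its intermediate form $\|{\bf R}_{k+1}\|^2 \le \|{\bf R}_k\|^2 - \langle {\bf R}_k,{\bf M}_k\rangle^2/\langle {\bf M}_k,{\bf M}_k\rangle_\Omega$, discard the denominator via $\langle {\bf M}_k,{\bf M}_k\rangle_\Omega\le 1$, and then insert the key inequality together with the singular-value bound to reach the same per-step factor $1-q^2/\min(m,n)$. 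Iterating from $\|{\bf R}_1\|=\|{\bf Y}\|_\Omega$ yields the claimed estimate in both cases. The only genuinely delicate point is the Property~\ref{near_orthogonal} step, where the lost equality must be traded for the positivity $q>0$; everything else is a mechanical repetition of the two earlier proofs with $\sigma_1({\bf R}_k)$ replaced by $q\,\sigma_1({\bf R}_k)$.
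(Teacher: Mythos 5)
Your proposal is correct, and its core is the same as the paper's: replace the exact identity $\langle {\bf M}_k,{\bf R}_k\rangle=\sigma_1({\bf R}_k)$ by the lower bound $\langle {\bf M}_k,{\bf R}_k\rangle\ge(1-\delta_k)\sigma_1({\bf R}_k)\ge q\,\sigma_1({\bf R}_k)$, combine it with $\sigma_1({\bf R}_k)\ge\|{\bf R}_k\|/\sqrt{\min(m,n)}$, and telescope. The one place you diverge is the OR1MP branch. The paper does not reuse Property~\ref{residual_bound} there; instead it bounds the full least-squares refit by a single-coordinate refit, $\|{\bf R}_{k+1}\|^2=\min_{\bm\theta}\|{\bf Y}-\sum_i\theta_i\tilde{\bf M}_i\|^2_\Omega\le\min_{\theta_k}\|{\bf R}_k-\theta_k\tilde{\bf M}_k\|^2_\Omega$, which lets it treat OR1MP and EOR1MP with one identical computation and, crucially, avoids any appeal to the linear independence of $\{({\bf M}_i)_\Omega\}$. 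Your route instead keeps the QR-based Property~\ref{residual_bound}, which forces you to re-verify Property~\ref{near_orthogonal} under inexactness; your patch there is valid (from $0=\langle{\bf R}_i,{\bf M}_i\rangle\ge q\,\sigma_1({\bf R}_i)\ge 0$ and $q>0$ you correctly force $\sigma_1({\bf R}_i)=0$). Both derivations land on the same per-step bound $\|{\bf R}_{k+1}\|^2\le\|{\bf R}_k\|^2-\langle{\bf M}_k,{\bf R}_k\rangle^2$, so nothing is gained in the final constant; the paper's relaxation is simply the more economical path, while yours has the small side benefit of confirming that the weight vector remains well defined (i.e., $\bar{\bf M}_k$ retains full column rank) even with inexact singular pairs.
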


\begin{proof}
In Step 1 of our algorithms, we iteratively solve the problem \eqref{uv2} using the power method. In this method, we stop the iteration such that
\[
{\tilde{\bf u}}^T_k {{\bf R}}_k {\tilde{\bf v}}_k \geq (1 - \delta_k)\max\limits_{{\|\bf u \| = 1},{\| \bf v\| = 1}} \{{\bf u}^T {\bf R}_k {\bf v}\} \geq 0,
\]
with $0 \le \delta_k \le \delta <1$. Denote ${\tilde{\bf M}}_k = {\tilde{\bf u}}_k {\tilde{\bf v}}^T_k $ as the generated basis. Next, we show that the following holds for both OR1MP and EOR1MP:
\[
\| {{\bf R}}_{k+1} \|^2 \le \| {{\bf R}}_k \|^2 - \langle {\tilde{\bf M}}_k, {{\bf R}}_k \rangle^2.
\]

For OR1MP algorithm, we have
\[
\begin{array}{lcl}
\|{{\bf R}}_{k+1}\|^2 &=& \min\limits_{{\bm \theta} \in \Re^k } \|{\bf Y} - \sum^k_{i=1}{\theta}_i{\tilde{\bf M}}_{i} \|^2_\Omega \\ [8pt]
& \le &  \min\limits_{ {\theta}_k \in \Re } \|{\bf Y} - {{\bf X}}_{k-1} - {\theta}_k{\tilde{\bf M}}_{k} \|^2_\Omega \\ [8pt]
&=&  \min\limits_{ {\theta}_k \in \Re } \| {{\bf R}}_{k} - {\theta}_k {\tilde{\bf M}}_{k} \|^2_\Omega.
\end{array}
\]

For EOR1MP algorithm, we have
\[
\begin{array}{lcl}
\| {{\bf R}}_{k+1}\|^2 &=& \min\limits_{{\bm \alpha} \in \Re^2 } \|{\bf Y} - {\alpha}_1 {{\bf X}}_{k-1} - {\alpha}_2{\tilde{\bf M}}_{k} \|^2_\Omega \\ 
[8pt]
& \le &  \min\limits_{ {\alpha}_2 \in \Re } \|{\bf Y} - {{\bf X}}_{k-1} - {\alpha}_2{\tilde{\bf M}}_{k} \|^2_\Omega \\ [8pt]
& = &  \min\limits_{ {\alpha}_2 \in \Re } \| {{\bf R}}_{k} - {\alpha}_2 {\tilde{\bf M}}_{k} \|^2_\Omega.
\end{array}
\]

In both cases, we obtain closed form solutions as $\frac{\langle {{\bf R}}_k, {\tilde{\bf M}}_k \rangle}{\langle {\tilde{\bf M}}_k, {\tilde{\bf M}}_k 
\rangle_{\Omega}}$. Plugging the optimum solution into the corresponding formulations, we get
\[
\begin{array}{lcl}
\| {{\bf R}}_{k+1}\|^2 & \le & \| {{\bf R}}_{k} - \frac{\langle {{\bf R}}_k, {\tilde{\bf M}}_k \rangle}{\langle {\tilde{\bf M}}_k, {\tilde{\bf M}}_k 
\rangle} {\tilde {\bf M}}_{k} \|^2_{\Omega} \\ [8pt]
& = & \| {{\bf R}}_{k} \|^2 - \frac{\langle  {{\bf R}}_k, {\tilde{\bf M}}_k\rangle^2}{\langle {\tilde{\bf M}}_k ,{\tilde{\bf M}}_k  \rangle^2_\Omega} {
\langle {\tilde{\bf M}}_k ,{\tilde{\bf M}}_k  \rangle_{\Omega} }\\ [8pt]
& \le & \| {{\bf R}}_{k} \|^2 - \langle  {{\bf R}}_k, {\tilde{\bf M}}_k\rangle^2,
\end{array}
\]
as ${\langle {\tilde{\bf M}}_k ,{\tilde{\bf M}}_k  \rangle_{\Omega} } \le 1$. It follows from Property~\ref{lineardependent2} and Property~\ref
{resprojection2} that
\[
\langle  {{\bf R}}_k, {\tilde{\bf M}}_k\rangle \ge (1 - \delta_k)\sigma_1({{\bf R}}_{k}) \ge (1 - \delta_k) \frac{\| {{\bf R}}_k\|}{ \sqrt{rank({{\bf 
R}}_k)} }.
\]
Combining the above two results, we get
\[
\| {{\bf R}}_{k+1} \|^2  \le \left( 1 - \frac{(1- \delta_k)^2}{{\min(m,n)}} \right) \| {{\bf R}}_k \|^2.
\]

In view of this relation and the fact that $\| {{\bf R}}_1 \| = \|\bf Y\|^2_\Omega$, we conclude that
\[
||{{\bf R}}_k|| \ \le \ \left(\sqrt{1-\frac{q^2}{\min(m,n)}}\right)^{k-1} \|\bf Y\|_\Omega,
\]
where $q = 1 - \delta \le \inf (1-\delta_k) = 1 - \sup \delta_k$, and is a constant between $(0,1]$. This completes the proof. \end{proof}

\section{Experiments}

In this section, we compare the two versions of our algorithms, e.g. OR1MP and EOR1MP, with several state-of-the-art matrix completion methods in the literature. The competing algorithms include: singular value projection (SVP)~\cite{Meka10}, singular value thresholding (SVT)~\cite{Candes09}, Jaggi's fast algorithm for trace norm constraint (JS)~\cite{Jaggi10}, spectral regularization algorithm (SoftImpute)~\cite{Mazumder10}, low rank matrix fitting (LMaFit)~\cite{Yuan10}, boosting type accelerated matrix-norm penalized solver (Boost)~\cite{Zhang12}, atomic decomposition for minimum rank approximation (ADMiRA)~\cite{Lee10} and greedy efficient component optimization (GECO)~\cite{Shwartz11}. The first three solve trace norm constrained problems; the next three solve trace norm penalized problems; the last two directly solves the low rank constrained problem. The general greedy method~\cite{Tewari11} is not included in our comparison, as it includes JS and GECO (included in our comparison) 
as special cases for matrix completion. The lifted coordinate descent method (Lifted) \cite{Dudik12} is not included in our comparison as it is very sensitive to the parameters and is less efficient than Boost proposed in \cite{Zhang12}.

The code for most of these methods are available online:
\begin{itemize}
\item singular value projection (SVP):  \\
{http://www.cs.utexas.edu/$\sim$pjain/svp/} 
\item singular value thresholding (SVT): \\
{http://svt.stanford.edu/} 
\item spectral regularization algorithm (SoftImpute): \\
{http://www-stat.stanford.edu/$\sim$rahulm/software.html} 
\item low rank matrix fitting (LMaFit): \\
{http://lmafit.blogs.rice.edu/} 
\item boosting type solver (Boost): \\
{http://webdocs.cs.ualberta.ca/$\sim$xinhua2/boosting.zip} 
\item greedy efficient component optimization (GECO): \\
{http://www.cs.huji.ac.il/$\sim$shais/code/geco.zip}
\end{itemize}

We compare these algorithms in two settings: one is image recovery and the other one is collaborative filtering or recommendation problem. The data size for image recovery is relatively small, and the recommendation problem is in large-scale. All the competing methods are implemented in {\textsc MATLAB}\footnote{GECO is written in C++ and we call its executable file in MATLAB.} and call some external packages for fast computation of SVD\footnote{PROPACK is used in SVP, SVT, SoftImpute and Boost. It is an efficient SVD package, which is implemented in C and Fortran. It can be downloaded from \url{http://soi.stanford.edu/~rmunk/PROPACK/}} and sparse matrix computations. The experiments are run in a PC with WIN7 system, Intel 4 core 3.4 GHz CPU and 8G RAM.

In the following experiments, we follow the recommended settings of the parameters for competing algorithms. If no recommended parameter value is available, we choose the best one from a candidate set using cross validation. For our OR1MP and EOR1MP algorithms, we only need a stopping criterion. For simplicity, we stop our algorithms after $r$ iterations. In this way, we approximate the ground truth using a rank-$r$ matrix. We present the experimental results using three metrics, {\it peak signal-to-noise ratio} (PSNR) \cite{PSNR} and {\it root-mean-square error} (RMSE) \cite{Koren08}. PSNR is a test metric specific for images. A higher value in PSNR generally indicates a better quality \cite{PSNR}. RMSE is a general metric for prediction. It measures the approximation error of the corresponding result.

\subsection{Convergence and Efficiency}
Before we present the numerical results from these comparison experiments, we shall include another algorithm called the forward rank-one matrix pursuit algorithm (FR1MP), which extends the matching pursuit method from the vector case to the matrix case. The detailed procedure of this method is given in Algorithm~\ref{FR1MU_MC}. 
\begin{algorithm}[t!]
\caption{Forward Rank-One Matrix Pursuit (FR1MP)}
    \label{FR1MU_MC}
\begin{algorithmic}
   \STATE {\bfseries Input:} {${\bf Y}_\Omega$ and stopping criterion.}
   \STATE {\bfseries Initialize:} {Set ${\bf X}_0 = 0$, ${\bm \theta}^0 = 0$ and $k=1$.}
   \REPEAT
   {
       \STATE {\bfseries Step 1}: Find a pair of top left and right singular vectors $({\bf u}_k, {\bf v}_k)$
of the observed residual matrix ${\bf R}_k = {\bf Y}_\Omega - {\bf X}_{k-1}$ and set ${\bf M}_k={\bf u}_k ({\bf
v}_k)^T$.\\
       \STATE {\bfseries Step 2}: Set $\theta^k= ({\bf u}_k^T {\bf R}_k {\bf v}_k) / \|{\bf M}_k\|_\Omega .$\\
       \STATE {\bfseries Step 3}: Set ${\bf X}_k = {\bf X}_{k-1} + \theta^{k}({\bf M}_k)_\Omega$;  $k \leftarrow k+1$. \\
   }
   \UNTIL{ stopping criterion is satisfied }
   \STATE {\bfseries Output:}{ Constructed matrix $\hat {\bf Y} =  \sum^{k}_{i=1} { \theta}^{k}_i {\bf M}_i$. }
\end{algorithmic}
\end{algorithm}

In FR1MP algorithm, we add the pursued rank-one matrix with an optimal weight in each iteration, 
which is similar to the forward selection rule \cite{Hastie09}. This is a standard algorithm to find SVD of any matrix ${\bf Y}$ if all of its entries are given. In this case, the FR1MP algorithm is more efficient in finding SVD of the matrix than our two proposed algorithms. However, when only partial entries are known, the FR1MP algorithm will not be able to find the best low rank solution. The computational step to find $\theta^k$ in both proposed algorithms is necessary.

The empirical results for convergence efficiency of our proposed algorithms are reported in Figure~\ref{fig:eff_OR1MP} and Figure~\ref{fig:eff}. They are based on an image recovery experiment as well as an experiment of completing recommendation dataset, Netflix \cite{Koren08,Bell07,Bennett07}. The Netflix dataset has $10^8$ ratings of 17,770 movies by 480,189 Netflix\footnote{\url{http://www.netflixprize.com/}} customers. This is a large-scale dataset, and most of the competing methods are not applicable for this dataset. In Figure~\ref{fig:eff_OR1MP}, we present the convergence characteristics of the proposed OR1MP algorithm. As the memory demanded is increasing w.r.t. the iterations, we can only run it for about 40 iterations on the Netflix dataset. The EOR1MP algorithm has no such limitation. The results in Figure~\ref{fig:eff} show that our EOR1MP algorithm rapidly reduces the approximation error. We also present the same residual curves in logarithmic scale with relatively large number of iterations in Figure~\ref{fig:log}, which verify the linear convergence property of our algorithms, and this is consistent with our theoretical analysis. 


\begin{figure}[thp!]
\begin{tabular}{@{\hspace{20pt}}c@{\hspace{30pt}}c@{}}
\centering
  \includegraphics[width=0.4\textwidth]{./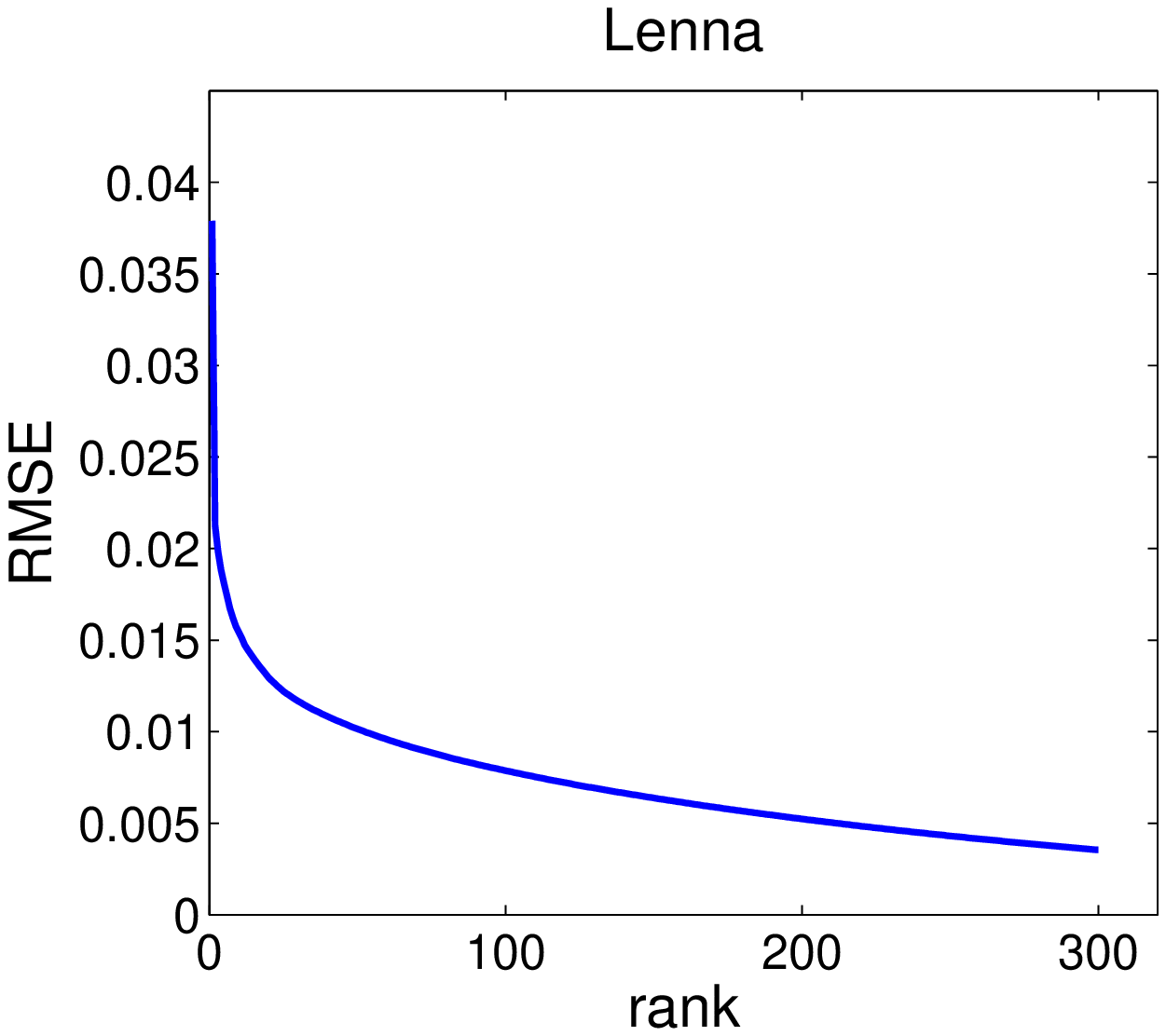} &
  \includegraphics[width=0.4\textwidth]{./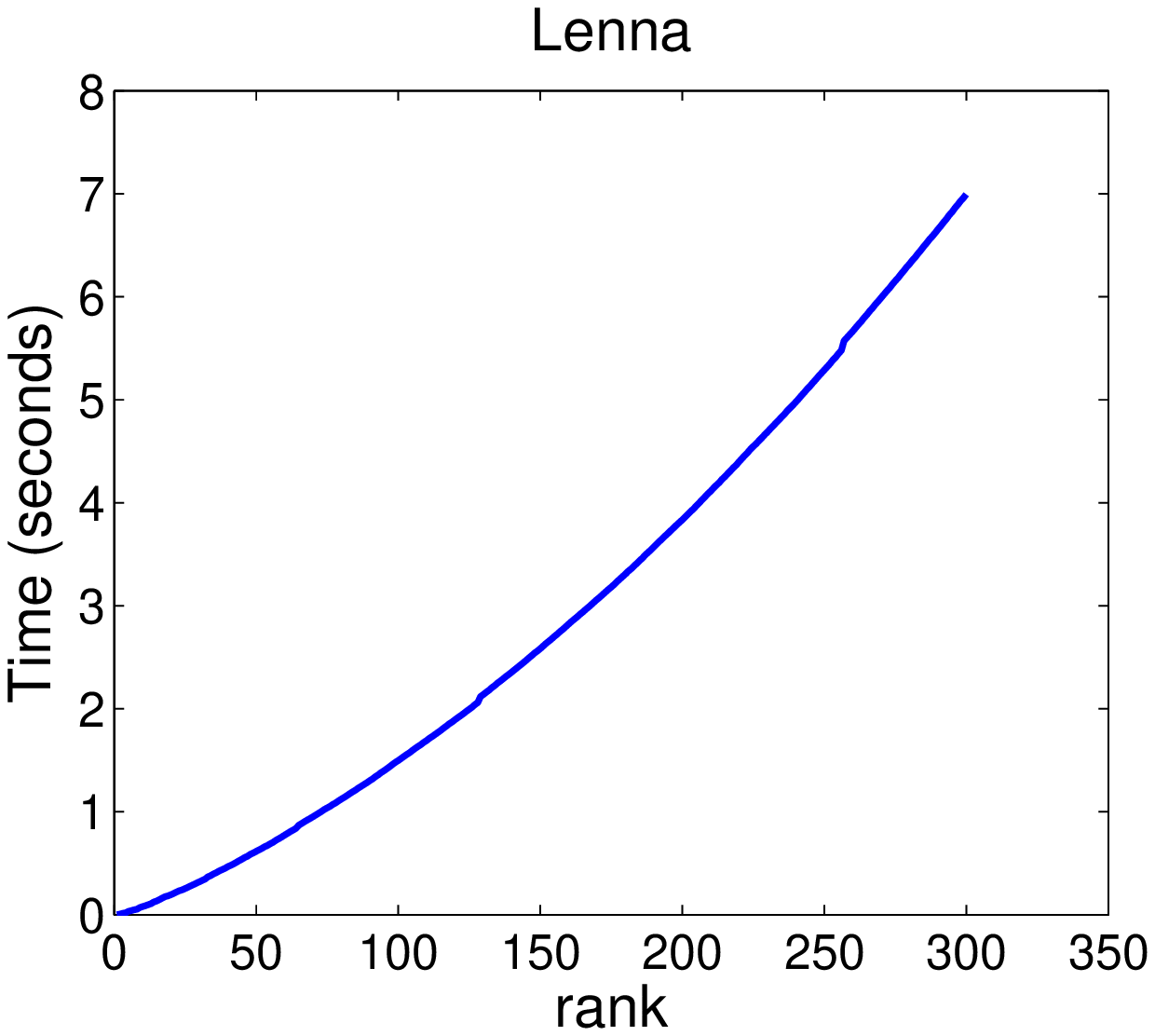} \\
  \includegraphics[width=0.4\textwidth]{./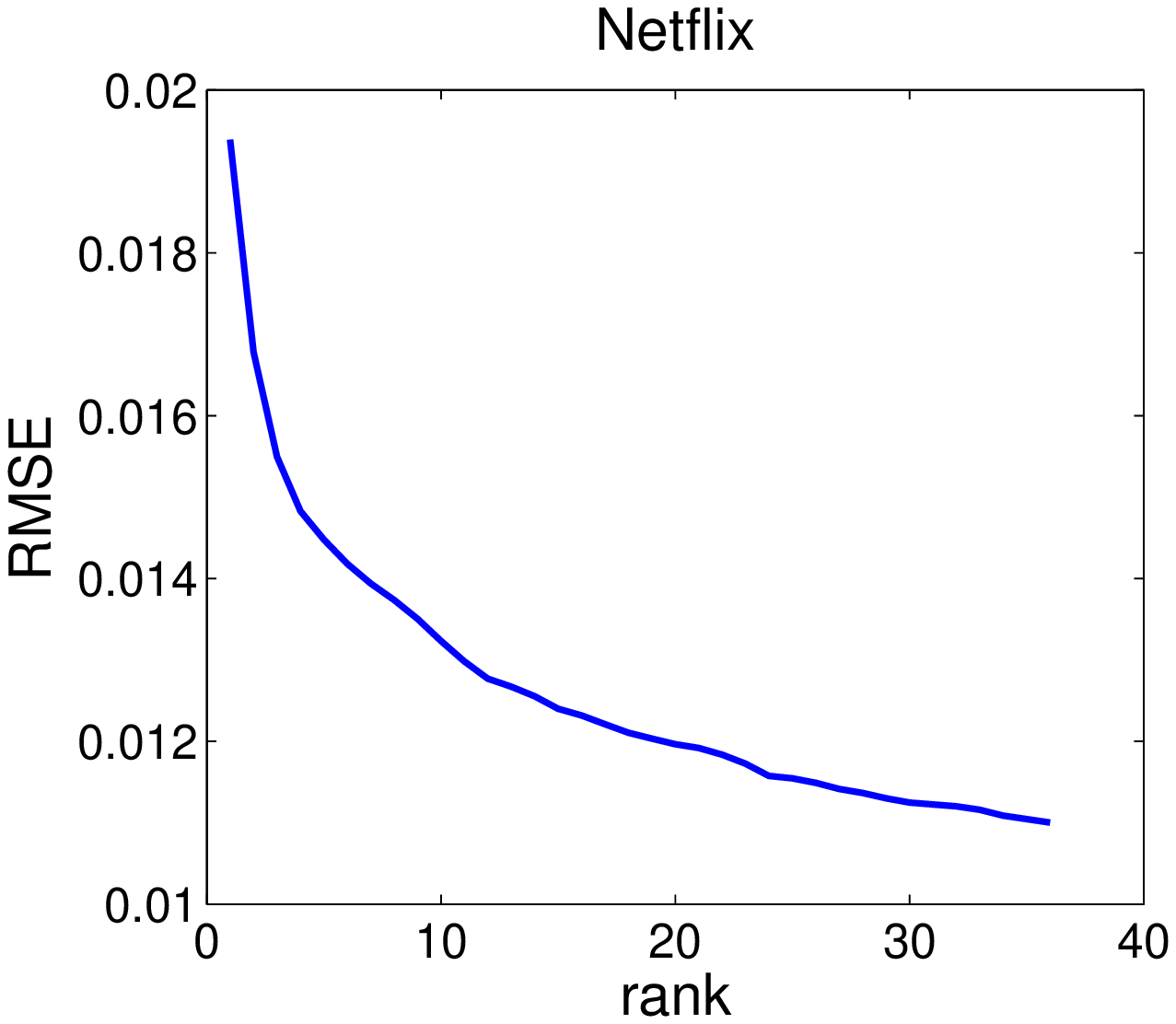} &
  \includegraphics[width=0.4\textwidth]{./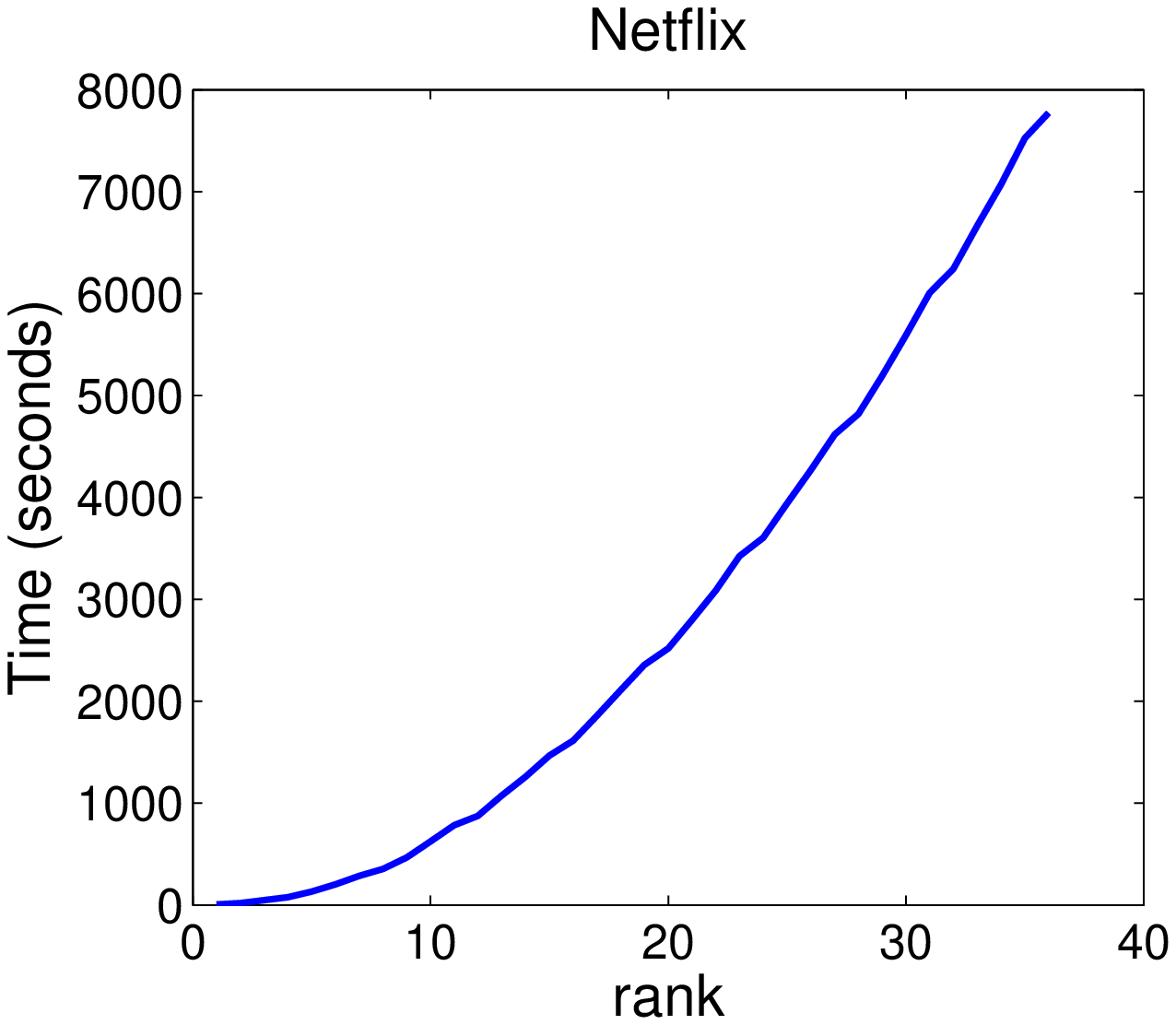}
\end{tabular}
 \caption{Illustration of convergence of the proposed OR1MP algorithm on the Lenna image and the Netflix dataset: the x-axis is the rank, the y-axis is the RMSE (left column), and the running time is measured in seconds (right column).}
  \label{fig:eff_OR1MP}
\end{figure}

\begin{figure}[thp!]
\begin{tabular}{@{\hspace{20pt}}c@{\hspace{30pt}}c@{}}
\centering
  \includegraphics[width=0.4\textwidth]{./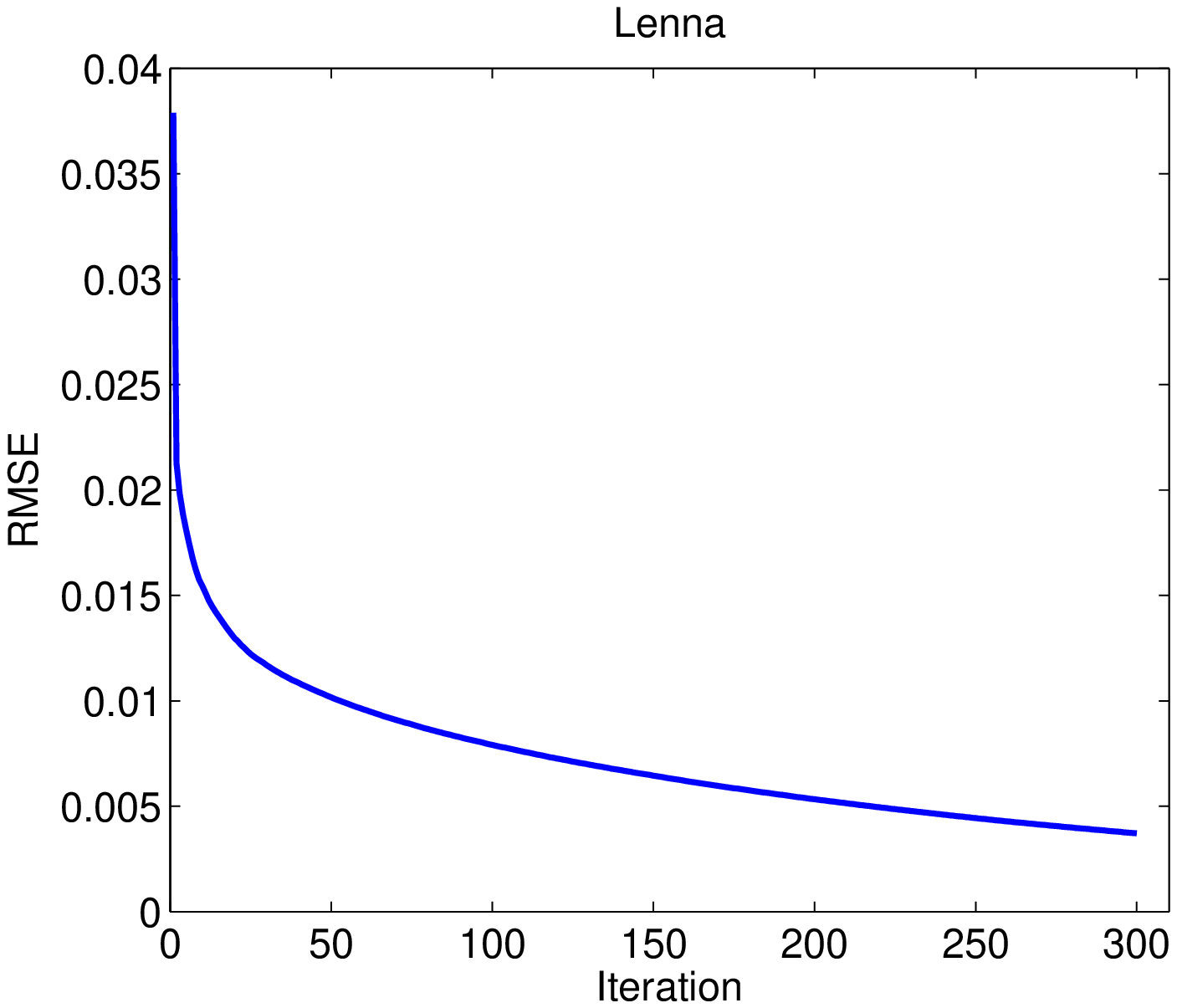} &
  \includegraphics[width=0.4\textwidth]{./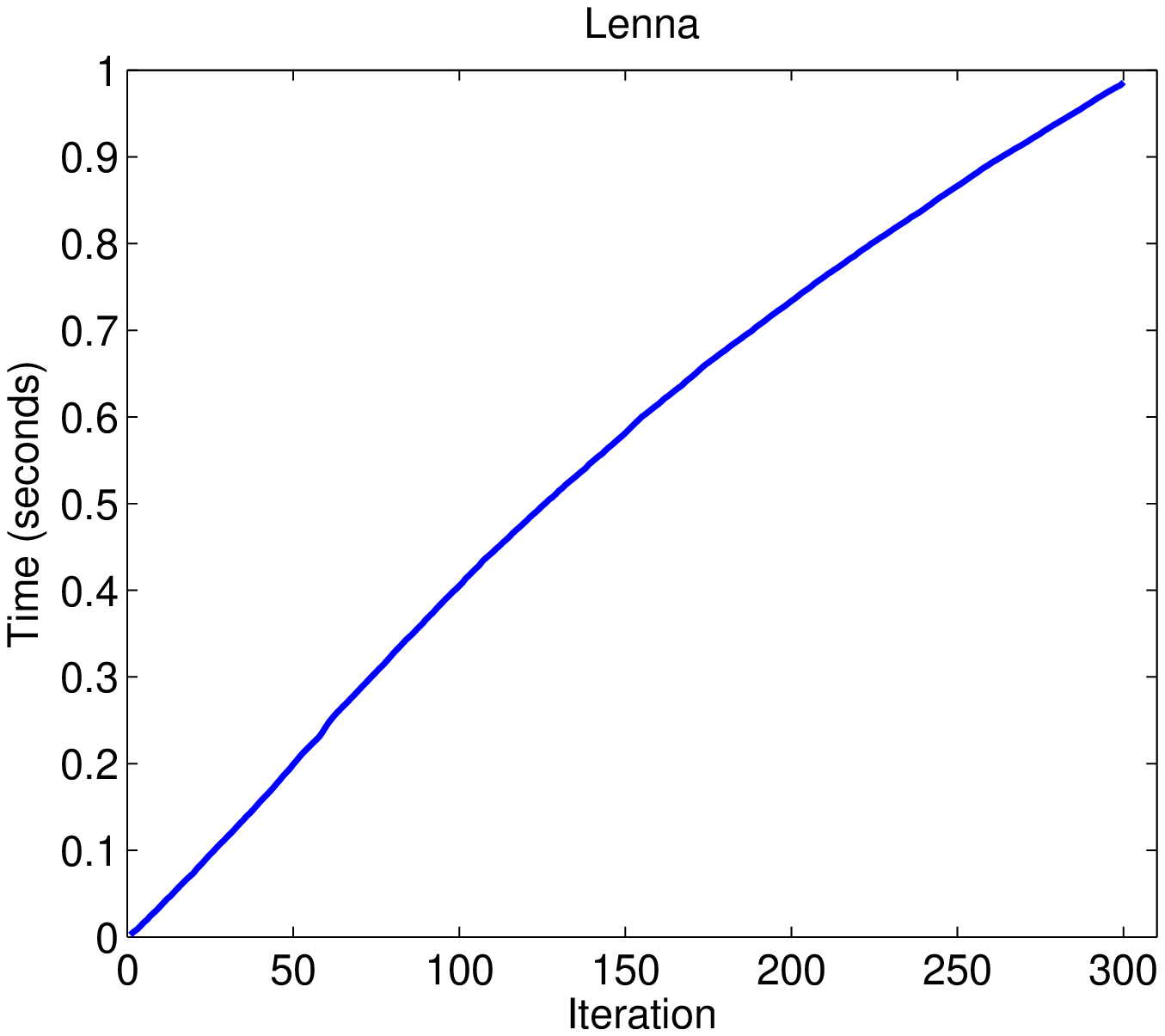} \\
  \includegraphics[width=0.4\textwidth]{./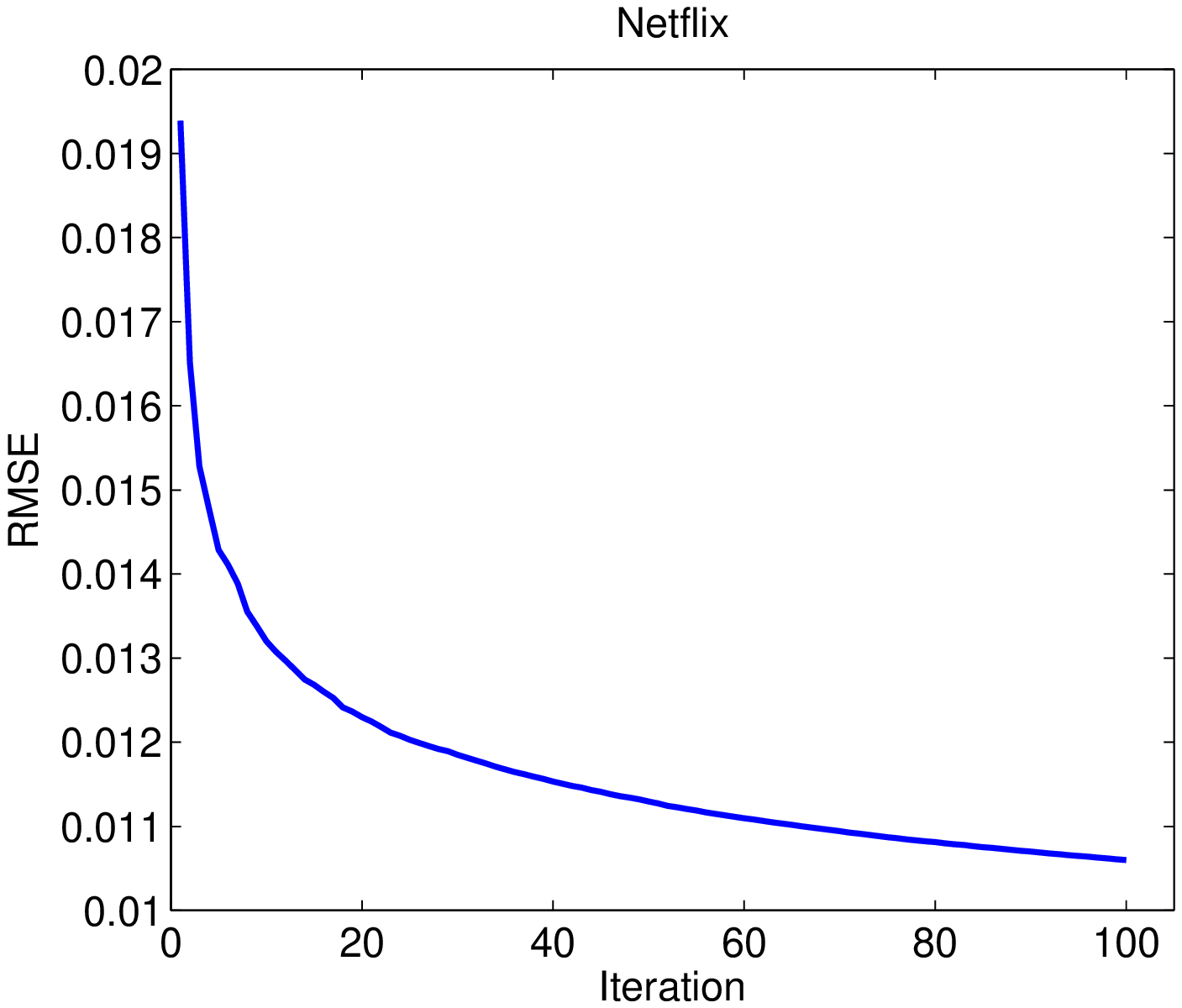} &
  \includegraphics[width=0.4\textwidth]{./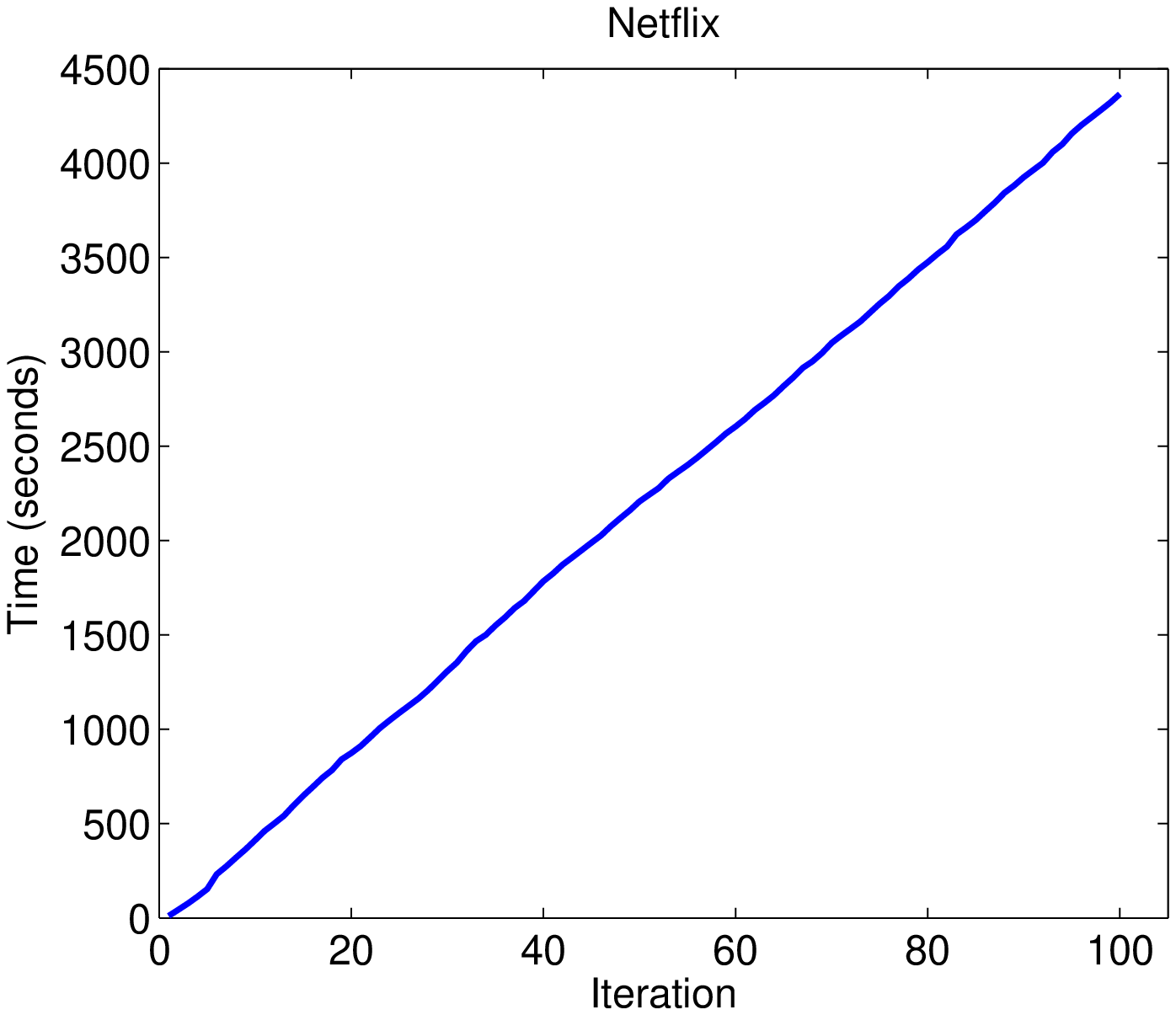}
\end{tabular}
 \caption{Illustration of convergence of the proposed EOR1MP algorithm on the Lenna image and the Netflix dataset: the x-axis is the rank, the y-axis is the RMSE (left column), and the running time is measured in seconds (right column).}
  \label{fig:eff}
\end{figure}

\begin{figure}[thp!]
\begin{tabular}{@{\hspace{20pt}}c@{\hspace{30pt}}c@{}}
\centering
  \includegraphics[width=0.4\textwidth]{./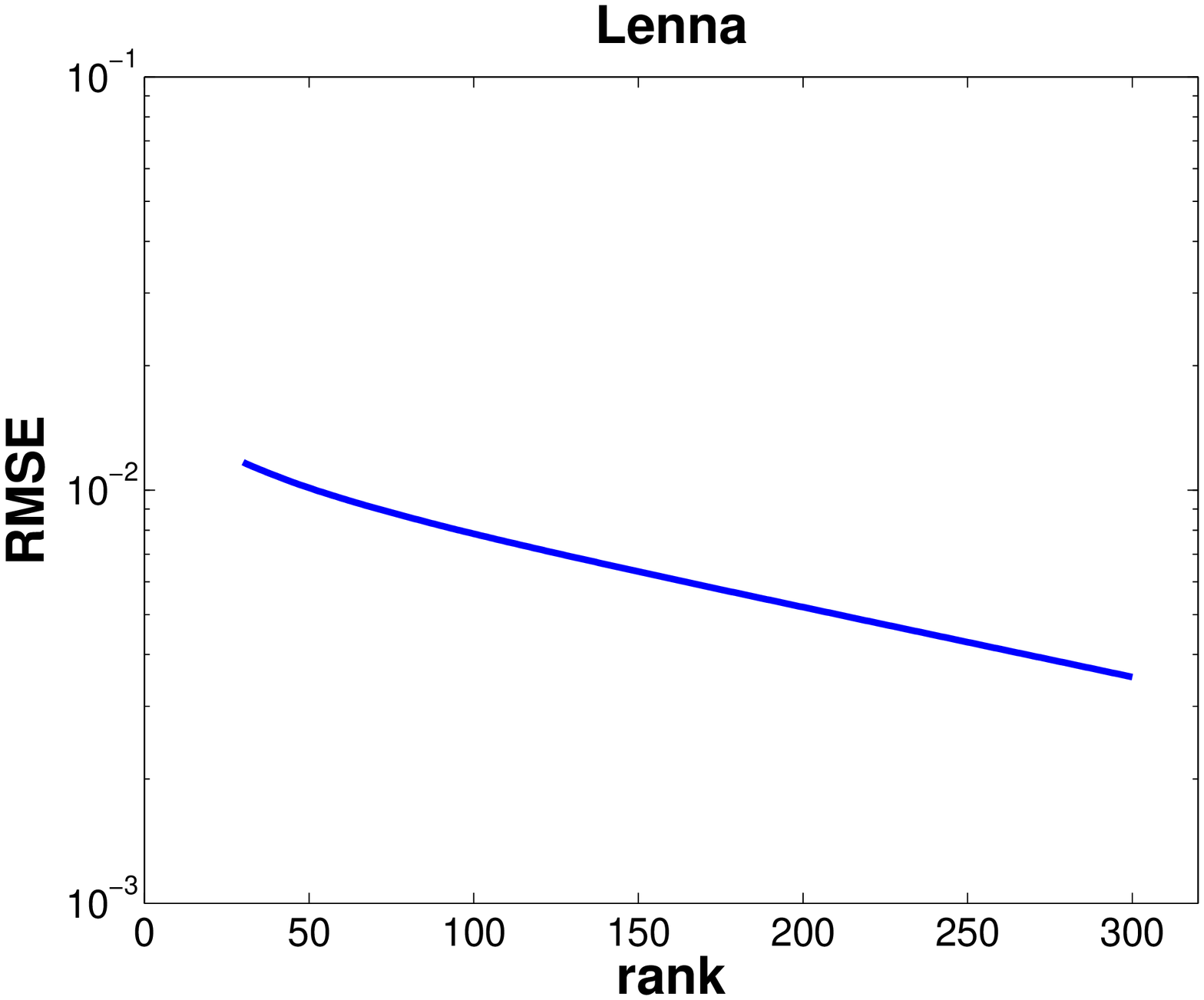} &
  \includegraphics[width=0.41\textwidth]{./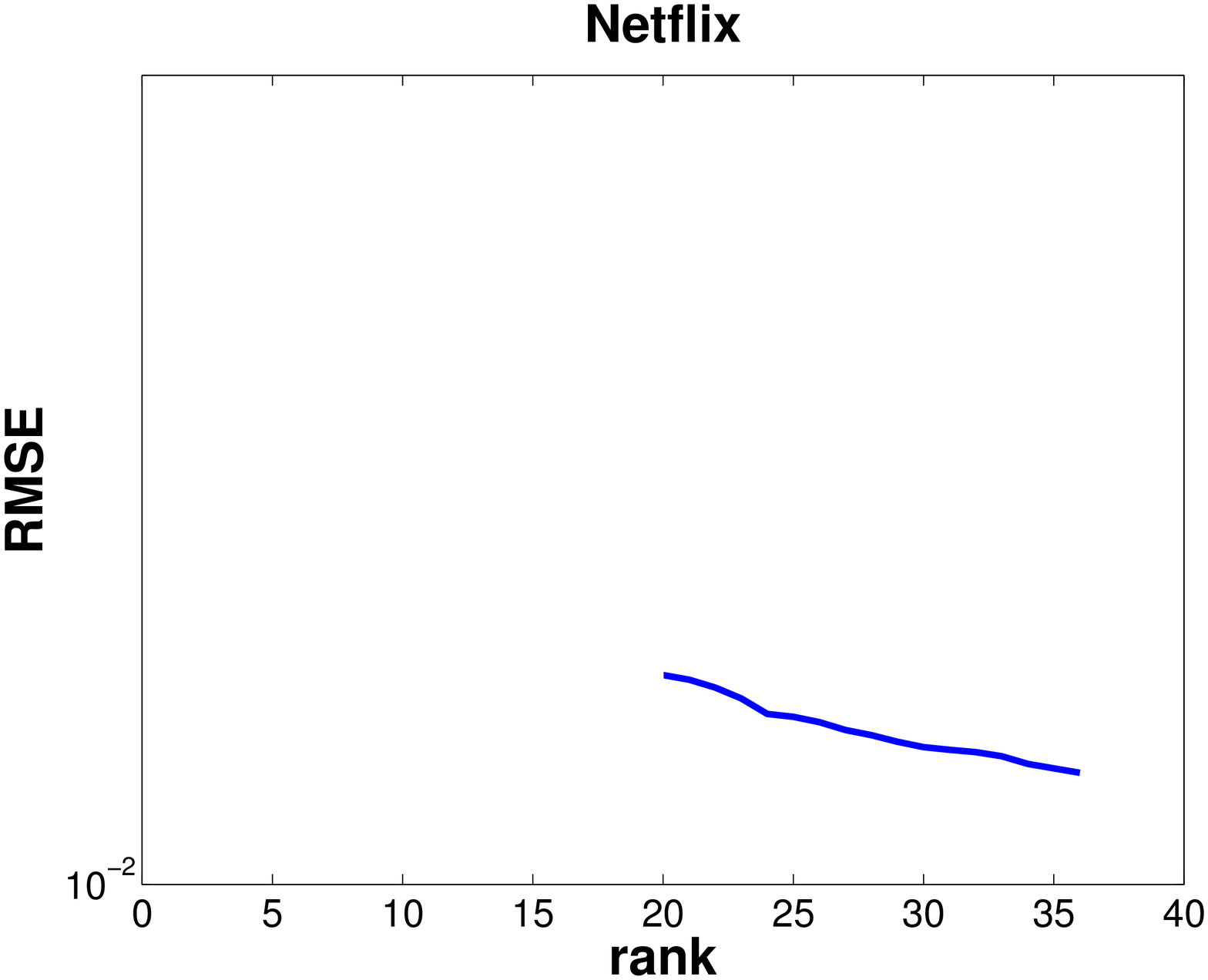} \\
  \includegraphics[width=0.4\textwidth]{./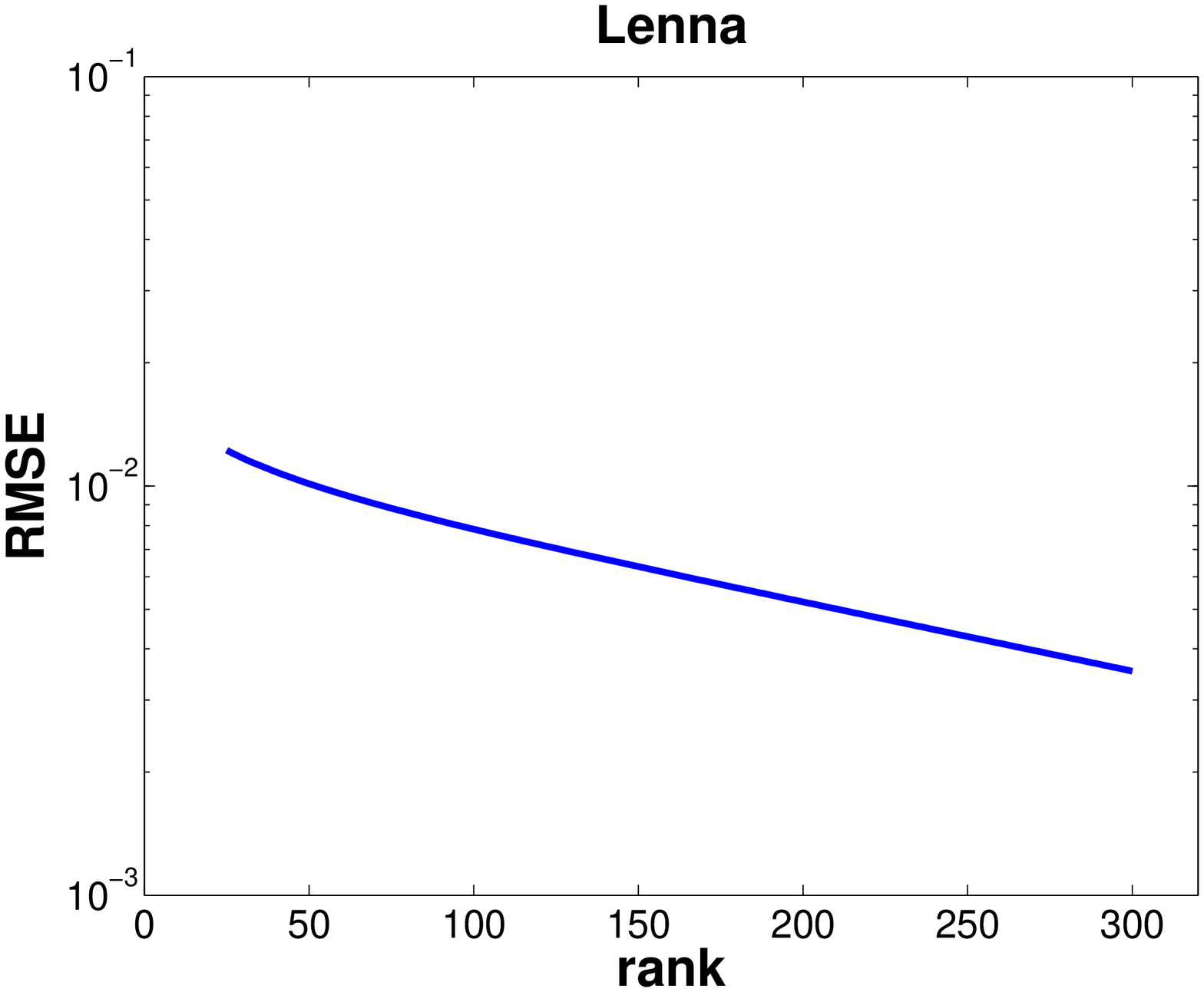} &
  \includegraphics[width=0.4\textwidth]{./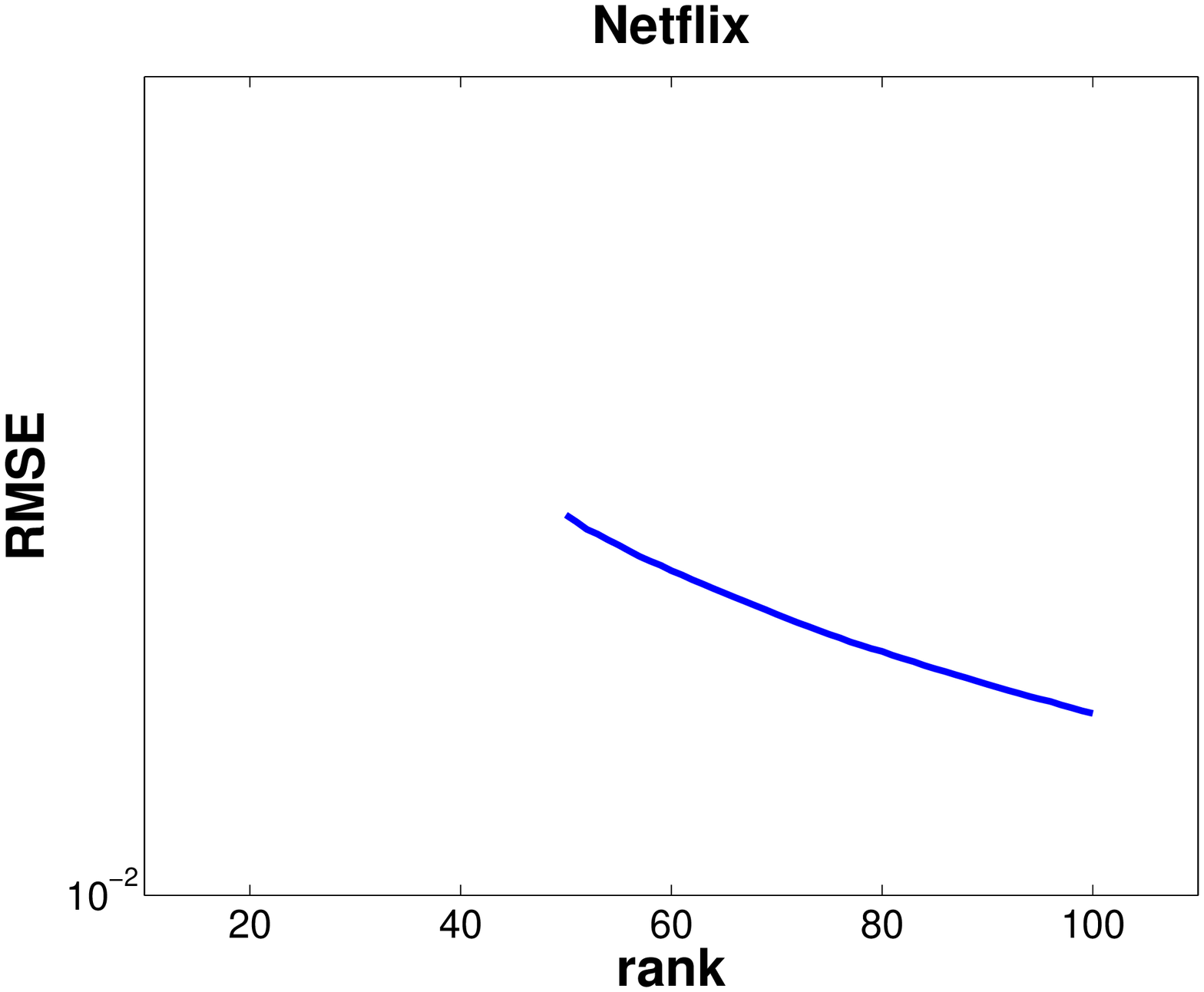} 
\end{tabular}
 \caption{Illustration of the linear convergence of different rank-one matrix pursuit algorithms on the Lenna image and the MovieLen1M dataset: the x-axis is the iteration, the y-axis is the RMSE in log scale. The curves in the first row are the results for OR1MP and the curves in the second row are the results for EOR1MP.}
  \label{fig:log}
\end{figure}

In the convergence analysis, we derive the upper bound for the convergence speed of our proposed algorithms. From Theorem~\ref{convergence_rate} and Theorem~\ref{convergence_rate2}, the convergence speed is controlled by the value of $\| {\bf R}_k \|_F^2 / \sigma_{k,*}^2$, where $ \sigma_{k,*}$ is the maximum singular value of the residual matrix ${\bf R}_k$ in the $k$-th iteration. A smaller value indicates a faster convergence of our algorithms. Though it has a worst case upper bound of $\| {\bf R}_k \|_F^2 / \sigma_{k,*}^2 \leq rank({\bf R}_k) \leq \min(m,n)$, in the following experiments, we empirically verify that its value is much smaller than the theoretical worst case. Thus the convergence speed of our algorithms is much better than the theoretical worst case. We present the values of $\| {\bf R}_k \|_F^2 / \sigma_{k,*}^2$ at different iterations on the Lenna image and the MovieLens1M dataset for both of our algorithms in Figure~\ref{fig:grank}. The results show that the quantity $\| {\bf R}_k \|_F^2 / \sigma_{k,*}^2$ is much smaller than $\min(m,n)$.

\begin{figure}[ht!]
\begin{tabular}{@{\hspace{25pt}}c@{\hspace{22pt}}c@{}}
\centering
  \includegraphics[width=0.4\textwidth]{./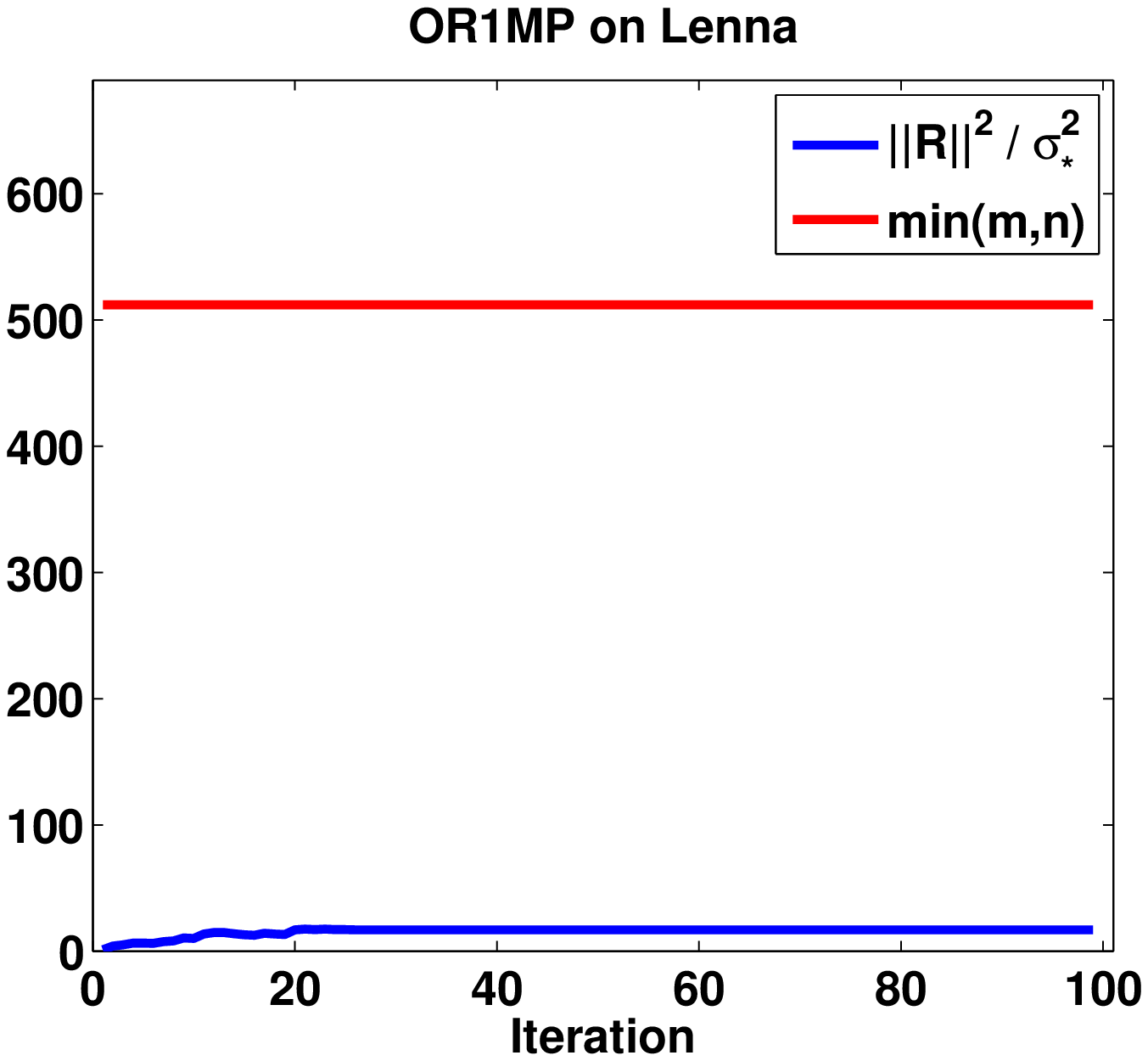} &
  \includegraphics[width=0.4\textwidth]{./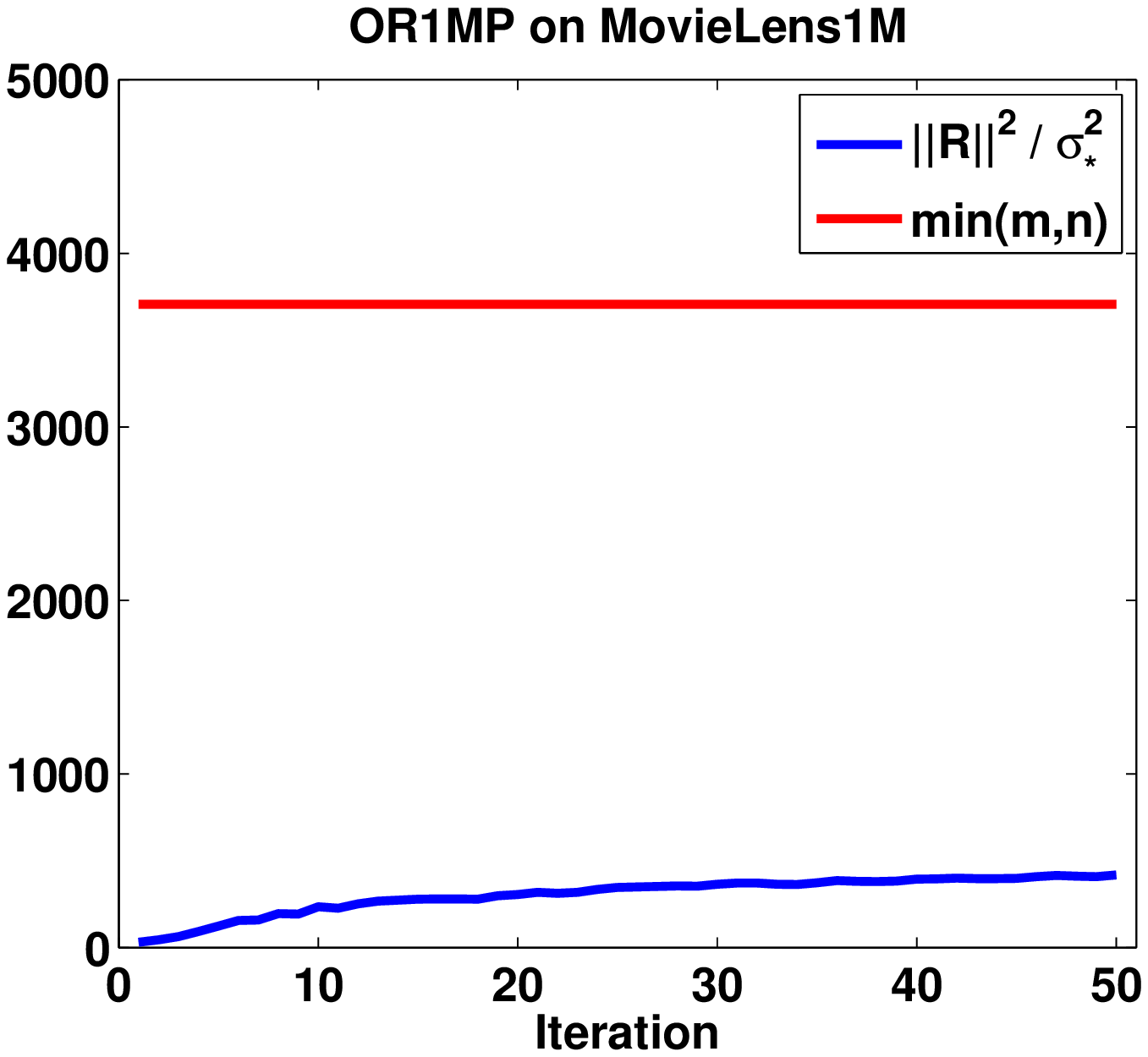} \\ [-2pt]
  \includegraphics[width=0.4\textwidth]{./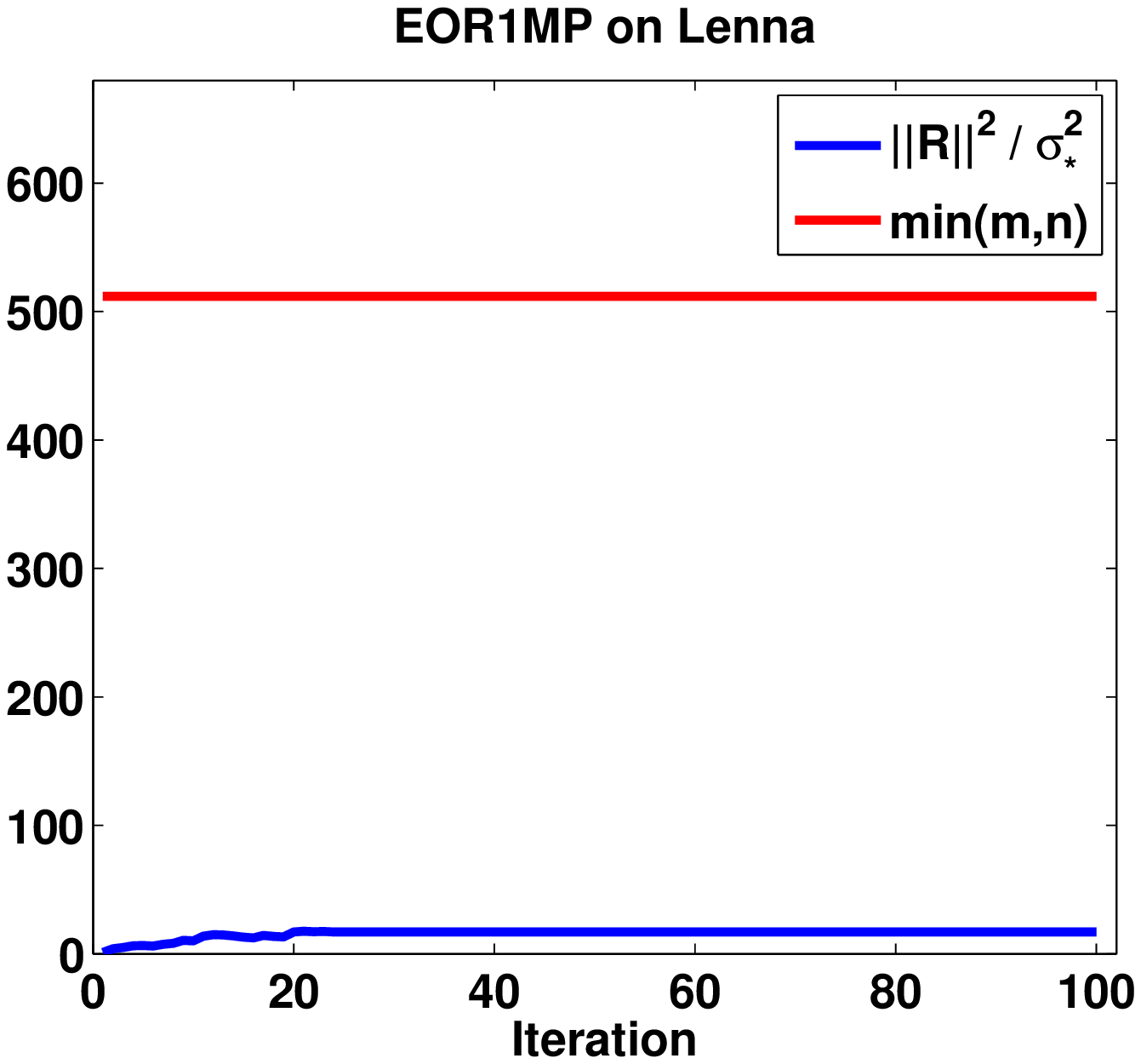} &
  \includegraphics[width=0.4\textwidth]{./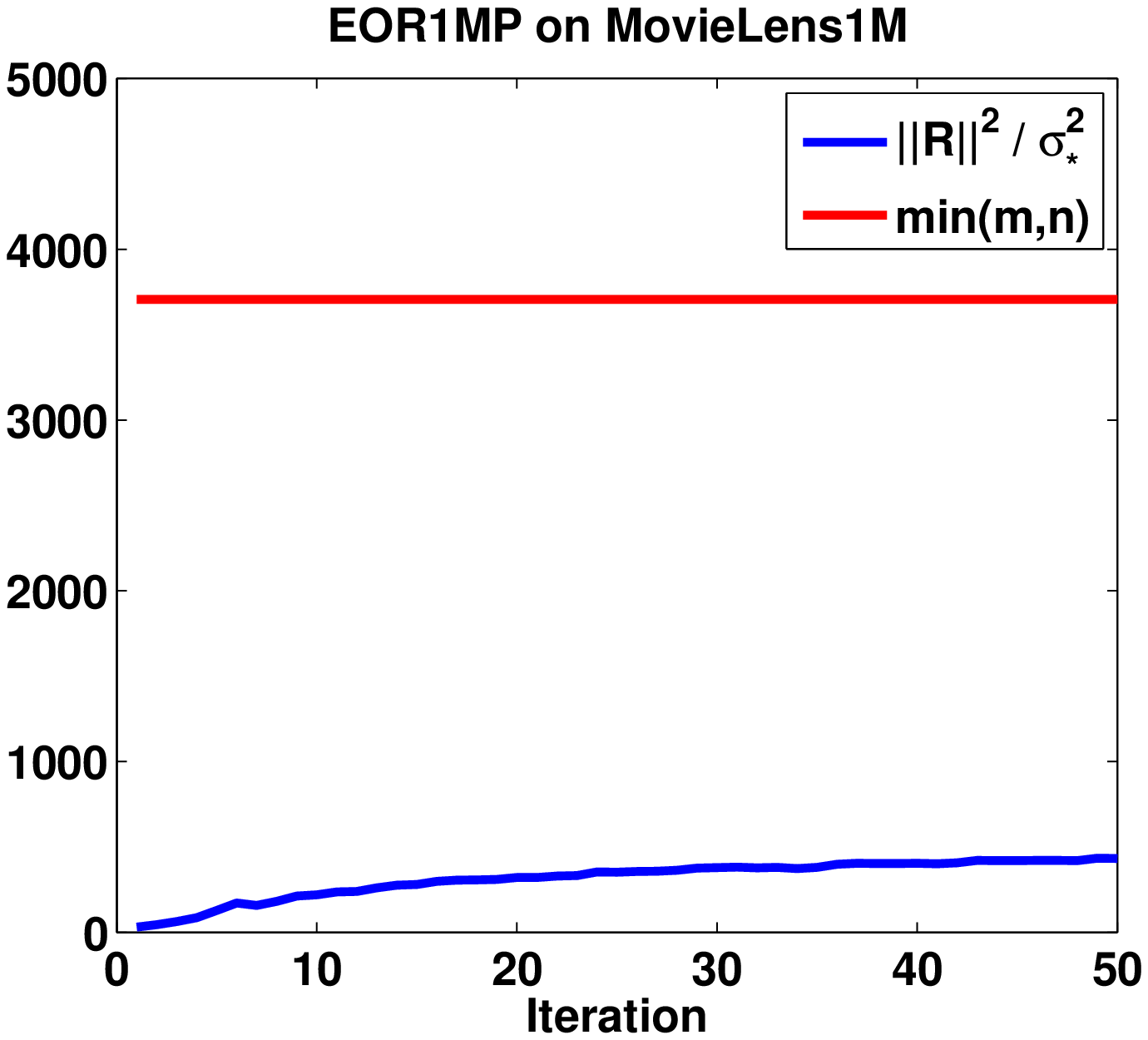}
\end{tabular}
\vskip -0.1in
  \caption{Illustration of the values of ${\|{\bf R}\|^2 / {\sigma^2_{*}} }$ at different iterations and the value of $\min(m,n)$ on the Lenna image and MovieLens1M for both R1MP and ER1MP algorithms: the x-axis is the iteration number; the y-axis is the value.} 
  \label{fig:grank}
\vskip -0.1in
\end{figure}

In the following experiments, we plot the residual curves over iterations for different rank-one matrix pursuit algorithms, including our OR1MP algorithm, our EOR1MP algorithm and the forward rank-one matrix pursuit algorithm (FR1MP). The evaluations are conducted on the Lenna image and the MovieLens1M dataset, which are given in Figure~\ref{fig:R1Comp}. The results show that among the three algorithms, EOR1MP and OR1MP perform better than the forward pursuit algorithm. It is interesting to note that EOR1MP achieves similar performance as OR1MP, while it demands much less computational cost.

\begin{figure}[t!]
\begin{tabular}{@{\hspace{20pt}}c@{\hspace{20pt}}c@{}}
\centering
  \includegraphics[width=0.43\textwidth]{./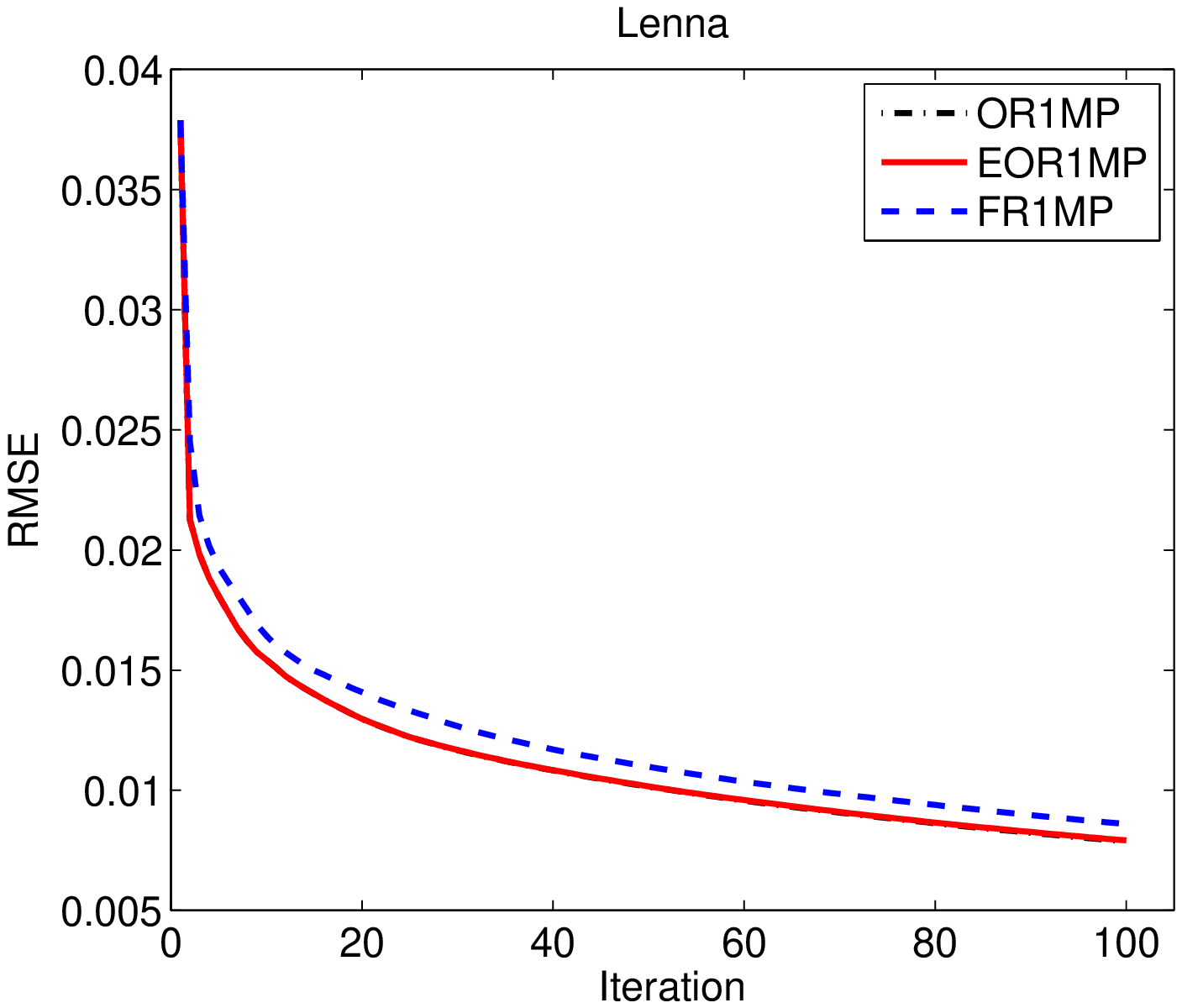} &
  \includegraphics[width=0.43\textwidth]{./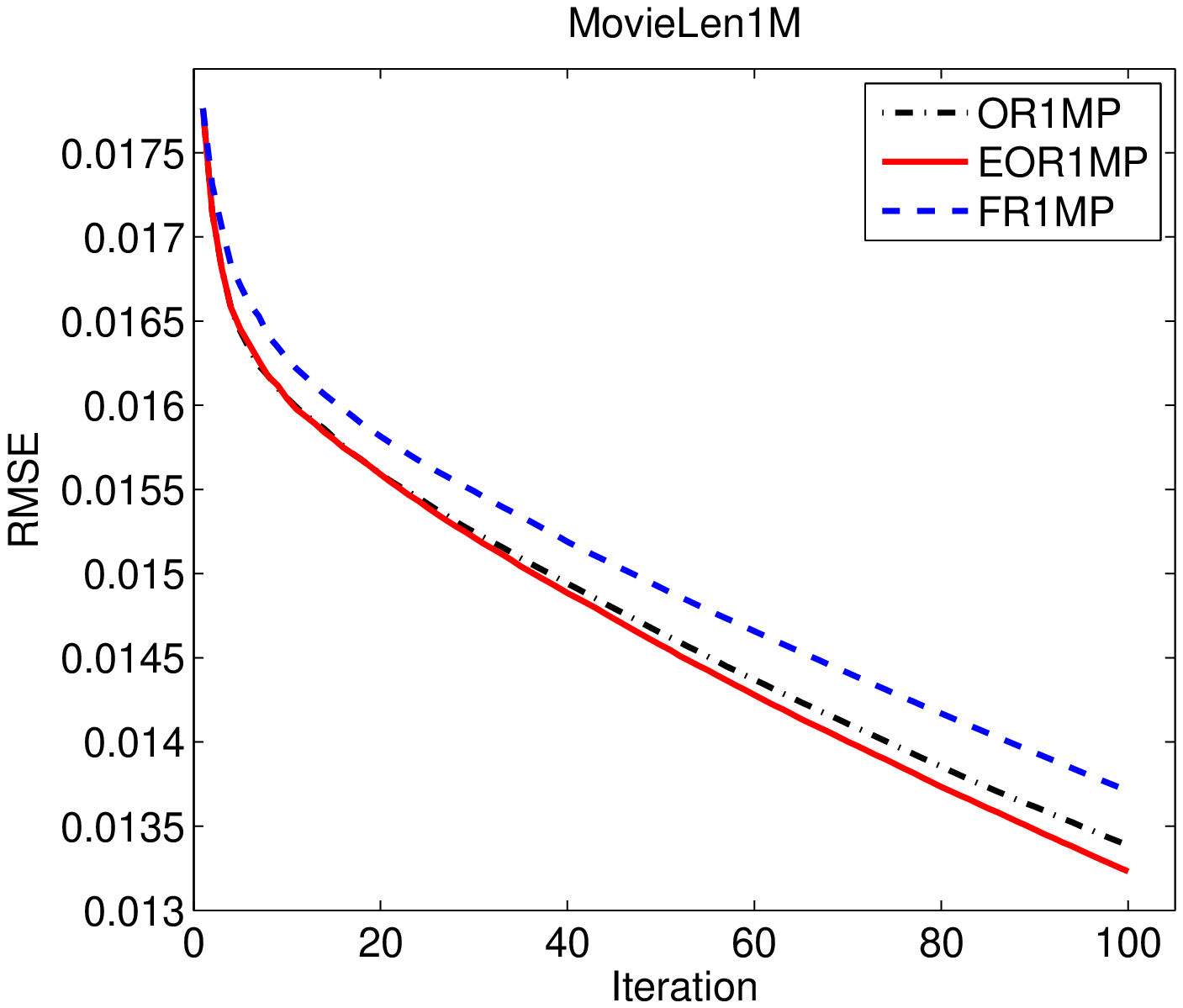}
\end{tabular}
 \caption{Illustration of convergence speed of different rank-one matrix pursuit algorithms on the Lenna image and the MovieLen1M dataset: the x-axis is the iteration, the y-axis is the RMSE.}
  \label{fig:R1Comp}
\end{figure}

\subsection{Inexact Top Singular Vectors}

We empirically analyze the performance of our algorithms with inexact singular vector computation. In the experiments, we control the total number of iterations in the power method for computing the top singular vector pairs. The numbers of iterations are set as $\{1,2,5,10,20\}$. And we plot the learning curves for OR1MP and EOR1MP algorithms on the MovieLen1M dataset in Figure~\ref{fig:stable}. The results show that the linear convergence speed is preserved for different iteration numbers. However, the results under the same outer iterations depend on the accuracy of the power methods. This verifies our theoretical results. Our empirical results also suggest that in practice we need to run more than 5 iterations in the power method, as the learning curves for 5, 10 and 20 power method iterations are close to each other but are far away from the other two curves, especially for EOR1MP algorithm.

\begin{figure}[thp!]
\begin{tabular}{@{\hspace{12pt}}c@{\hspace{10pt}}c@{}}
\centering
  \includegraphics[width=0.45\textwidth]{./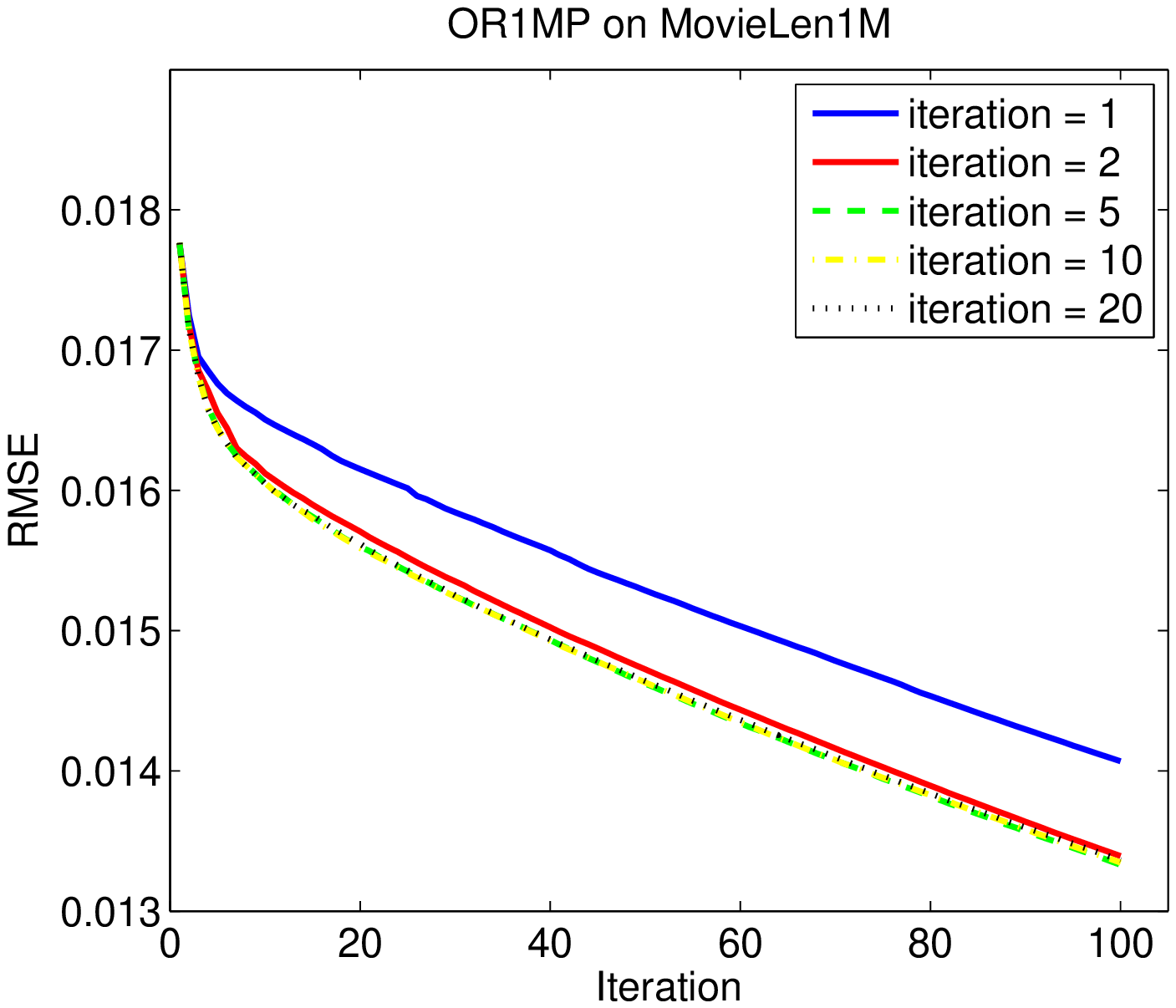} &
  \includegraphics[width=0.45\textwidth]{./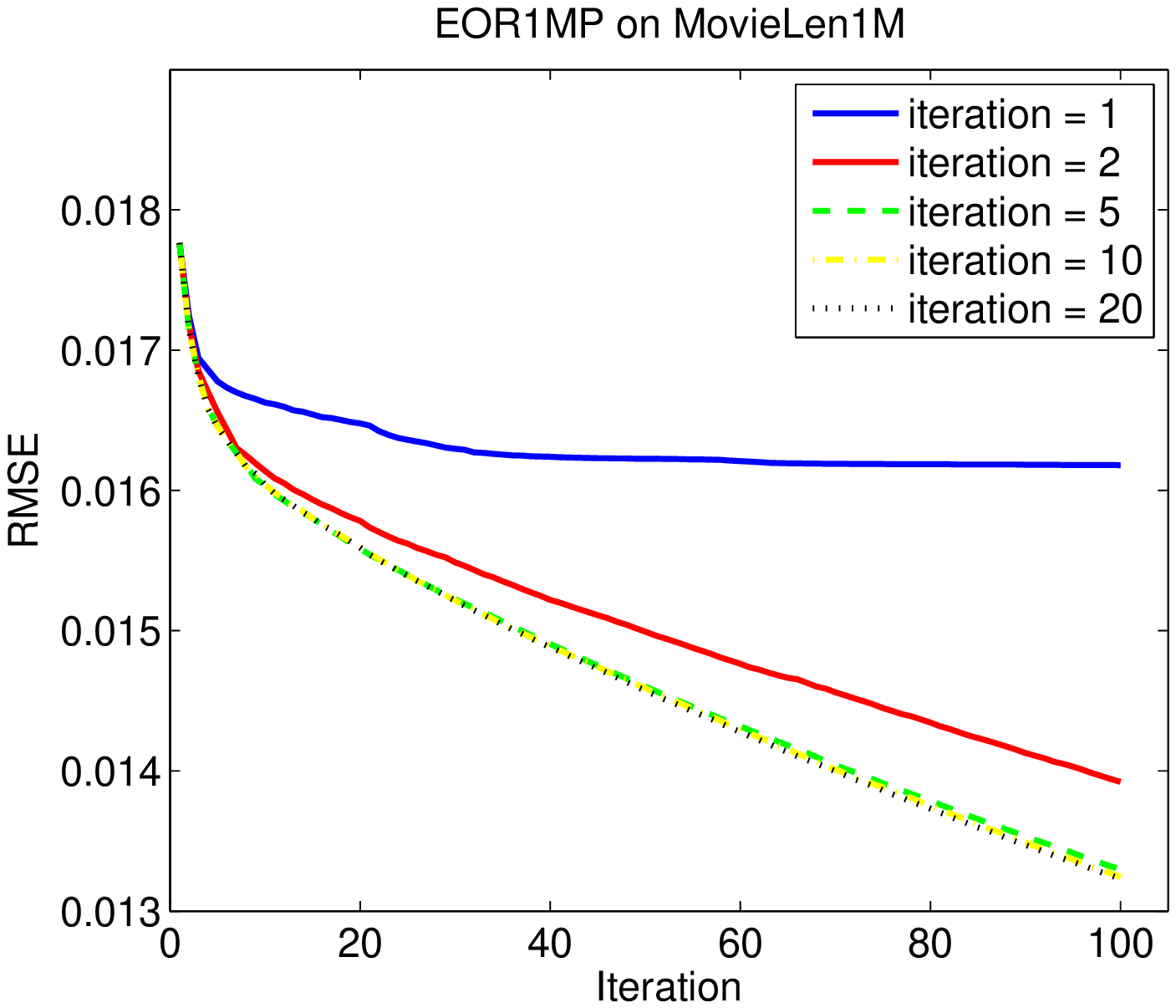} 
\end{tabular}
  \caption{Illustration of convergence property of the proposed algorithms with different iteration numbers in the power method on the MovieLen1M dataset: the x-axis is the outer iteration number; the y-axis is the RMSE.}
  \label{fig:stable}
\end{figure}

\subsection{Image Recovery}

\begin{table*}[hpt!]
\caption{Image recovery results measured in terms of the peak signal-to-noise ratio (PSNR).} \label{table:ir}
\begin{small}
\hspace{-2em}
\begin{tabular}{ |l|c|c|c|c|c|c|c|c| }
\hline 
\textbf{Data Set} & SVT & SVP & SoftImpute & LMaFit  & ADMiRA & JS & OR1MP & EOR1MP \\
\hline \hline
Barbara & {\bf 26.9635} & 25.2598 & 25.6073 & 25.9589 & 23.3528& 23.5322 & 26.5314 & 26.4413 \\
\hline
Cameraman & 25.6273 & 25.9444 & 26.7183 & 24.8956 & 26.7645 & 24.6238 & {\bf 27.8565} & 27.8283 \\
\hline
Clown  & {\bf 28.5644} & 19.0919 & 26.9788 & 27.2748 & 25.7019& 25.2690 & 28.1963& 28.2052 \\
\hline
Couple & 23.1765 & 23.7974 & 26.1033 & 25.8252 & 25.6260 & 24.4100 & {\bf 27.0707} & 27.0310 \\
\hline
Crowd  & {\bf 26.9644} & 22.2959 & 25.4135 & 26.0662 & 24.0555 & 18.6562 & 26.0535 & 26.0510 \\
\hline
Girl  & 29.4688 & 27.5461 & 27.7180 & 27.4164 & 27.3640  & 26.1557 & {\bf 30.0878} & 30.0565 \\
\hline
Goldhill & 28.3097 & 16.1256 & 27.1516 & 22.4485 & 26.5647 & 25.9706 & {\bf 28.5646}& 28.5101 \\
\hline
Lenna & {\bf 28.1832} & 25.4586 & 26.7022 & 23.2003 & 26.2371 & 24.5056 & 28.0115 & 27.9643 \\
\hline
Man  & {\bf 27.0223} & 25.3246 & 25.7912 & 25.7417 & 24.5223 & 23.3060 & 26.5829 & 26.5049 \\
\hline
Peppers  &  25.7202 & 26.0223 & 26.8475 & 27.3663 &25.8934 & 24.0979 & {\bf 28.0781} & 28.0723 \\
\hline
\end{tabular}
\end{small}
\end{table*}

\begin{figure*}[thp!]
\begin{tabular}{@{\hspace{-10pt}}c@{\hspace{-16pt}}c@{\hspace{-16pt}}c@{}}
  \includegraphics[width=0.385\textwidth]{./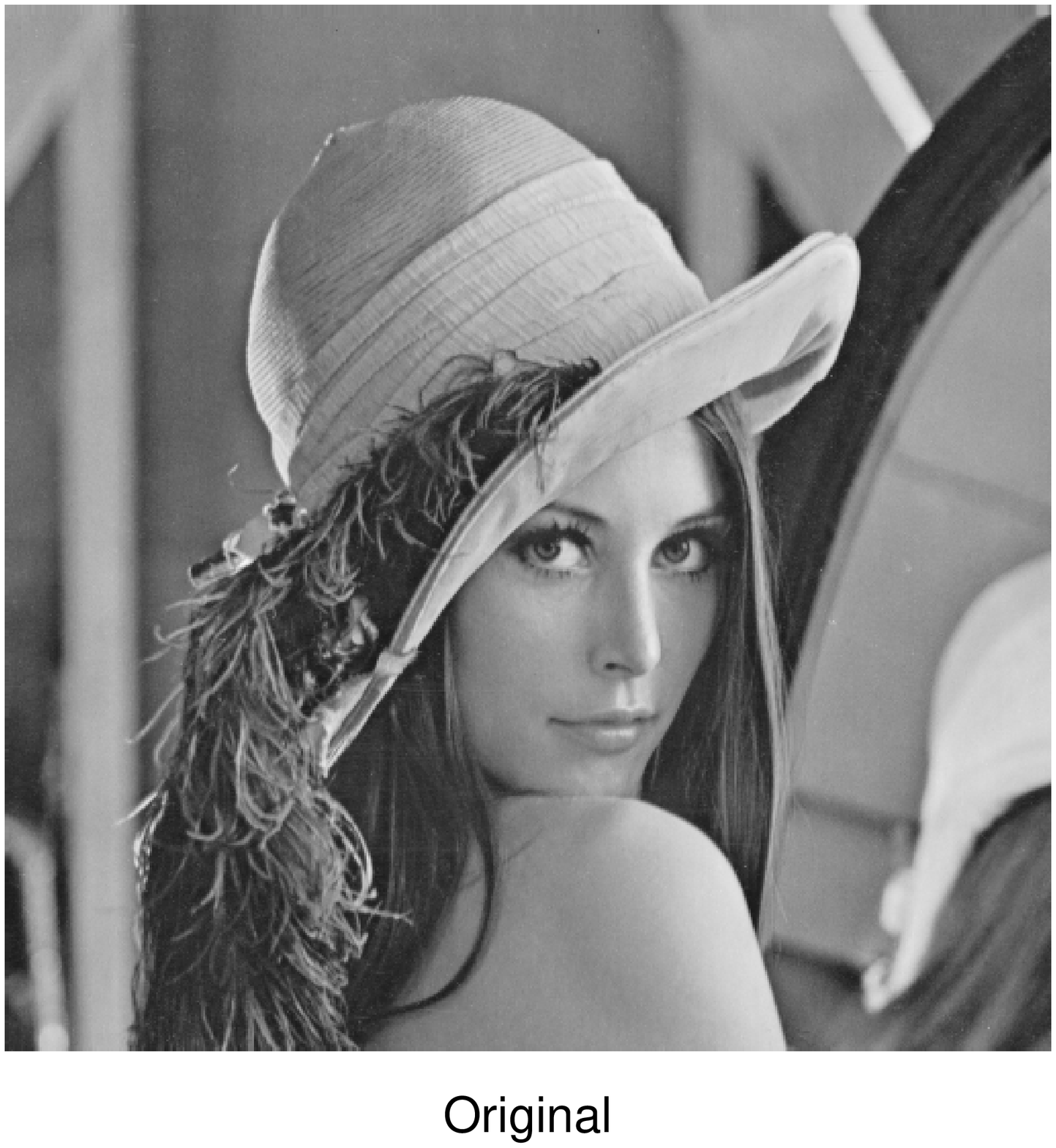}&
  \includegraphics[width=0.385\textwidth]{./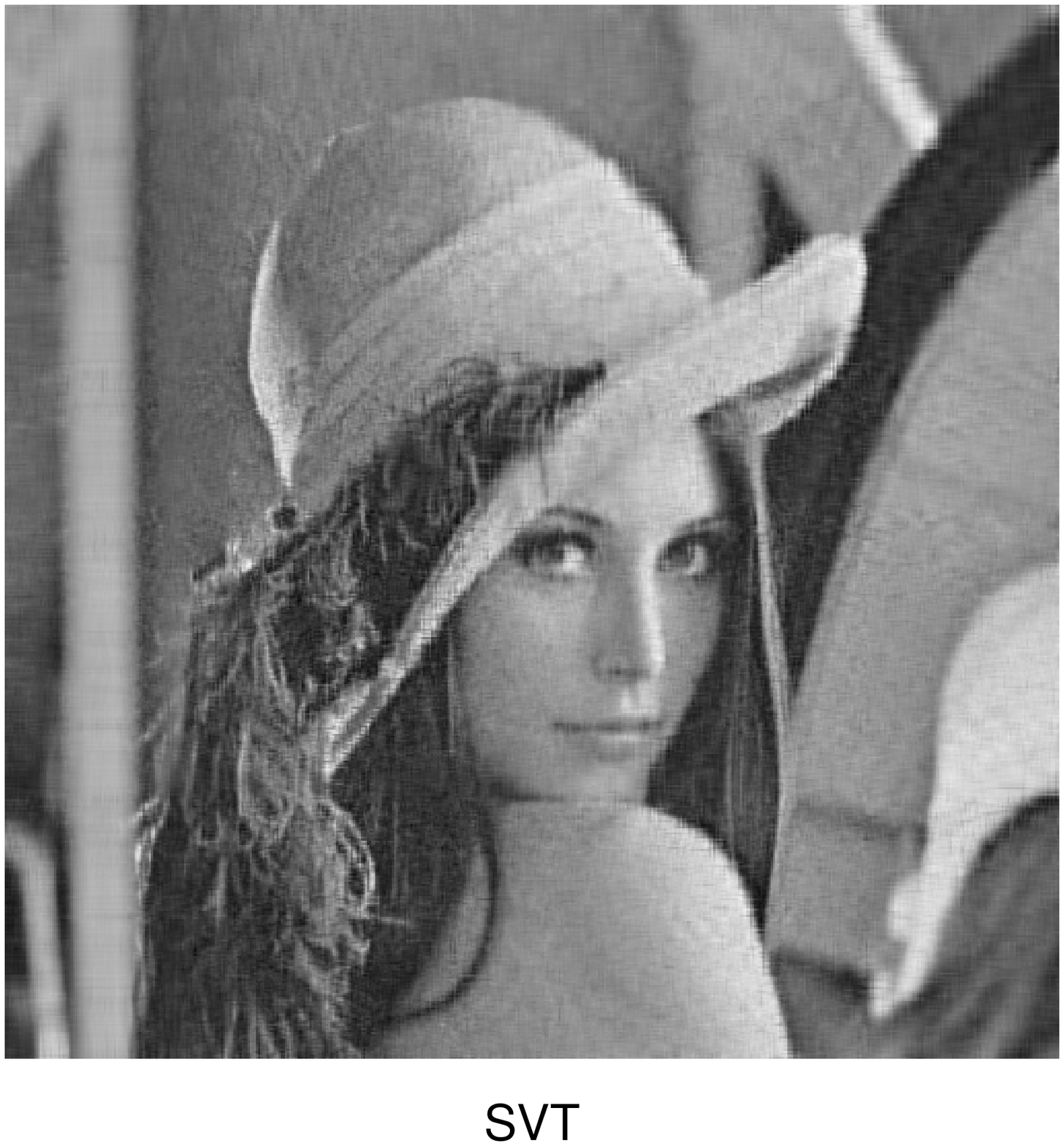}&
  \includegraphics[width=0.385\textwidth]{./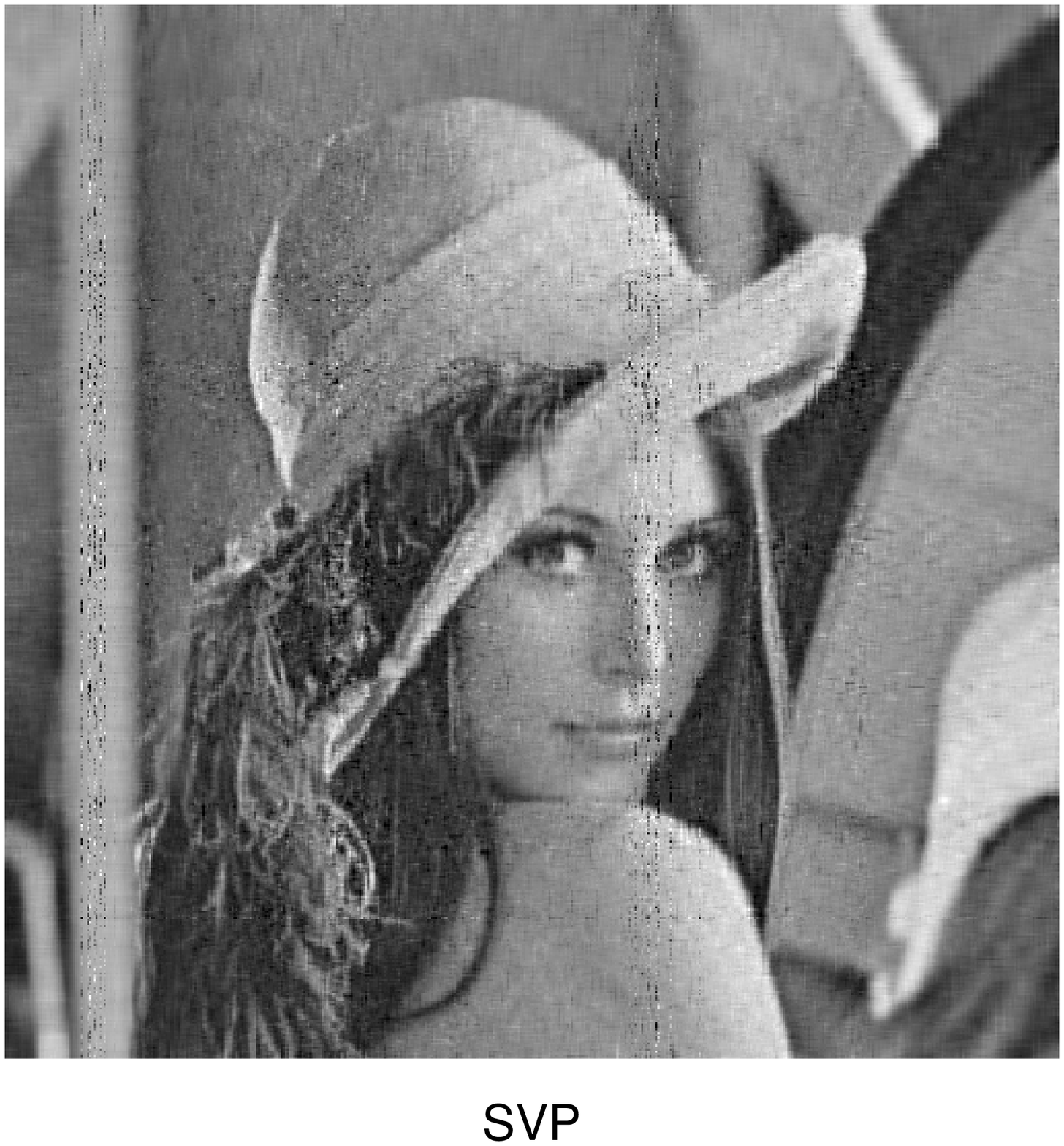}\\
  \includegraphics[width=0.385\textwidth]{./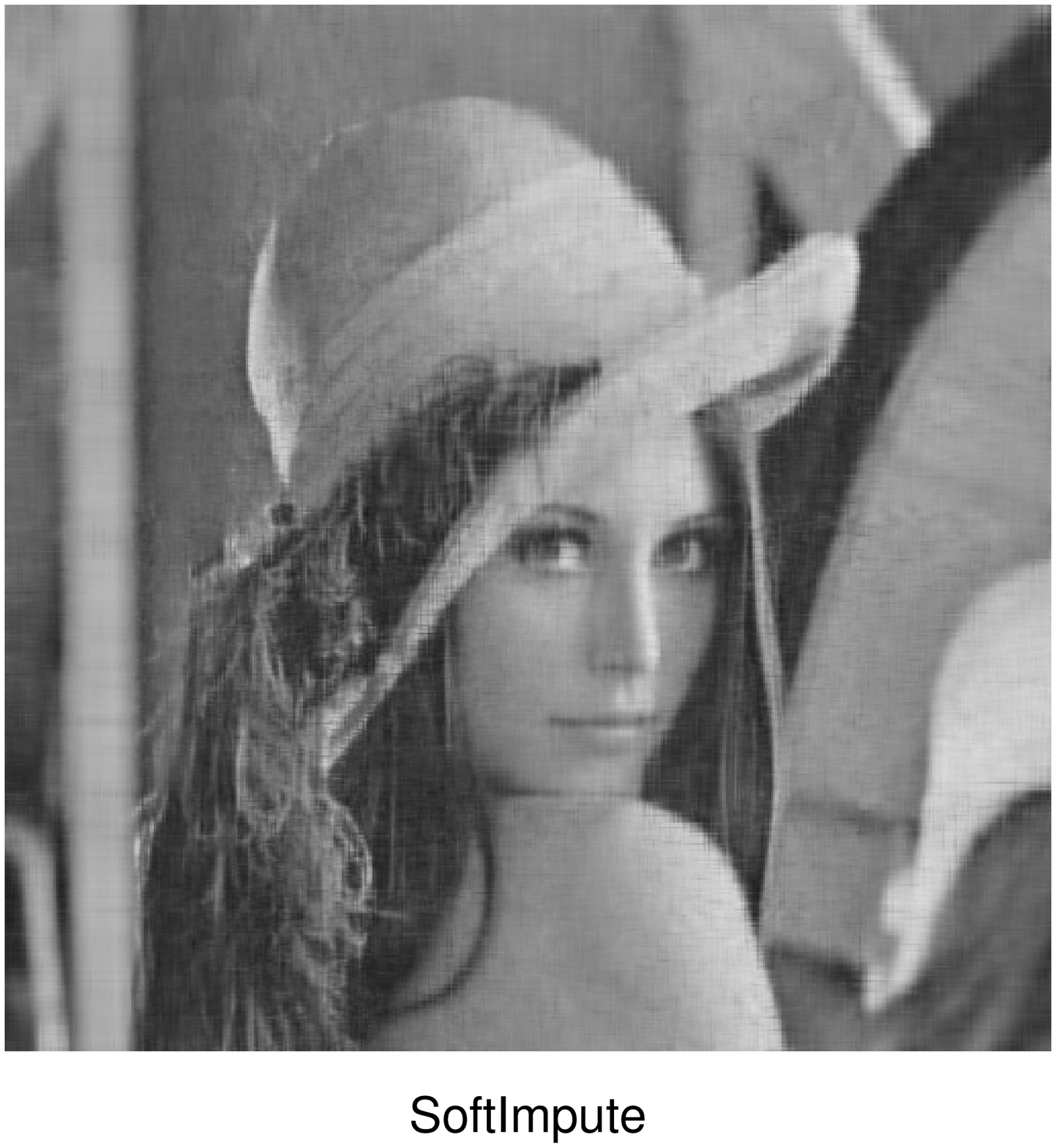}&
  \includegraphics[width=0.385\textwidth]{./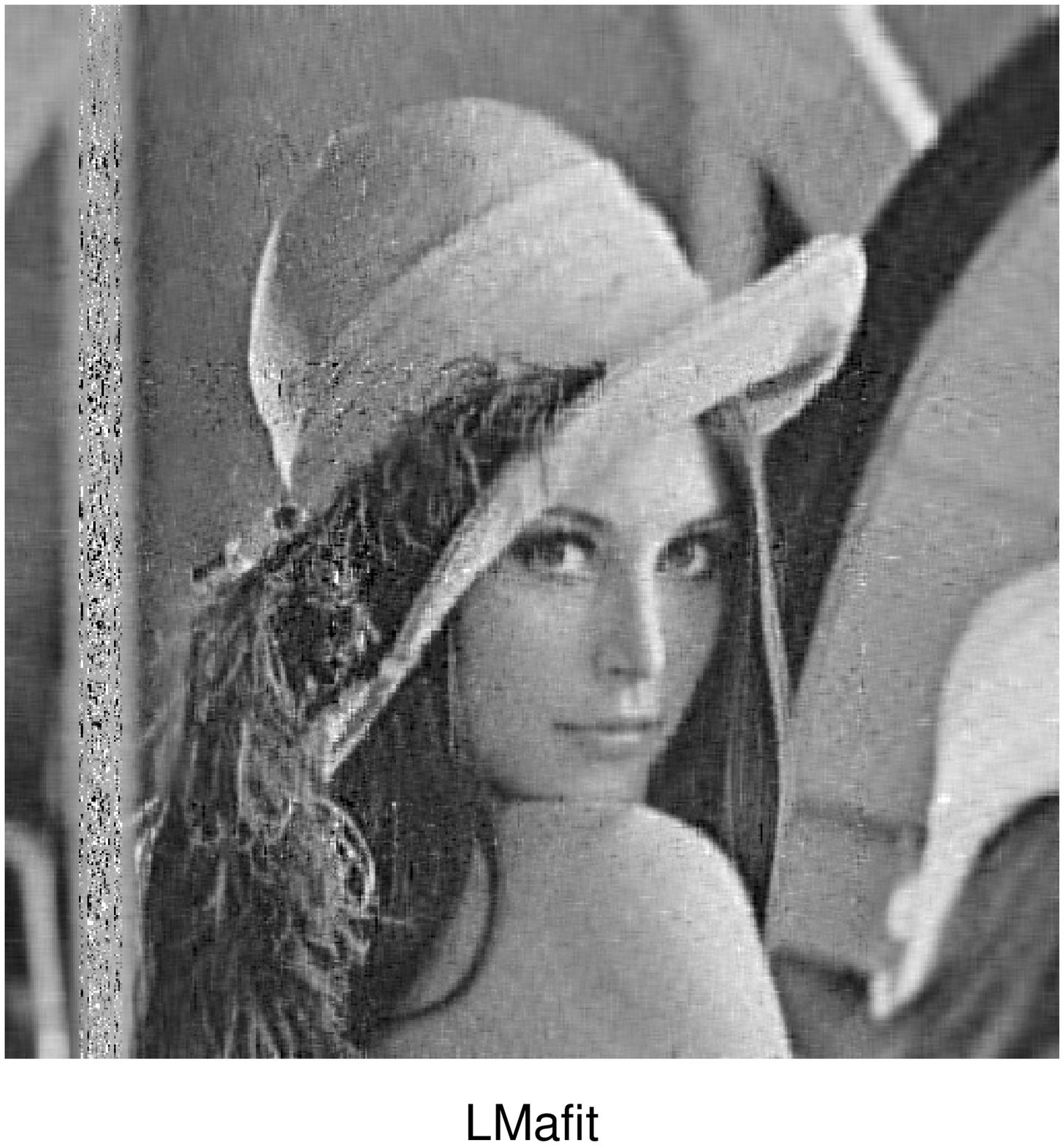} &
  \includegraphics[width=0.385\textwidth]{./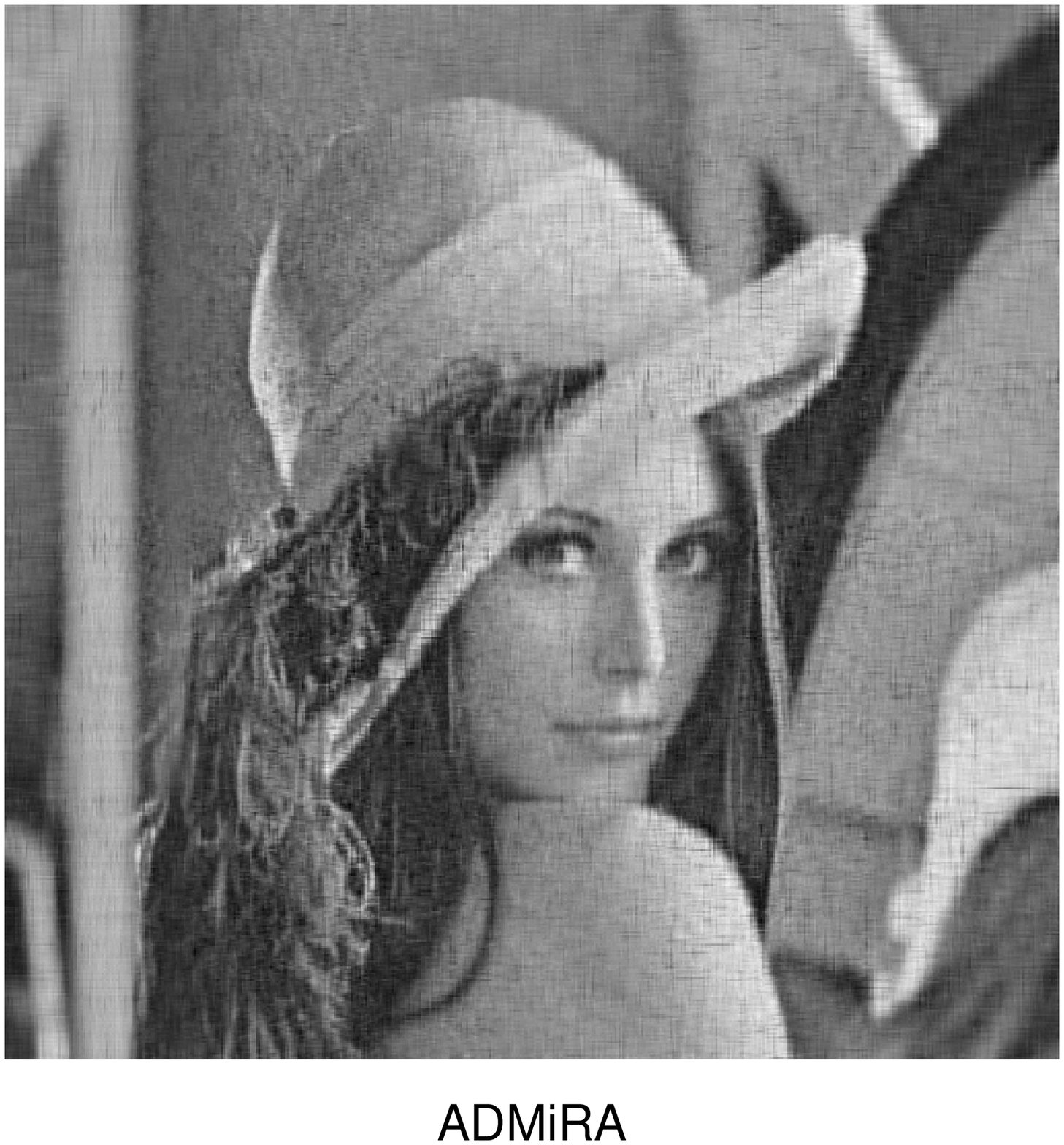}\\
  \includegraphics[width=0.385\textwidth]{./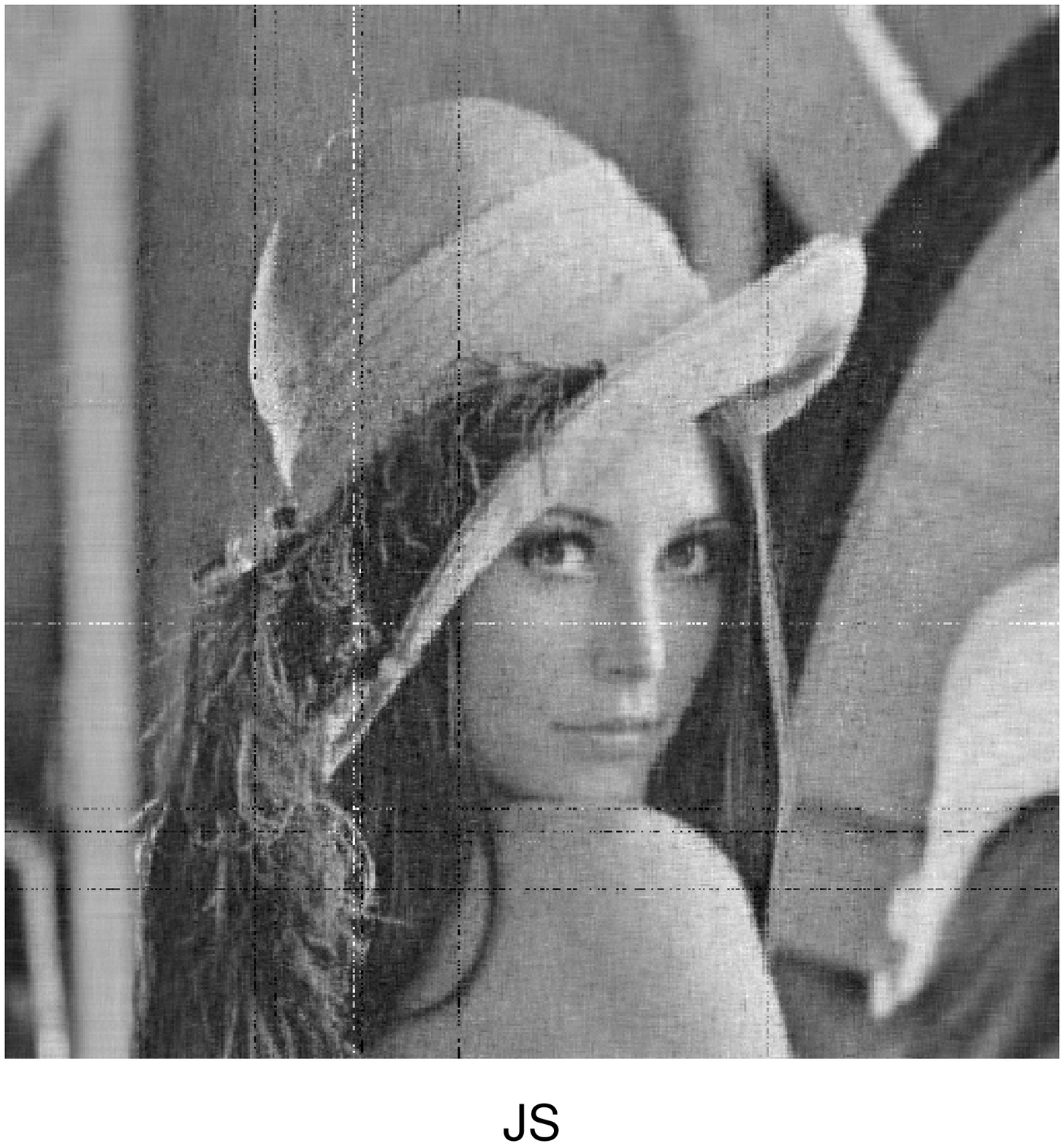}&
  \includegraphics[width=0.385\textwidth]{./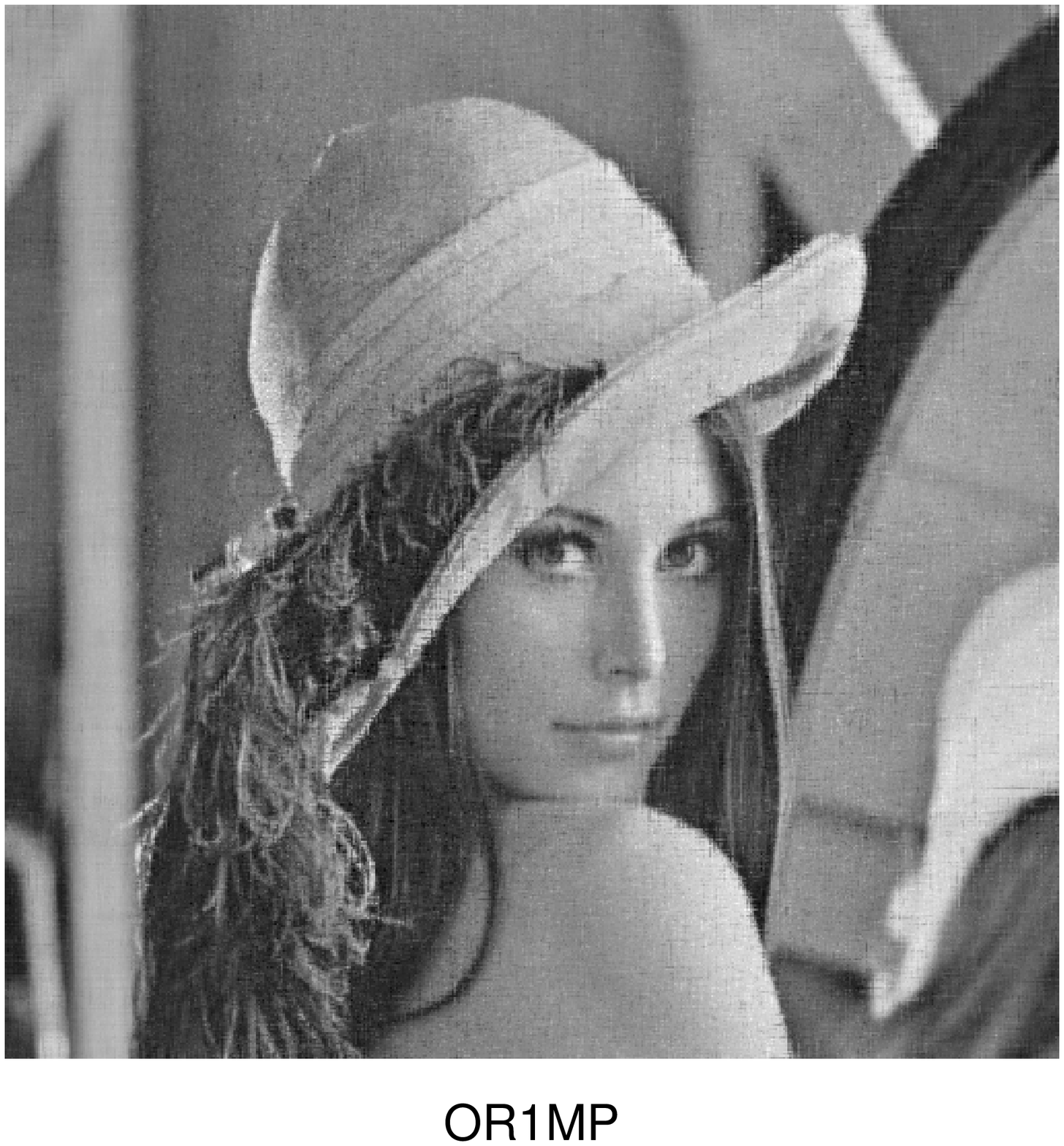}&
  \includegraphics[width=0.385\textwidth]{./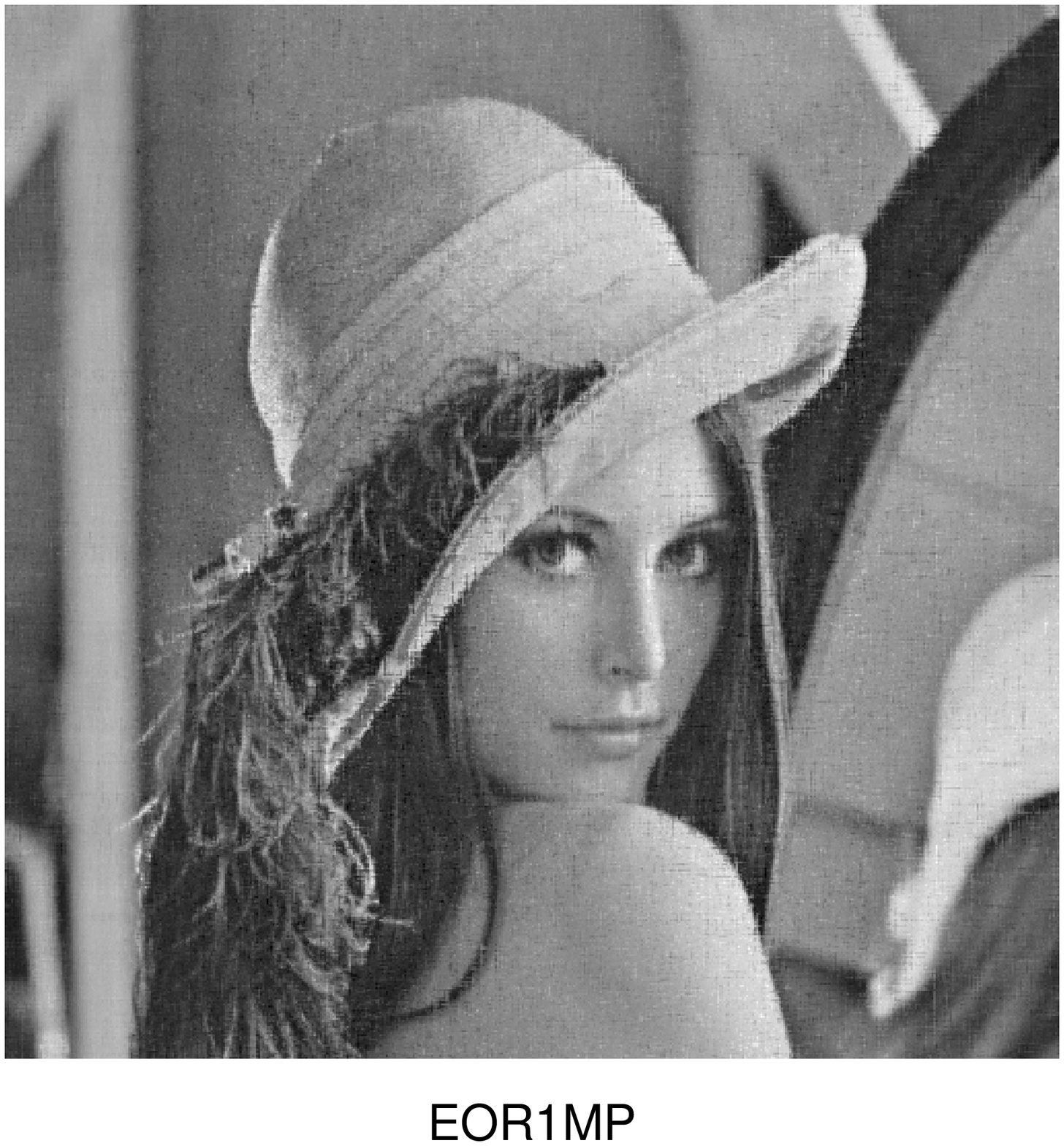}\\
\end{tabular}
\caption{The original image and images recovered by different methods on the Lenna image. }
  \label{fig:ir}
\end{figure*}

In the image recovery experiments, we use the following benchmark test images: Barbara, Cameraman, Clown, Couple, Crowd, Girl, Goldhill, Lenna, Man, Peppers\footnote{Images are downloaded from {\url{ http://www.utdallas.edu/~cxc123730/mh_bcs_spl.html}}}. The size of each image is $512 \times 512$. We 
randomly exclude $50\%$ of the pixels in the image, and the remaining ones are used as the observations. As the image matrix is not guaranteed to be low rank, we use the rank 50 for the estimation matrix for each experiment. In our OR1MP and EOR1MP algorithms, we stop the algorithms after 150 iterations. The JS algorithm does not explicitly control the rank, thus we fix its number of iterations to 2000. The numerical results in terms of the PSNR are listed in Table~\ref{table:ir}. We also present the images recovered by different algorithms for Lenna in Figure~\ref{fig:ir}. The results show SVT, our OR1MP and EOR1MP achieve the best numerical performance. However, our algorithm is much better than SVT for Cameraman, Couple, Peppers, but only slightly worse than SVT for Lenna, Barbara and Clown. Besides, our algorithm is much faster and more stable than SVT (SVT easily diverges). For each image, EOR1MP uses around 3.5 seconds, but SVT consumes around 400 seconds. Image recovery needs a relatively higher approximation rank; both GECO and Boost fail to find a good recovery in most cases, so we do not include them in the result tables.

\subsection{Recommendation}

\begin{table}[hpt!]
\caption{Characteristics of the recommendation datasets.} \label{table:rec}
\begin{center}
\begin{small}
\begin{tabular}{ |l|c|c|c| }
\hline
\textbf{Data Set} & \# row & \# column & \# rating \\
\hline \hline
Jester1 &  24983 & 100 & $ 10^6$  \\
\hline
Jester2 &  23500 & 100 & $10^6$  \\
\hline
Jester3 &  24983 & 100 & $6$$\times$$10^5$  \\
\hline
MovieLens100k &  943 & 1682 & $10^5$  \\
\hline
MovieLens1M  & 6040 & 3706 & $10^6$ \\
\hline
MovieLens10M & 69878 & 10677 & $10^7$ \\
\hline
\end{tabular}
\end{small}
\end{center}
\end{table}

\begin{table*}[htp!]
\caption{The running time (measured in seconds). Boost fails on MovieLens10M.} \label{table:mltime}
\begin{small}
\hspace{-0.5in}
\begin{tabular}{ |l|c|c|c|c|c|c|c|c| }
\hline
\textbf{Data Set} & SVP & SoftImpute & LMaFit & Boost & JS & GECO & OR1MP & EOR1MP \\
\hline \hline
Jester1 & 18.3495 & 161.4941 & 3.6756 & 93.9142 & 29.6751 &  $>10^4$ & 1.8317 & 0.9924 \\ 
\hline
Jester2 &  16.8519 & 152.9600 & 2.4237 & 261.7005 & 28.5228 &  $>10^4$ & 1.6769 & 0.9082 \\
\hline
Jester3 & 16.5801 & 1.5450 & 8.4513 & 245.7895 & 12.9441 &  $>10^3$ & 0.9264 & 0.3415 \\ 
\hline
MovieLens100K & 1.3237 & 128.0658 & 2.7613 & 2.8669 & 2.8583 & 10.8300 & 0.0418 &  0.0358 \\
\hline
MovieLens1M &  18.9020 & 59.5600 & 30.5475 & 93.9142 & 13.0972 &  $>10^4$ & 0.8714 & 0.5397 \\
\hline
MovieLens10M &  $>10^3$ & $>10^3$ &  154.3760 & -- & 130.1343 &  $>10^5$  & 23.0513 &  13.7935 \\ 
\hline 
\end{tabular}
\end{small}
\vskip 0.2in
\caption{Recommendation results measured in terms of the RMSE.} \label{table:ml50}
\begin{small}
\hspace{-0.4in}
\begin{tabular}{ |l|c|c|c|c|c|c|c|c| }
\hline 
\textbf{Data Set} & SVP & SoftImpute & LMaFit & Boost & JS & GECO & OR1MP & EOR1MP \\
\hline \hline
Jester1 &  4.7311 & 5.1113 & 4.7623 & 5.1746 &   4.4713  & 4.3680 & 4.3418 &  4.3384 \\
\hline
Jester2 & 4.7608 & 5.1646 & 4.7500 & 5.2319 & 4.5102  & 4.3967 & 4.3649 & 4.3546 \\
\hline
Jester3 & 8.6958 & 5.4348 & 9.4275 & 5.3982 & 4.6866  & 5.1790 & 4.9783 & 5.0145 \\
\hline
MovieLens100K & 0.9683 & 1.0354 & 1.0238 & 1.1244 & 1.0146 & 1.0243 & 1.0168 & 1.0261\\
\hline
MovieLens1M   & 0.9085 & 0.8989 & 0.9232 & 1.0850 & 1.0439 & 0.9290  & 0.9595 & 0.9462  \\
\hline
MovieLens10M  & 0.8611 & 0.8534 & 0.8971  & -- & 0.8728 & 0.8668 & 0.8621 & 0.8692 \\ 
\hline
\end{tabular}
\end{small}
\end{table*}

In the following experiments, we compare different matrix completion algorithms using large recommendation
datasets: Jester \cite{Goldberg01} and MovieLens \cite{Miller03}. We use six datasets including: Jester1, Jester2, Jester3, MovieLens100K, MovieLens1M, and MovieLens10M. The statistics of these datasets are given in Table~\ref{table:rec}. The Jester datasets were collected from a joke recommendation system. They contain anonymous ratings of 100 jokes from the users. The ratings are real values ranging from $-10.00$ to $+10.00$. The MovieLens datasets were collected from the MovieLens website\footnote{{\url{http://movielens.umn.edu}}}. They contain anonymous ratings of the movies on this web made by its users. For MovieLens100K and MovieLens1M, there are 5 rating scores (1--5), and for MovieLens10M there are 10 levels of scores with a step size 0.5 in the range of 0.5 to 5. In the following experiments, we randomly split the ratings into training and test sets. Each set contains $50\%$ of the ratings. We compare the running time and the prediction result from different methods. In the experiments, we use 100 iterations for the JS algorithm, and for other algorithms we use the same rank for the estimated matrices; the values of the rank are $\{10, 10, 5, 10, 10, 20\}$ for the six corresponding datasets. We first show the running time of different methods in Table~\ref{table:mltime}. The reconstruction results in terms of the RMSE are given in Table~\ref{table:ml50}. We can observe from the above experiments that our EOR1MP algorithm is the fastest among all competing methods to obtain satisfactory results.

\section{Conclusion}

In this paper, we propose an efficient and scalable low rank matrix completion algorithm. The key idea is to extend orthogonal matching pursuit method from the vector case to the matrix case. We also propose a novel weight updating rule under this framework to reduce the storage complexity and make it independent of the approximation rank. Our algorithms are computationally inexpensive for each matrix pursuit iteration, and find satisfactory results in a few iterations. Another advantage of our proposed algorithms is they have only one tunable parameter, which is the rank. It is easy to understand and to use by the user. This becomes especially important in large-scale learning problems. In addition, we rigorously show that both algorithms achieve a linear convergence rate, which is significantly better than the previous known results (a sub-linear convergence rate). We also empirically compare the proposed algorithms with state-of-the-art matrix completion algorithms, and our results show that the proposed algorithms are more efficient than competing algorithms while achieving similar or better prediction performance. We plan to generalize our theoretical and empirical analysis to other loss functions in the future.

\appendix
\section{Inverse Matrix Update}
In our OR1MP algorithm, we use the least squares solution to update the weights for the rank-one matrix bases. In this step, we need to calculate $({\bf {\bM}_k}{\bf {\bM}_k} )^{-1}$. To directly compute this inverse is computationally expensive, as the matrix ${\bf {\bM}_k}$ has large row size. We implement this efficiently using an incremental method. As
\[
  {\bf {\bM}_k}^T{\bf {\bM}_k}  = [{\bf {\bM}_{k-1}}, \dot{\bf m}_k]^T[{\bf {\bM}_{k-1}}, \dot{\bf m}_k],
\]
its inverse can be written in block matrix form
\begin{equation}
({\bf {\bM}_k}^T{\bf {\bM}_k} )^{-1} = \begin{bmatrix} {\bf {\bM}_{k-1}}^T{\bf {\bM}_{k-1}}  & {\bf {\bM}_{k-1}}^T \dot{\bf m}_k \\  \dot{\bf m}^T_k{\bf {\bM}_{k-1}}^T & \dot{\bf m}^T_k\dot{\bf m}_k \end{bmatrix}^{-1}.\\
\nonumber
\end{equation}

Then it is calculated by blockwise inversion as
\begin{equation}
  \begin{array}{lc}
\begin{bmatrix} {\bf A}+  d{\bf A} {\bf b} {\bf b}^T{\bf A}
& - d{\bf A} {\bf b} \\
- d {\bf b}^T{\bf A}
& {d}
\end{bmatrix}
  \end{array}\nonumber
\end{equation}
where ${\bf A} = ({\bf {\bM}_{k-1}}^T{\bf {\bM}_{k-1}})^{-1}$ is the corresponding inverse matrix in the last step, ${\bf b} =  {\bf {\bM}_{k-1}}^T 
\dot{\bf m}_k$ is a vector with $|\Omega|$ elements, and $d = (  {\bf b}^T {\bf b} - {\bf b}^T{\bf A}{\bf b} )^{-1} =  1/({\bf b}^T {\bf b} - {\bf 
b}^T{\bf A}{\bf b})$ is a scalar.

${\bf {\bM}_k}^{T} {\dot{\bf y}}$ is also calculated incrementally by $[{\bf {\bM}_{k-1}}^{T} {\dot{\bf y}}, \dot{\bf m}^T_k{\dot{\bf y}}  ]$, as 
$\dot{\bf y}$ is fixed.

{
\bibliographystyle{siamref}
\bibliography{OR1MP} 
}

\end{document}